\newcommand{\E}{\mathcal E}
\newcommand{\B}{\mathcal B}
\newcommand{\bE}{\mathbb E}
\newtheorem{thm}{Theorem}
\newtheorem{lem}[thm]{Lemma}
\newtheorem{prop}[thm]{Proposition}
\newtheorem{remark}{Remark}
\newtheorem{assumption}{Assumption}
\title{Capacity dependent analysis for functional online learning algorithms}
\author[1]{Xin Guo}
\author[2]{Zheng-Chu Guo}
\author[3]{Lei Shi}
\affil[1]{School of Mathematics and Physics, The University of
	Queensland, Brisbane, QLD 4072, Australia. xin.guo@uq.edu.au}
\affil[2]{School of Mathematical Sciences, Zhejiang University, Hangzhou
	310027, P.\ R.\ China. guozhengchu@zju.edu.cn}
\affil[3]{School of Mathematical Sciences and Shanghai Key Laboratory for
Contemporary Applied Mathematics, Fudan University, Shanghai 200433,
P.\ R.\ China. leishi@fudan.edu.cn}
\numberwithin{equation}{section}
\begin{document}
\date{}
\maketitle
\begin{abstract}
This article provides convergence analysis of online stochastic gradient
descent algorithms for functional linear models.
Adopting the characterizations of the slope function regularity, the kernel
space capacity, and the capacity of the sampling process covariance operator,
significant improvement on the convergence rates is achieved.
Both prediction problems and estimation problems are studied,
where we show that capacity assumption can alleviate the saturation of the
convergence rate as the regularity of the target function increases.
We show that with properly selected kernel, capacity assumptions
can fully compensate for the regularity assumptions for prediction problems
(but not for estimation problems).
This demonstrates the significant difference between the prediction problems and the
estimation problems in functional data analysis.
\end{abstract}

{\bf Key words and phrases:} Functional data analysis, Stochastic gradient decent,
Reproducing kernel Hilbert space, Capacity dependent analysis


\section{Introduction}\label{section: introduction}

In this paper, we consider a functional linear model
\begin{gather}
Y = \int_{\mathcal{T}} \beta^*(u)X(u) du + \varepsilon.
\label{model0002LJmHH}
\end{gather}
Here, $\mathcal{T}$ is a compact subset in a Euclidean space $\mathbb{R}^d$, $X$ is a random
function, $\beta^*$ is an unknown slope function, $\varepsilon$ is a centered
random noise with finite variance $\sigma^2=\mathrm{Var}(\varepsilon)<\infty$,
and $Y\in\mathbb{R}$ is the response. We write $(L^2(\mathcal{T}), \left<
\cdot,\cdot \right>_2, \left\| \cdot \right\|_2)$ the space of square
integrable functions on $\mathcal{T}$, and assume $X,\beta^*\in
L^2(\mathcal{T})$. Then, Model (\ref{model0002LJmHH}) can be equivalently
written as
$Y = \left< \beta^*, X \right>_2+\varepsilon$. Without loss of generality, we assume that $\mathcal{T}=[0,1]^d$ throughout the paper.

We study two kinds of learning problems for Model
(\ref{model0002LJmHH}). The \textit{estimation problem} asks one to recover the
unknown slope function $\beta^*$, and the \textit{prediction problem} asks one
to recover the linear functional on $L^2(\mathcal{T})$, denoted by $\varphi^*$, which is given by
\begin{gather}
\varphi^*: f\mapsto\left< \beta^*, f \right>_2=\int_{\mathcal{T}} \beta^*(u)
f(u) du. \label{predictDef000EMnr0}
\end{gather}
Mathematically, $\varphi^*$ is defined with $\beta^*$, which in turn is fully
determined by $\varphi^*$ through the Riesz representation theorem.
Nonetheless, it is well understood that the two learning problems are
different. In particular, the integral in (\ref{predictDef000EMnr0}) brings
a smoothing effect, leading to a weaker regularity requirement for the prediction
problems \cite{CaiHall2006-MR2291496, ChenTangFanGuo2022-MR4388513}.

Write $D = \{ (x_t, y_t) \}_{t=1}^T$ a sample of independent copies of $(X,Y)$
in Model (\ref{model0002LJmHH}). We study both the case of a finite sample
$T<\infty$, and the case $T=\infty$ where $D$ models an ongoing indefinite
sampling process.

Both prediction and estimation problems can be solved by constructing an
estimator $\hat{\beta}$ of the slope function $\beta^*$. In the literature,
many works have been done on functional principal component analysis (FPCA)
\cite{ramsay2005fitting, CaiHall2006-MR2291496, HallHorowitz2007-MR2332269}. FPCA defines
$\hat{\beta}$ with a linear combination of the estimated eigenfunctions of $C$,
which is the covariance function of the random function $X$. Another approach
of constructing $\hat{\beta}$ is the kernel method, which adopts a reproducing
kernel $K$ and represents $\hat{\beta}$ by the linear combination of kernel
functions \cite{YuanCai2010-MR2766857, CaiYuan2012-MR3010906}.

We adopt the kernel method and define $\hat{\beta}$ through stochastic gradient
descent approach in this paper. A reproducing kernel $K$ on $\mathcal{T}$ is
defined as a function $K: \mathcal{T}\times \mathcal{T} \to \mathbb{R}$ that is
symmetric (i.e.\ $K(u,v)=K(v,u)$ for any $u,v\in\mathcal{T}$) and positive
semi-definite (which requires that the Gram matrix $(K(u_i,u_j))_{i,j=1}^n$ is
positive semi-definite for any $n\geq 1$ and any
$u_1,\ldots,u_n\in\mathcal{T}$). We further assume that $K$ is continuous,
exclude the trivial case $K\equiv 0$, and
let $(\mathcal{H}_K, \left< \cdot,\cdot \right>_K, \left\| \cdot \right\|_K)$
denote the reproducing kernel Hilbert space (RKHS) associated with $K$
\cite{CuckerZhou2007-MR2354721, SteinwartChristmann2008-MR2450103}. The
stochastic gradient descent algorithm defines a sequence $\{ \hat{\beta}_t \}$
of estimators, from $\hat{\beta}_1=0$ and then iteratively by
\begin{gather}\label{defIteration000rrCDVx}
\hat{\beta}_{t+1} = \hat{\beta}_t - \eta_t\left( \int_{\mathcal{T}}
\hat{\beta}_t(u) x_t(u) du - y_t \right)
\int_\mathcal{T} K(v, \cdot) x_t(v) dv,\quad \mbox{for }t\geq 1.
\end{gather}
Here $\eta_t>0$ is the step-size. Based on the nature of the sample $D$, we
study two settings of the step-sizes $\{ \eta_t \}$.
\begin{itemize}
\item The \textit{online} setting. We write $|D|=\infty$ and use $D$ to model
the outcome of an ongoing and indefinite sampling process. The estimator
$\hat{\beta}$ is being updated following the sampling process. For example, we
update
the estimator to $\hat{\beta}=\hat{\beta}_{t+1}$ after $t$ steps of iterations
and before the observation $(x_{t+1}, y_{t+1})$ is available. For the online
setting, the step-sizes $\{ \eta_t \}$ are designed to decrease, rendering
Algorithm (\ref{defIteration000rrCDVx}) more and more conservative against the
possible random noise brought by new observations.
\item The \textit{finite-horizon} setting. We assume a finite sample $D$ with
size $|D|=T<\infty$. A constant step-size $\eta_t\equiv \eta = \eta(T)$ is
adopted throughout the iterations (\ref{defIteration000rrCDVx}) with
$t=1,\ldots,T$. The sample $D$ is then exhausted and we use
$\hat{\beta}=\hat{\beta}_{T+1}$ as the derived estimator. The step-size
$\eta(T)$ can be optimized (at least asymptotically) over $T$, but it could be
not trivial later to warm-start the iteration efficiently when new sample
points are available.
\end{itemize}

To measure the estimation performance of $\hat{\beta}$, we use the expected
squared $\mathcal{H}_K$ norm $\mathbb{E}[ \| \hat{\beta} - \beta^* \|^2_K ]$.
Write $\hat{\varphi}: L^2(\mathcal{T})\to L^2(\mathcal{T})$ the estimator of
the functional $\varphi^*$,
\begin{gather*}
\hat{\varphi}: f\mapsto \langle \hat{\beta}, f \rangle_2=\int_\mathcal{T}
\hat{\beta}(u) f(u) du.
\end{gather*}
The prediction performance of $\hat{\varphi}$ is measured by the expected
excess generalization error $\mathbb{E}[\mathcal{E}(\hat{\varphi})]$.
Here for any linear functional $\varphi$ on
$L^2(\mathcal{T})$,
\begin{gather*}
\mathcal{E}(\varphi) = \mathbb{E}[(Y - \varphi(X))^2 - (Y - \varphi^*(X))^2],
\end{gather*}
where the expectation is taken with respect to the distribution of $(X,Y)$ in
Model (\ref{model0002LJmHH}).

As a technical instrument, the integral operator $L_K: L^2(\mathcal{T})\to
L^2(\mathcal{T})$ is defined with the reproducing kernel $K$, by
\begin{gather}\label{defLK000t2R8Y}
L_K(f) = \int_\mathcal{T} K(\cdot, u) f(u) du.
\end{gather}
It is well understood in the literature that $L_K$ is positive semi-definite
(thus self-adjoint), and of trace class (so, compact). See, e.g.,
\cite[Theorem 4.27]{SteinwartChristmann2008-MR2450103}.
The power $L_K^r$ with $r\in(0,\infty)$ is well defined by the
spectral theorem. In terms of $L_K$, the iteration
(\ref{defIteration000rrCDVx}) is equivalently written as
\begin{gather}\label{eqvDefIteration0006PgUy}
\hat{\beta}_{t+1} = \hat{\beta}_t - \eta_t \left( \langle \hat{\beta}_t,
x_t\rangle_2 - y_t \right) L_K x_t, \quad \mbox{for }t\geq 1.
\end{gather}

For the sake of simplicity we assume $\mathbb{E}[X] = 0$ and $\|X\|_2=1$ a.s.
Consequently, $\mathbb{E}[Y] = 0$. The covariance function $C$ has the
form
\begin{gather*}
C(u,v) = \mathbb{E}[X(u)X(v)],\quad \mbox{for }u,v\in\mathcal{T}.
\end{gather*}
Obviously $C$ is also a reproducing kernel. We further assume that $C$ is
continuous, exclude the trivial case $C\equiv 0$, and define the operator
$L_C$ on $L^2(\mathcal{T})$ in the same way
as (\ref{defLK000t2R8Y}) by substituting $K$ with $C$. So, $L_C$ is
self-adjoint, positive semi-definite, of trace class, and thus compact. The
power $L_C^r$ with $r>0$ is well defined. For any $f,g,h\in L^2(\mathcal{T})$,
we define $f\otimes g$ as a rank-one operator on $L^2(\mathcal{T})$ defined by
$(f\otimes g)h=\left< g,h \right>_2f$. For any
linear functional $\varphi(\cdot) = \left< \beta,\cdot \right>_2$ on
$L^2(\mathcal{T})$, the excess generalization error can be written in terms of
the norm of $L^2(\mathcal{T})$,
\begin{align}
\mathcal{E}(\varphi)&= \mathbb{E}\left[ (Y - \langle
\beta,X\rangle_2)^2 - (Y - \langle \beta^*,X\rangle_2)^2 \right]\nonumber\\
&=\mathbb{E}\left[ \langle \beta-\beta^*,X\rangle_2^2 \right]=
\mathbb{E}[\langle \beta-\beta^*, X\otimes X(\beta-\beta^*) \rangle_2]
\nonumber\\
&=\|L_C^{1/2}
(\beta-\beta^*)\|^2_2. \label{excessGEbyNorm000itb0ac}
\end{align}

Since $\mathcal{T}$ is compact, we write
\begin{gather*}
\kappa = \max_{u\in\mathcal{T}}\sqrt{K(u,u)}\in(0, \infty).
\end{gather*}
Recall that by the positive semi-definiteness, $|K(u,v)|\leq
\sqrt{K(u,u)K(v,v)}\leq \kappa^2$ for any $u,v\in\mathcal{T}$. The spectral
norm of $L_K$ is bounded by $\| L_K \|_{\mathsf{op}(L^2)}\leq \kappa^2$.

Modern scalable computing and stochastic optimization techniques make stochastic gradient descent a popular approach across various applications. Theoretical analysis of its convergence is also the subject of an intense recent study. The present work aims to establish a novel capacity-dependent convergence analysis for stochastic gradient descent \eqref{defIteration000rrCDVx} which is applied to solve the linear functional model \eqref{model0002LJmHH} in an RKHS. We study prediction problem through the convergence of excess generalization error \eqref{excessGEbyNorm000itb0ac} and estimation problem through the strong convergence in an RKHS. Our analysis developed in this paper leads to fast rates for both types of convergence.
State-of-the-art convergence rates in RKHS metric are obtained.
From the viewpoint of approximation, this kind of convergence is much stronger, which ensures that the estimators can approximate the underlying target itself and its derivatives as well \cite{Zhou2008-MR2444183}. Our error estimates fully exploit the spectral structure of the operators and the capacity condition encoding the smoothness of kernels and covariance function.
Our work provides insights for the applications of kernel
methods to functional data analysis, and better understanding of the difference
between the estimation problems and the prediction problems in functional
linear models.

The rest of this paper will be organized as follows. We present the main results in Section \ref{section: main results}. Discussions and comparisons with related works are given in Section \ref{secDiscussionOfAssumptions000aYqK3}.  Section \ref{section: error decomposition} introduces two novel error decomposition formulas of the algorithm (\ref{defIteration000rrCDVx}). The proofs of main results are postponed to Section \ref{section: bounding the excess generalization error} and Section \ref{section: bounding the estimation error} after some preliminary estimates established for the convergence analysis.

\section{Main Results}\label{section: main results}

In this section we list some main assumptions and present the convergence rates
of the stochastic gradient descent algorithm (\ref{defIteration000rrCDVx}),
in the finite-horizon and online settings, respectively. We provide
discussions of the assumptions in Section
\ref{secDiscussionOfAssumptions000aYqK3}.

Denote $\mathscr{L}_K=L_C^{1/2}L_KL_C^{1/2}$ and $\mathscr{L}_C
=L_K^{1/2}L_CL_K^{1/2}$. It is easy to verify that both of the operators
$\mathscr{L}_K$ and $\mathscr{L}_C$ are self-adjoint, positive semi-definite,
of trace class, and compact.

\begin{assumption}[Regularity Condition of the slope $\beta^*$]
\label{assumption1}
There exists some $g^*$ in $L^2(\mathcal{T})$ and $r\in(0, \infty)$ such that
\begin{gather*}
L_C^{1/2}\beta^* = \mathscr{L}^r_K g^*.
\end{gather*}
\end{assumption}

For any positive semi-definite compact operator $L$, let $\mathrm{Tr}(L)$
denote the trace of $L$, i.e., the sum of all the positive eigenvalues
(counting multiplicity) of $L$. In particular, $\mathrm{Tr}(L)<\infty$ if and
only if $L$ is of trace class.

\begin{assumption}[Capacity Condition]\label{assumption2}
\begin{gather*}
\mathrm{Tr}(\mathscr{L}^s_K)<\infty,\quad \mbox{for some }0<s\leq 1.
\end{gather*}
\end{assumption}
Note that since $\mathscr{L}_K$ is a trace-class operator, Assumption 2 with
$s=1$ holds true automatically.

\begin{assumption}[Moment Condition]\label{assumption3}
For Model (\ref{model0002LJmHH}), there exist constant $c_{\mathsf{M}}>0$
such that for any $f$ in $L^2(\mathcal{T})$,
\begin{align}\label{eqAssumption3}
\mathbb{E}[\left< X,f \right>_2^4] \leq c_{\mathsf{M}}
\left( \mathbb{E}[\langle X,f \rangle_2^2] \right)^2.
\end{align}
\end{assumption}

\subsection{Analysis of the Prediction Error}\label{subsection: analysis of the prediction error}

In this subsection, we study the estimator $\hat{\varphi} = \langle
\hat{\beta}, \cdot \rangle_2$ for the prediction problem and bound the expected
excess generalization error.

\begin{thm}\label{thm: decreasing step size capacity dependent L2}
In the online setting, define $\{ \hat{\varphi}_t = \langle \hat{\beta}_t,
\cdot \rangle_2 \}$ through (\ref{defIteration000rrCDVx}). Under
Assumptions \ref{assumption1} (with $r>0$), \ref{assumption2} (with $0<s\leq 1$),
and \ref{assumption3}, set $\eta_t = \eta_0 t^{-\theta}$ with
\begin{gather}\label{temp000sEyvM}
\theta=\frac{\min\{2r, 2-s\}}{1 + \min\{2r, 2-s\}}=
\left\{
\begin{array}{ll}
\frac{2r}{2r+1},&\mbox{when }2r\leq 2-s,\\
\frac{2-s}{3-s},&\mbox{when }2r\geq 2-s.
\end{array}
\right.
\end{gather}
If $0<\eta_0\leq \min\{ 1, \kappa^{-2}, C^{\mathsf{S}}_1 \}$ (where
$C^\mathsf{S}_1$ is a constant, and it will be specified by
(\ref{specifyC1S000hVp9yJ}) in the proof), then
\begin{gather}
\mathbb{E}[\mathcal{E}(\hat{\varphi}_{t+1})] \leq C_1 \left\{\begin{array}{ll}
(t+1)^{-\theta},&0<s<1,\\
(t+1)^{-\theta}\log(t+1),& s=1,
\end{array}\right.
\quad \mbox{for any } t\geq 1,\label{boundOL0002YN04Pgm}
\end{gather}
where $C_1$ is a constant independent of $t$, and it will be specified
by (\ref{C1Def0001LaZ}) in the proof.
\end{thm}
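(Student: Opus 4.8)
The plan is to analyze the recursion (\ref{eqvDefIteration0006PgUy}) by passing to the variable $\hat{\gamma}_t = L_C^{1/2}(\hat{\beta}_t - \beta^*)$, so that the excess generalization error is exactly $\mathbb{E}\|\hat{\gamma}_{t+1}\|_2^2$ and the regularity hypothesis reads $\hat{\gamma}_1 = -\mathscr{L}_K^r g^*$. Subtracting $L_C^{1/2}\beta^*$ from both sides and using $y_t = \langle\beta^*,x_t\rangle_2 + \varepsilon_t$, one gets a recursion of the form $\hat{\gamma}_{t+1} = (I - \eta_t A_t)\hat{\gamma}_t + \eta_t \varepsilon_t b_t$, where $A_t = L_C^{1/2}(L_K x_t \otimes x_t) L_C^{-1/2}$ is a random, non-self-adjoint but positive-in-expectation operator with $\mathbb{E}[A_t] = \mathscr{L}_K$ (up to the obvious similarity), and $b_t = L_C^{1/2} L_K x_t$. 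I expect the paper's Section~\ref{section: error decomposition} to supply the precise decomposition; I would then split $\hat{\gamma}_{t+1}$ into a \emph{bias part} driven by the initial condition through the product of contraction operators $\prod_{j=k}^{t}(I-\eta_j\mathscr{L}_K)$, and a \emph{sample/noise part} collecting the martingale-difference increments from $\varepsilon_t$ and from the fluctuation $A_t - \mathscr{L}_K$.

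The core estimates are then the standard operator-calculus bounds. For the bias term, using Assumption~\ref{assumption1} and the elementary inequality $\big\|\mathscr{L}_K^r \prod_{j=k+1}^{t}(I-\eta_j\mathscr{L}_K)\big\|_{\mathsf{op}} \lesssim \big(\sum_{j=k+1}^{t}\eta_j\big)^{-r}$ (valid for the contraction since $\eta_0\kappa^2\le 1$ guarantees $\|I-\eta_j\mathscr{L}_K\|_{\mathsf{op}}\le 1$), one bounds the bias contribution by a power of the cumulative step-size $\sum_{j\le t}\eta_j \asymp t^{1-\theta}$, giving roughly $t^{-2r(1-\theta)}$. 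For the variance/noise term, the key device is to insert $L_C^{1/2}(I + \lambda^{-1}\mathscr{L}_K)^{?}$-type resolvent factors, so that one is left controlling $\sum_k \eta_k^2 \,\mathrm{Tr}\big(\mathscr{L}_K \prod_{j>k}(I-\eta_j\mathscr{L}_K)^2\big)$; here Assumption~\ref{assumption2} enters, since $\mathrm{Tr}(\mathscr{L}_K(I-\eta\mathscr{L}_K)^{2m}) \lesssim \eta^{-s}$ by splitting the spectrum at the scale $\eta^{-1}$ and using $\mathrm{Tr}(\mathscr{L}_K^s)<\infty$. The moment Assumption~\ref{assumption3} is what lets one replace the fourth-moment terms arising from $\mathbb{E}\|A_t-\mathscr{L}_K\|$-type quantities by squares of second moments, keeping the fluctuation term of the same order as the pure-noise term. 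Summing $\eta_k^2 k^{s}$-type contributions with the contraction weights and optimizing, the noise part contributes $t^{-\theta}$ (with an extra $\log t$ when $s=1$, since the geometric-type sum degenerates to a harmonic one), and the choice (\ref{temp000sEyvM}) of $\theta$ is precisely the one balancing $2r(1-\theta)$ against $\theta$ when $2r\le 2-s$, and saturating at the capacity-limited rate otherwise.

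The last step is bookkeeping: one shows both parts are dominated by the claimed right-hand side of (\ref{boundOL0002YN04Pgm}), tracking the dependence of the constant on $\eta_0$, $\kappa$, $\|g^*\|_2$, $r$, $s$, $\sigma^2$, and $c_{\mathsf{M}}$, and noting that the constraint $\eta_0 \le \min\{1,\kappa^{-2},C_1^{\mathsf{S}}\}$ is what makes the recursive inequalities (from a Gronwall/induction argument on $\mathbb{E}\|\hat{\gamma}_{t+1}\|_2^2$) self-improving rather than exploding.

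The main obstacle I anticipate is the control of the fluctuation operator $A_t - \mathscr{L}_K$ in the \emph{non-self-adjoint} weighted space: because the natural similarity transform by $L_C^{1/2}$ is unbounded (as $L_C^{-1/2}$ is), one cannot simply work with self-adjoint operators, and the cross terms between the bias and the fluctuation must be handled carefully — typically by an iterative (two-stage) bound where one first establishes a crude rate for $\mathbb{E}\|\hat{\gamma}_t\|_2^2$ and then feeds it back to sharpen the fluctuation contribution. Making this iteration close at exactly the exponent $\theta$ in (\ref{temp000sEyvM}), simultaneously for all $r>0$ and all $0<s\le 1$, is the delicate part, and is presumably where the "two novel error decomposition formulas" advertised in the introduction do the real work.
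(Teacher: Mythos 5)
Your plan is essentially the paper's proof: the same semi-stochastic decomposition (bias carried by the deterministic products $\prod_j(I-\eta_j\mathscr{L}_K)$ acting on $L_C^{1/2}\beta^*=\mathscr{L}_K^r g^*$, plus martingale-difference increments collecting both $\varepsilon_t$ and the gradient fluctuation), the same use of Assumption \ref{assumption3} to reduce fourth moments to $\sqrt{c_{\mathsf{M}}}\,\mathbb{E}[\mathcal{E}(\hat{\varphi}_k)]$, the same capacity splitting via $\mathrm{Tr}(\mathscr{L}_K^s)$ (note the relevant trace is of $\mathscr{L}_K^2\prod_{j>k}(I-\eta_j\mathscr{L}_K)^2$, so the variance sum decays with cumulative-step exponent $2-s$, not a single-step $\eta^{-s}$ bound), a bootstrap for the error feedback, and the same balancing of $2r(1-\theta)$ against $\theta$. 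The one place your framing diverges is that the ``main obstacle'' you anticipate does not arise in the paper's execution: the recursion (\ref{eqn: difference}) keeps the population operator $I-\eta_tL_KL_C$ and absorbs the entire stochastic fluctuation into the conditionally mean-zero term $\mathcal{B}_t$, so after left-multiplying the unrolled identity (\ref{eqn: error decomposition}) by $L_C^{1/2}$ and using $L_C^{1/2}(I-\eta L_KL_C)=(I-\eta\mathscr{L}_K)L_C^{1/2}$, no $L_C^{-1/2}$, no random operator products, and no bias--noise or noise--noise cross terms appear (the latter vanish exactly by $\mathbb{E}_{z_k}[\mathcal{B}_k]=0$), so only self-adjoint operator calculus on $\mathscr{L}_K$ is needed. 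Likewise, the feedback step is lighter than you expect: one only needs the crude bound $\mathbb{E}[\mathcal{E}(\hat{\varphi}_k)]=O(1)$, proved by induction under the step-size smallness condition that defines $C_1^{\mathsf{S}}$, since the variance term already carries the additive $\sigma^2$; no preliminary rate has to be fed back in.
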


For the piecewise definition (\ref{temp000sEyvM}), we let the domains overlap
on purpose to highlight the continuity of $\theta$ on the
whole domain $r>0$ and $0<s<1$.
The index $\theta$ as a function of $r$ and $s$ is also
visualized in Figure \ref{figVisualizeTheta000bKc4nH}.
Without Assumption \ref{assumption2} (i.e., case $s=1$ in
(\ref{boundOL0002YN04Pgm})), the convergence rate $O((t+1)^{-2r/(2r+1)}
\log(t+1))$, saturated as $O((t+1)^{-1/2}\log(t+1))$ for $r\geq 1/2$, is also
obtained in \cite{ChenTangFanGuo2022-MR4388513}. Here, $r$ indicates the
regularity of the target function $\beta^*$ as described in Assumption
\ref{assumption1}. The saturation means beyond $r\in(0,1/2]$, further
improvement of such regularity (i.e., increasing of $r$)
does not help to improve the rate $\mathbb{E}
[\mathcal{E}(\hat{\varphi}_{t+1})]$ converges to zero.
In this paper, Theorem \ref{thm: decreasing step size capacity dependent L2}
suggests that Assumption \ref{assumption2} on capacity with $s <1$,
not only removes the logarithmic factor in the convergence rates, but also
uplifts the saturating boundary from $1/2$ to $1/2+(1-s)/2$.


\begin{figure}
\begin{center}
\includegraphics[width=7cm]{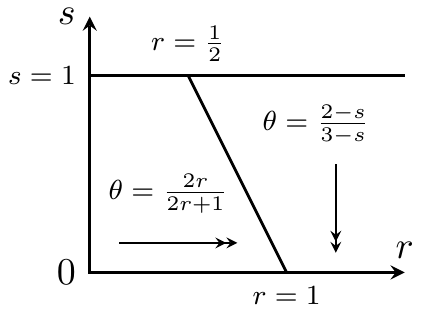}
\caption{The index $\theta$ in Theorem \ref{thm: decreasing step size capacity
dependent L2} as a function of $(r,s)$. Here the double-headed arrows show the
gradient directions.}
\label{figVisualizeTheta000bKc4nH}
\end{center}
\end{figure}

In the following, Theorem \ref{thm: convergence rate in L2 constant step size and capacity independent}
shows that in the finite-horizon setting, Algorithm (\ref{defIteration000rrCDVx})
does not suffer from the above discussed saturation, and the expected prediction
error converges to zero in a rate arbitrarily close to $O(T^{-1})$, for
sufficiently large $r$.

\begin{thm}\label{thm: convergence rate in L2 constant step size and capacity
independent}
In the finite-horizon setting with $1\leq T=|D|<\infty$, define
$\hat{\varphi}_{T+1} = \langle \hat{\beta}_{T+1}, \cdot \rangle_2$ through
(\ref{defIteration000rrCDVx}). Under Assumptions \ref{assumption1} (with $r>0$),
\ref{assumption2} (with $0<s\leq 1$), and \ref{assumption3}, set the constant
step-size $\eta_t = \eta_0T^{-2r/(2r+1)}$, with $0<\eta_0\leq \min\{ 1,
\kappa^{-2}, C_2^{\mathsf{S}} \}$ (where $C_2^{\mathsf{S}}$ is a constant
independent of $T$, and it will be specified by (\ref{specifyC2S000zO2SiQ})
in the proof). Then,
\begin{gather}
\label{constSSErrBound000nCCioi5s}
\mathbb{E}[\mathcal{E}(\hat{\varphi}_{T + 1})] \leq C_2\left\{\begin{array}{ll}
T^{-2r/(2r+1)},&\mbox{when }0<s<1,\\
T^{-2r/(2r+1)}\log(T+1),&\mbox{when }s=1.
\end{array}\right.
\end{gather}
where the constant $C_2$ is independent of $T$, and it will be specified
by (\ref{specifyC2000lBOk5F}) in the proof.
\end{thm}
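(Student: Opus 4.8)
The plan is the usual bias--variance split, carried out with the error decomposition of Section~\ref{section: error decomposition}. Write $\hat\beta_{T+1}-\beta^*$ as a deterministic \emph{drift} term --- the $T$-step gradient descent iterate for the population objective started at $0$ --- plus a \emph{sample error} term built from the martingale differences $\hat\beta_{t+1}-\mathbb{E}[\hat\beta_{t+1}\mid\mathcal{F}_{t-1}]$, where $\mathcal{F}_t=\sigma\{(x_j,y_j):j\le t\}$. By (\ref{excessGEbyNorm000itb0ac}) it suffices to bound $\mathbb{E}\|L_C^{1/2}(\cdot)\|_2^2$ for each piece. For the drift, repeatedly using $L_C^{1/2}(I-\eta L_KL_C)=(I-\eta\mathscr{L}_K)L_C^{1/2}$ pushes all $L_C^{1/2}$ factors to the right, so the drift contributes $\|(I-\eta\mathscr{L}_K)^T L_C^{1/2}\beta^*\|_2=\|(I-\eta\mathscr{L}_K)^T\mathscr{L}_K^r g^*\|_2$ by Assumption~\ref{assumption1}; the elementary spectral bound $\sup_{0<\lambda\le\kappa^2}\lambda^r(1-\eta\lambda)^T\le (r/e)^r(\eta T)^{-r}$, valid because $\eta_0\le\kappa^{-2}$ forces $1-\eta\lambda\in[0,1]$, makes the squared drift $O((\eta T)^{-2r})$. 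With $\eta=\eta_0T^{-2r/(2r+1)}$ one has $\eta T\asymp T^{1/(2r+1)}$, so the drift is of exact order $T^{-2r/(2r+1)}$ for \emph{every} $r>0$; this uniformity in $r$ is precisely the mechanism by which the constant-step, horizon-tuned scheme escapes the saturation of the online setting, where the mandatory decay of $\eta_t$ caps the attainable exponent.

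The substance is the sample error. Write it as the sum over $t=1,\dots,T$ of the noise injected at step $t$ --- a label-noise part whose conditional second moment is governed by $\sigma^2$, and a model-error part $\langle\hat\beta_t-\beta^*,x_t\rangle_2\,L_Kx_t$ --- damped by the random product $\prod_{j=t+1}^{T}(I-\eta L_Kx_j\otimes x_j)$, and estimate the expected squared $L_C^{1/2}$-norm term by term. Assumption~\ref{assumption3}, in its polarized form $\mathbb{E}[\langle X,f\rangle_2^2\langle X,g\rangle_2^2]\le c_{\mathsf M}\langle f,L_Cf\rangle_2\langle g,L_Cg\rangle_2$, is what turns the fourth-order moments that arise --- from the step-$t$ model error (producing $\mathcal{E}(\hat\varphi_t)=\|L_C^{1/2}(\hat\beta_t-\beta^*)\|_2^2$ times a trace) and from the expectations of the damping products (producing $\mathbb{E}[(X\otimes X)L_KAL_K(X\otimes X)]\preceq c_{\mathsf M}\mathrm{Tr}(L_CL_KAL_K)\,L_C$ for positive semidefinite $A$) --- into traces of operators built from $\mathscr{L}_K$. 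After these reductions the sample error is dominated by terms of the shape $\eta^2\sum_{t=1}^{T}\mathrm{Tr}\bigl(\mathscr{L}_K^{a}(I-\eta\mathscr{L}_K)^{2(T-t)}\bigr)$ together with a self-referential piece $\eta^2\sum_{t=1}^{T}\mathbb{E}[\mathcal{E}(\hat\varphi_t)]\,\mathrm{Tr}\bigl(\mathscr{L}_K^{2}(I-\eta\mathscr{L}_K)^{2(T-t)}\bigr)$. Here Assumption~\ref{assumption2} enters: splitting $\mathscr{L}_K^{a}=\mathscr{L}_K^{s}\mathscr{L}_K^{a-s}$ and summing the geometric series in $t$ bounds such a term by a constant multiple of $\eta\,\mathrm{Tr}(\mathscr{L}_K^{s})$ when $0<s<1$, whereas at the critical value $s=1$ the same computation loses a factor $\log(T+1)$ (the eigenvalue sum degenerates into a harmonic-type sum). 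With $\eta\asymp T^{-2r/(2r+1)}$ this matches the drift rate and gives (\ref{constSSErrBound000nCCioi5s}).

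The self-referential piece is absorbed by induction on $t\le T$: assuming $\mathbb{E}[\mathcal{E}(\hat\varphi_j)]\le M\,T^{-2r/(2r+1)}$ (with the extra $\log$ when $s=1$) for all $j\le t$, the estimates above propagate the bound to $t+1$ once $\eta_0$ lies below an absolute threshold $C_2^{\mathsf S}$ depending only on $\kappa,c_{\mathsf M},\mathrm{Tr}(\mathscr{L}_K)$ and $\|g^*\|_2$; tracking the constants then yields the explicit $C_2$. I expect the main obstacle to be the control of the random operator products $\prod_{j=t+1}^{T}(I-\eta L_Kx_j\otimes x_j)$ in the sample-error term: since $x_j\otimes x_j$ does not commute with $L_C^{1/2}$, one cannot conjugate it into a clean power of $I-\eta\mathscr{L}_K$, and must instead propagate the positive-semidefinite comparison $A\mapsto\mathbb{E}[(I-\eta L_Kx\otimes x)^{*}A(I-\eta L_Kx\otimes x)]$ through the product, keeping the $\eta^2$ remainder terms (handled by Assumption~\ref{assumption3}) dominated by the one-step contraction, and then invoke Assumption~\ref{assumption2} uniformly in the number of factors. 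Making these remainders small enough that the induction closes for \emph{all} $T\ge1$, not merely asymptotically --- while carrying the $s=1$ critical case through the same chain of estimates --- is the delicate point.
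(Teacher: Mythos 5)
Your drift bound and the final step-size computation match the paper's rates, but the core of your sample-error argument has two genuine gaps. First, you propagate the injected noise through the \emph{random} products $\prod_{j=t+1}^{T}(I-\eta L_K x_j\otimes x_j)$ and then concede at the end that controlling these non-commuting random operator products is an unresolved ``delicate point'' --- but that is exactly the part that needs a proof, and it is an artifact of the decomposition you chose (indeed your description mixes two incompatible decompositions: if the damping is by the random products, the per-step noise is only $\eta\varepsilon_t L_K x_t$, whereas if you also inject the model-error term $\langle\hat\beta_t-\beta^*,x_t\rangle_2 L_K x_t$, the damping must be deterministic). The paper avoids this difficulty entirely: in (\ref{eqn: error decomposition}) the propagation operators are the \emph{deterministic} $(I-\eta_j L_K L_C)$, conjugated via $L_C^{1/2}(I-\eta L_KL_C)=(I-\eta\mathscr{L}_K)L_C^{1/2}$ into powers of $I-\eta\mathscr{L}_K$, and the entire per-step fluctuation (label noise plus the empirical-vs-population gap acting on $\hat\beta_k-\beta^*$) is packed into the mean-zero increments $\mathcal{B}_k$; conditional independence then kills all cross terms (Proposition \ref{prop: error decomposition in L2 decreasing step size}), so only second moments of $\mathcal{B}_k$ are needed, handled by Assumptions \ref{assumption2} and \ref{assumption3} through purely deterministic spectral calculus (Theorem \ref{excessErrorGeneralTheorem000zzlJ7M}). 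No bound on random operator products is ever required.

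Second, your induction to absorb the self-referential piece cannot close as stated. You assume $\mathbb{E}[\mathcal{E}(\hat\varphi_j)]\le M\,T^{-2r/(2r+1)}$ for all $j\le t$ with $M$ independent of $T$, but this is false for early iterates: $\hat\varphi_1=0$ gives $\mathcal{E}(\hat\varphi_1)=\|L_C^{1/2}\beta^*\|_2^2=\|\mathscr{L}_K^r g^*\|_2^2$, a fixed positive constant, and more generally the bias of the $j$-th iterate is of order $(\eta j)^{-2r}\gg T^{-2r/(2r+1)}$ for $j\ll T$. The paper's resolution is that no fine bound on intermediate iterates is needed: Proposition \ref{coarseEst000XbG0PR} proves by induction only the coarse uniform bound $\mathbb{E}[\mathcal{E}(\hat\varphi_k)]\le 2\|\beta^*\|_2^2+\sigma^2/\sqrt{c_{\mathsf{M}}}$ under the step-size smallness condition (\ref{firstStepCond000s5c7AU}), which suffices because this quantity enters the variance term additively alongside $\sigma^2$; the final rate then follows from a single application of the general bound (\ref{generalExcessErrorBound000DtU6sC}) with the coarse estimate plugged in, not from propagating the target rate through the iterations. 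You would need to replace your induction hypothesis by such an $O(1)$ bound (and a deterministic-propagation decomposition) for the argument to go through.
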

The capacity independent convergence rate $O(T^{-2r/(2r+1)}\log(T+1))$
for $s=1$ in (\ref{constSSErrBound000nCCioi5s}) is first derived in
\cite{ChenTangFanGuo2022-MR4388513}. In the finite-horizon setting,
the capacity assumption $0<s<1$ helps remove the logarithmic factor.

\subsection{Analysis of the Estimation Error}\label{subsection: analysis of the estimation error}

In this subsection, we study the estimator $\hat{\beta}$ for the estimation
problem. The analysis employs the following Assumption \ref{assumption4} to
replace Assumption \ref{assumption1}.
\begin{assumption}[Regularity Condition of the slope $\beta^*$]
\label{assumption4}
There exists some $g^\dag$ in $L^2(\mathcal{T})$ and $r>0$, such
that
\begin{gather*}
\beta^* = L_K^{1/2} \mathscr{L}_C^r g^\dag.
\end{gather*}
\end{assumption}

This assumption implies that the slope $\beta^*$ lies in the range of $L^{1/2}_K$, i.e., $\beta^* \in \mathcal{H}_K$.

\begin{thm} \label{thm3HKOnline000EGE8Dv}
In the online setting, define $\{ \hat{\beta}_t \}_{t\geq 1}$ through
(\ref{defIteration000rrCDVx}). Under Assumptions 2 (with $0<s<1$), 3,
and 4 (with $r>0$), set step-sizes $\eta_t=\eta_0 t^{-\theta}$ with
$0<\eta_0\leq \min\{ 1, \kappa^{-2}, C_3^\mathsf{S} \}$ (where
$C_3^{\mathsf{S}}$ is a constant independent of $t$, and it will be specified
by (\ref{defC3S000vypUI}) in the proof), and
\begin{gather}\label{temp000Eve1p}
\theta=\left\{
\begin{array}{ll}
\frac{2r+s}{2r+s+1},&\mbox{when }2r\leq 1-s,\\
1/2,&\mbox{when }2r\geq 1-s.
\end{array}
\right.
\end{gather}
Then,
\begin{gather}\label{estimationonline}
\mathbb{E}[\|\hat{\beta}_{t+1} - \beta^*\|_K^2]\leq C_3 \left\{
\begin{array}{ll}
	(t+1)^{-2r/(1+s+2r)},&2r< 1-s,\\
	(t+1)^{-(1-s)/2}\log(t+1),&2r\geq 1-s,
\end{array}
\right. \quad \mbox{for any } t\geq 1,
\end{gather}
where $C_3$ is a constant independent of $t$, and it will be specified
by (\ref{C3Specify000JTif}) in the proof.
\end{thm}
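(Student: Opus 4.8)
\noindent\emph{Proof plan.} The first step is to pass to a ``whitened'' residual and reduce (\ref{defIteration000rrCDVx}) to a last--iterate stochastic gradient recursion governed by $\mathscr{L}_C$. Since $\beta^*\in\mathcal H_K$ by Assumption~\ref{assumption4} and each iterate $\hat\beta_t$ lies in $\mathcal H_K$, the residual $B_t:=\hat\beta_t-\beta^*$ lies in $\mathcal H_K=\mathrm{Ran}(L_K^{1/2})$, so $V_t:=L_K^{-1/2}B_t$ (pseudo--inverse, on $(\ker L_K)^{\perp}$) is well defined with $\|V_t\|_2=\|B_t\|_K$. Writing $y_t=\langle\beta^*,x_t\rangle_2+\varepsilon_t$, a direct computation turns (\ref{defIteration000rrCDVx}) into, with $\xi_t:=L_K^{1/2}x_t\in L^2(\mathcal T)$,
\begin{gather}
V_{t+1}=\bigl(I-\eta_t\,\xi_t\otimes\xi_t\bigr)V_t+\eta_t\varepsilon_t\,\xi_t,\qquad
V_1=-L_K^{-1/2}\beta^*=-\mathscr{L}_C^{\,r}g^{\dag},
\label{planWhitened000}
\end{gather}
where $\mathbb E[\xi_t\otimes\xi_t]=L_K^{1/2}L_CL_K^{1/2}=\mathscr{L}_C$, Assumption~\ref{assumption3} transfers to $\xi_t$ (replace $f$ by $L_K^{1/2}f$), and $\|\xi_t\otimes\xi_t\|_{\mathsf{op}(L^2)}=\|\xi_t\|_2^2=\langle x_t,L_Kx_t\rangle_2\le\kappa^2$ a.s., so $\eta_0\le\kappa^{-2}$ makes each factor $I-\eta_t\xi_t\otimes\xi_t$ a contraction. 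Thus $\mathbb E[\|\hat\beta_{t+1}-\beta^*\|_K^2]=\mathbb E[\|V_{t+1}\|_2^2]$, and I would invoke the estimation--error decomposition of Section~\ref{section: error decomposition}, which upon unfolding (\ref{planWhitened000}) writes $V_{t+1}=V_{t+1}^{\mathrm b}+V_{t+1}^{\mathrm n}$ with $V_{t+1}^{\mathrm b}=-\Pi_1^t\,\mathscr{L}_C^{\,r}g^{\dag}$, $V_{t+1}^{\mathrm n}=\sum_{i=1}^t\eta_i\varepsilon_i\,\Pi_{i+1}^t\xi_i$ and $\Pi_{i+1}^t:=\prod_{j=i+1}^t(I-\eta_j\xi_j\otimes\xi_j)$; since the $\varepsilon_i$ are conditionally mean zero and uncorrelated, $\mathbb E\|V_{t+1}\|_2^2=\mathbb E\|V_{t+1}^{\mathrm b}\|_2^2+\sum_{i=1}^t\eta_i^2\,\mathbb E[\varepsilon_i^2\|\Pi_{i+1}^t\xi_i\|_2^2]$.

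For the \emph{bias term} I would compare the random product with its deterministic analogue. A preliminary operator lemma --- the place where Assumption~\ref{assumption3} and the smallness of $\eta_0$ are used --- gives $\mathbb E[(\Pi_{i+1}^t)^{*}\Pi_{i+1}^t]\preceq c\prod_{j=i+1}^t(I-\eta_j\mathscr{L}_C)$ with $c$ independent of $i,t$ (this fixes part of $C_3^{\mathsf S}$), so $\mathbb E\|\Pi_1^t f\|_2^2\le c\langle f,\prod_{i=1}^t(I-\eta_i\mathscr{L}_C)f\rangle_2$. Taking $f=\mathscr{L}_C^{\,r}g^{\dag}$ and using $\sup_{0\le\mu\le\kappa^2}\mu^{2r}\prod_{i=1}^t(1-\eta_i\mu)\le\sup_{\mu\ge0}\mu^{2r}e^{-\mu\sum_{i\le t}\eta_i}\le c_r(\sum_{i=1}^t\eta_i)^{-2r}$ and $\sum_{i=1}^t\eta_i\asymp\eta_0t^{1-\theta}$ yields $\mathbb E\|V_{t+1}^{\mathrm b}\|_2^2\le c\|g^{\dag}\|_2^2\,t^{-2r(1-\theta)}$.

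For the \emph{sample term}, integrating over $\xi_{i+1},\dots,\xi_t$ and then over $\xi_i$ gives $\mathbb E[\varepsilon_i^2\|\Pi_{i+1}^t\xi_i\|_2^2]\le\sigma^2\mathrm{Tr}(\mathscr{L}_C\,\mathbb E[(\Pi_{i+1}^t)^{*}\Pi_{i+1}^t])\le c\sigma^2\mathrm{Tr}(\mathscr{L}_C\prod_{j=i+1}^t(I-\eta_j\mathscr{L}_C))$. Here Assumption~\ref{assumption2} enters: $\mathscr{L}_K$ and $\mathscr{L}_C$ have the same non--zero eigenvalues (both carry the non--zero spectrum of $L_KL_C$), hence $\mathrm{Tr}(\mathscr{L}_C^{\,s})=\mathrm{Tr}(\mathscr{L}_K^{\,s})<\infty$ and $\mathrm{Tr}(\mathscr{L}_C\prod_{j=i+1}^t(I-\eta_j\mathscr{L}_C))\le\mathrm{Tr}(\mathscr{L}_C^{\,s})\|\mathscr{L}_C^{\,1-s}\prod_{j=i+1}^t(I-\eta_j\mathscr{L}_C)\|_{\mathsf{op}(L^2)}\le c(\sum_{j=i+1}^t\eta_j)^{-(1-s)}$. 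With $\eta_i=\eta_0i^{-\theta}$ this bounds the sample term by $c\sigma^2\eta_0^{1+s}\sum_{i=1}^t i^{-2\theta}(\sum_{j=i+1}^tj^{-\theta})^{-(1-s)}$, which I would estimate by splitting at $i=t/2$ --- for $i\le t/2$ the inner sum is $\asymp t^{1-\theta}$; for $i>t/2$ put $i=t-k$ so $\sum_{j=i+1}^tj^{-\theta}\asymp kt^{-\theta}$ and $\sum_{k\le t/2}k^{-(1-s)}\asymp t^{s}$; the index $i=t$ is absorbed by $\|\xi_t\|_2^2\le\kappa^2$ --- and, using the identity $1-2\theta-(1-\theta)(1-s)=s-\theta(1+s)$, this is of order $t^{\,s-\theta(1+s)}$ when $\theta<1/2$ and of order $t^{-(1-s)/2}\log(t+1)$ when $\theta=1/2$ (the logarithm coming from $\sum_{i\le t/2}i^{-1}$).

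Finally I would balance the two bounds. If $2r<1-s$, the choice $\theta=\frac{2r+s}{2r+s+1}\in(0,\tfrac12)$ makes $2r(1-\theta)=\theta(1+s)-s=\frac{2r}{2r+s+1}$, so both contributions are $\asymp(t+1)^{-2r/(1+s+2r)}$. If $2r\ge1-s$, take $\theta=\tfrac12$: the sample term is $t^{-(1-s)/2}\log(t+1)$, the bias is $t^{-r}\le t^{-(1-s)/2}$, and the logarithmic sample term dominates, giving $(t+1)^{-(1-s)/2}\log(t+1)$. Tracking $c_{\mathsf M}$, $\|g^{\dag}\|_2$, $\sigma^2$, $\kappa$ and $\eta_0$ through the estimates produces the explicit $C_3$ and the admissible range $C_3^{\mathsf S}$ for $\eta_0$, along the lines of Theorem~\ref{thm: decreasing step size capacity dependent L2} but carried out in the $\mathcal H_K$ metric. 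I expect the main obstacle to be the uniform second--moment estimate $\mathbb E[(\Pi_{i+1}^t)^{*}\Pi_{i+1}^t]\preceq c\prod_{j=i+1}^t(I-\eta_j\mathscr{L}_C)$: controlling the accumulated multiplicative noise of the random rank--one updates requires the fourth--moment Assumption~\ref{assumption3} and a careful induction with $\eta_0$ small enough, after which the spectral and trace bounds above are comparatively routine; a secondary subtlety is the summation of the sample series near $i=t$, which produces the logarithmic factor in the saturated regime.
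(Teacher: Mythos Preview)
Your decomposition is genuinely different from the paper's. You unfold the recursion through the \emph{random} products $\Pi_{i+1}^t=\prod_{j>i}(I-\eta_j\,\xi_j\otimes\xi_j)$, separating the additive noise $\varepsilon_i$ from the multiplicative sampling fluctuation, and then hope to control the latter by the operator inequality $\mathbb{E}[(\Pi_{i+1}^t)^{*}\Pi_{i+1}^t]\preceq c\prod_{j>i}(I-\eta_j\mathscr{L}_C)$. The paper instead unfolds with \emph{deterministic} products (see (\ref{eqn: error decomposition}) and Proposition~\ref{propBoundHKNorm0005weUo}): after whitening, $V_{t+1}=-\prod_{k}(I-\eta_k\mathscr{L}_C)\mathscr{L}_C^{\,r}g^{\dag}+\sum_{k}\prod_{j>k}(I-\eta_j\mathscr{L}_C)\,L_K^{-1/2}\mathcal{B}_k$, where the mean-zero increments $\mathcal{B}_k$ absorb \emph{both} the additive noise and the fluctuation $(x_k\otimes x_k-L_C)$. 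The price is that $\|\mathcal{B}_k\|$ involves $\langle\beta^*-\hat\beta_k,x_k\rangle_2^2$; the paper handles this by a preliminary uniform bound $\mathbb{E}[\mathcal{E}(\hat\varphi_k)]=O(1)$ (Proposition~\ref{coarseEst000XbG0PR}), which is where the restriction $\eta_0\le C_3^{\mathsf S}$ actually enters, and then feeds everything into Theorem~\ref{boundingEstErr0003fOlO}. Once the products are deterministic, the spectral and trace estimates you wrote are exactly Lemma~\ref{lem: elementary lemma}, and your summation and balancing match Lemma~\ref{lemOnlineStepSize000m7uT2} with $\nu=1-s$.

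The gap in your plan is the operator lemma itself. The backward induction gives $N_k:=\mathbb{E}[(\Pi_k^t)^{*}\Pi_k^t]=\mathbb{E}_{\xi_k}\bigl[(I-\eta_k\xi_k\otimes\xi_k)\,N_{k+1}\,(I-\eta_k\xi_k\otimes\xi_k)\bigr]=N_{k+1}-\eta_k(\mathscr{L}_CN_{k+1}+N_{k+1}\mathscr{L}_C)+\eta_k^2\,\mathbb{E}[\langle\xi_k,N_{k+1}\xi_k\rangle\,\xi_k\otimes\xi_k]$. Because $N_{k+1}$ sits \emph{between} the factors and does not commute with $\mathscr{L}_C$, you cannot pass from $N_{k+1}\preceq D_{k+1}:=c\prod_{j>k}(I-\eta_j\mathscr{L}_C)$ to $N_k\preceq D_k$: the map $A\mapsto\mathscr{L}_CA+A\mathscr{L}_C$ is not order-preserving in the way the induction would need, and taking $\eta_0$ small does not remove this structural obstruction. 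Analyses that do control random SGD products (Dieuleveut--Bach, Pillaud-Vivien et al.) work at the level of the linear map $A\mapsto\mathbb{E}[(I-\eta\xi\otimes\xi)A(I-\eta\xi\otimes\xi)]$ on operators, and even then typically obtain bounds on traces $\mathrm{Tr}(\mathscr{L}_C^{\alpha}N_k)$ rather than the pointwise operator inequality you state. So the step you flag as the ``main obstacle'' is in fact one the paper deliberately avoids; the deterministic-product decomposition with the crude $O(1)$ bound on $\mathbb{E}[\mathcal{E}(\hat\varphi_k)]$ is the missing idea.
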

In the definition (\ref{temp000Eve1p}), $\theta$ is a continuous function of $r>0$ and
$0<s<1$. So we purposely use two overlapping domains. The power index of the rates
in (\ref{estimationonline}) will be elucidated in Figure \ref{figThm3000rskhh}.
\begin{figure}[h]
\centering
\includegraphics[width=0.4\textwidth]{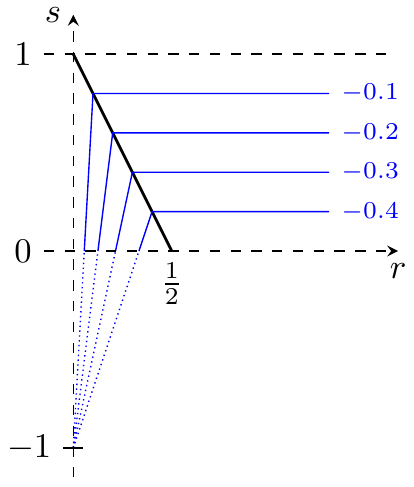}
\caption{The power index $\tilde{\omega}(r,s)$ of the convergence estimate
(\ref{estimationonline}) in Theorem \ref{thm3HKOnline000EGE8Dv}. The thick
black line marks the boundary $2r=1-s$ of the two regimes.
$\tilde{\omega}=-2r/(1+s+2r)$ when $2r\leq 1-s$, and $\tilde{\omega}=-(1-s)/2$
when $2r\geq 1-s$.
Contours of
$\tilde{\omega}$ are plotted in solid blue lines, and further extended by
dotted lines.}
\label{figThm3000rskhh}
\end{figure}

\begin{remark}
Theorem \ref{thm3HKOnline000EGE8Dv} does not work in the capacity independent
setting $s=1$, where the convergence analysis remains an open problem.
\end{remark}


Next we establish unsaturated convergence rates of estimation error for the finite-horizon setting.

\begin{thm}\label{thm4HKFH0000KvQ6A}
In the finite-horizon setting with $1\leq T=|D|<\infty$, define
$\{\hat{\beta}_t: 1\leq t\leq T+1\}$ through (\ref{defIteration000rrCDVx}).
Under Assumptions \ref{assumption2} (with $0<s\leq 1$), 3, and 4 (with $r>0$),
set the constant step-size $\eta_t= \eta_0 T^{-(s+2r) / (1+s+2r)}$, with
$0<\eta_0\leq \min\{ 1, \kappa^{-2}, C_4^{\mathsf{S}} \}$ (where
$C_4^{\mathsf{S}}$ is a constant independent of $T$, and it will be specified
by (\ref{specifyC4S000KtAVF}) in the proof). Then,
\begin{gather}\label{estimationfinitehorizon}
\mathbb{E}[\|\hat{\beta}_{T+1} - \beta^*\|_K^2] \leq C_4 T^{-2r/(1 + s + 2r)},
\end{gather}
where the constant $C_4$ is independent of $T$, and it will be specified
by (\ref{C4Specify000FQapF}) in the proof.
\end{thm}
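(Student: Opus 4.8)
\emph{Proof plan.} The plan is to transform the iteration \eqref{defIteration000rrCDVx} into a standard stochastic gradient recursion on $L^2(\mathcal T)$, split the resulting error into a deterministic bias part, an operator–fluctuation (sample) part, and a noise part — which turn out to be mutually orthogonal in $L^2(\Omega)$ — bound each by spectral calculus combined with Assumptions \ref{assumption2}--\ref{assumption4}, and finally insert the prescribed constant step-size. Since $\hat\beta_1=0$, $\beta^*\in\mathcal H_K$ by Assumption \ref{assumption4}, and every increment in \eqref{defIteration000rrCDVx} lies in $\mathrm{range}(L_K)\subseteq\mathcal H_K$, the error $b_t:=\hat\beta_t-\beta^*$ stays in $\mathcal H_K$, and $\|b_t\|_K=\|\xi_t\|_2$ where $\xi_t:=(L_K^{1/2})^{-1}b_t$. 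Writing $\tilde x_t:=L_K^{1/2}x_t$, using $y_t=\langle\beta^*,x_t\rangle_2+\varepsilon_t$ and $\langle f,L_Kg\rangle_K=\langle f,g\rangle_2$, the iteration \eqref{eqvDefIteration0006PgUy} becomes the canonical recursion
\begin{gather*}
\xi_{t+1}=(I-\eta_t\,\tilde x_t\otimes\tilde x_t)\,\xi_t+\eta_t\varepsilon_t\tilde x_t,\qquad \xi_1=-\mathscr{L}_C^{r}g^\dag,
\end{gather*}
with $\mathbb{E}[\tilde x_t\otimes\tilde x_t]=L_K^{1/2}L_CL_K^{1/2}=\mathscr{L}_C$, the initial value being read off from Assumption \ref{assumption4}. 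Since $\mathscr{L}_C$ and $\mathscr{L}_K$ are both similar to $L_KL_C$ and hence share the same nonzero eigenvalues, Assumption \ref{assumption2} is equivalent to $\mathrm{Tr}(\mathscr{L}_C^{s})<\infty$. This is exactly the estimation–error decomposition of Section \ref{section: error decomposition} specialized to a constant step-size, so the preliminary estimates for Theorem \ref{thm3HKOnline000EGE8Dv} are available. Unrolling with constant step $\eta$ gives $\xi_{T+1}=(I-\eta\mathscr{L}_C)^T\xi_1+\big(\Pi_1^T-(I-\eta\mathscr{L}_C)^T\big)\xi_1+\sum_{t=1}^T\Pi_{t+1}^T\eta\varepsilon_t\tilde x_t$, where $A_t=\tilde x_t\otimes\tilde x_t$ and $\Pi_i^T=\prod_{j=i}^T(I-\eta A_j)$; because $\mathbb{E}[\Pi_1^T]=(I-\eta\mathscr{L}_C)^T$ by independence and the $\varepsilon_t$ are centred and independent of the $x_j$, the three summands are orthogonal and $\mathbb{E}\|\xi_{T+1}\|_2^2$ is the sum of their squared norms.

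The choice $\eta_0\le\min\{1,\kappa^{-2}\}$ makes $\|\tilde x_t\|_2^2\le\kappa^2$ and $\|I-\eta A_t\|_{\mathsf{op}(L^2)}\le1$. The bias term is purely deterministic: $\|(I-\eta\mathscr{L}_C)^T\mathscr{L}_C^{r}g^\dag\|_2\le\|g^\dag\|_2\sup_{0\le\lambda\le\kappa^2}\lambda^r(1-\eta\lambda)^T\lesssim(\eta T)^{-r}\|g^\dag\|_2$, so its square is $\lesssim(\eta T)^{-2r}\|g^\dag\|_2^2$. For the sample term, telescope $\Pi_1^T-(I-\eta\mathscr{L}_C)^T=\eta\sum_{k=1}^T\Pi_{k+1}^T(\mathscr{L}_C-A_k)(I-\eta\mathscr{L}_C)^{k-1}$; conditioning successively on $x_T,\dots,x_{k+1}$ makes the $k$-summands orthogonal, $\|\Pi_{k+1}^T\|_{\mathsf{op}(L^2)}\le1$, and $\mathbb{E}\|(\mathscr{L}_C-A_k)v\|_2^2\le\mathbb{E}[\langle\tilde x_k,v\rangle_2^2\|\tilde x_k\|_2^2]\le\kappa^2\langle v,\mathscr{L}_Cv\rangle_2$ with $v=(I-\eta\mathscr{L}_C)^{k-1}\mathscr{L}_C^{r}g^\dag$; using $\sum_{k\ge1}\sup_{0\le\lambda\le\kappa^2}\lambda^{1+2r}(1-\eta\lambda)^{2(k-1)}\lesssim\eta^{-1}$ this contributes $\lesssim\kappa^2\|g^\dag\|_2^2\,\eta$.

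The noise term is the delicate one. Orthogonality of the $\varepsilon_t$, followed by taking the expectation in $x_t$ (which is independent of $\Pi_{t+1}^T$), reduces $\mathbb{E}\big\|\sum_t\Pi_{t+1}^T\eta\varepsilon_t\tilde x_t\big\|_2^2$ to $\sigma^2\eta^2\sum_{t=1}^T\mathbb{E}\big[\mathrm{Tr}\big((\Pi_{t+1}^T)^{*}\Pi_{t+1}^T\mathscr{L}_C\big)\big]$. The key step is an operator–moment estimate — this is where Assumption \ref{assumption3} enters and where the smallness of $\eta_0$ is required — dominating the random product by its deterministic mean in trace, $\mathbb{E}[\mathrm{Tr}((\Pi_{t+1}^T)^{*}\Pi_{t+1}^T\mathscr{L}_C)]\lesssim\mathrm{Tr}\big((I-\eta\mathscr{L}_C)^{2(T-t)}\mathscr{L}_C\big)$. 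The capacity condition then gives $\mathrm{Tr}\big((I-\eta\mathscr{L}_C)^{2m}\mathscr{L}_C\big)\le\big[\sup_{0\le\lambda\le\kappa^2}\lambda^{1-s}(1-\eta\lambda)^{2m}\big]\mathrm{Tr}(\mathscr{L}_C^{s})\lesssim\min\{1,(\eta m)^{-(1-s)}\}$, and summing over $m=T-t\in\{0,\dots,T-1\}$ (splitting at $m\asymp\eta^{-1}$) yields a noise bound $\lesssim\sigma^2\eta^{1+s}T^{s}=\sigma^2\eta(\eta T)^{s}$; when $s=1$ the same split gives an $O(T)$ sum and hence the same $\eta^2T$ bound, so no logarithmic factor appears.

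Finally, inserting $\eta=\eta_0T^{-(s+2r)/(1+s+2r)}$ gives $\eta T=\eta_0T^{1/(1+s+2r)}$, whence the bias contributes $(\eta T)^{-2r}=\eta_0^{-2r}T^{-2r/(1+s+2r)}$; the noise contributes $\eta(\eta T)^s=\eta_0^{1+s}T^{-2r/(1+s+2r)}$ (the exponent simplifies because $-(1+s)(s+2r)+s(1+s+2r)=-2r$); and the sample term contributes $\eta=\eta_0T^{-(s+2r)/(1+s+2r)}\le\eta_0T^{-2r/(1+s+2r)}$ since $s>0$. Adding the three bounds yields $\mathbb{E}[\|\hat\beta_{T+1}-\beta^*\|_K^2]\le C_4T^{-2r/(1+s+2r)}$, with $C_4$ an explicit expression in $\eta_0,\eta_0^{-1},\kappa,\sigma^2,\|g^\dag\|_2,\mathrm{Tr}(\mathscr{L}_C^{s})$, and $C_4^{\mathsf S}$ is precisely the upper threshold on $\eta_0$ forced by the operator–moment estimate. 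The main obstacle is exactly that estimate: showing, uniformly in the length of the random product, that $\mathbb{E}[(\Pi_i^T)^{*}B\,\Pi_i^T]$ is controlled in the positive–operator order by the deterministic semigroup up to constants absorbable when $\eta_0\le C_4^{\mathsf S}$ — where the fourth–moment Assumption \ref{assumption3} is indispensable — and combining it with the sharp capacity–driven trace bound; the remaining steps are routine spectral calculus.
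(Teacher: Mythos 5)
You have a genuine gap, and you have flagged it yourself: the operator--moment estimate $\mathbb{E}\big[\mathrm{Tr}\big((\Pi_{t+1}^T)^{*}\Pi_{t+1}^T\mathscr{L}_C\big)\big]\lesssim\mathrm{Tr}\big((I-\eta\mathscr{L}_C)^{2(T-t)}\mathscr{L}_C\big)$ is asserted, not proved, and it is the load-bearing step of your argument. Without it, the noise term only admits the trivial bound $\sigma^2\eta^2T\,\mathrm{Tr}(\mathscr{L}_C)\asymp\sigma^2\eta_0^2T^{(1-s-2r)/(1+s+2r)}$, which is strictly weaker than the claimed rate $T^{-2r/(1+s+2r)}$ whenever $s<1$, and does not even tend to zero when $2r<1-s$; so the theorem stands or falls with exactly the estimate you postpone. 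Moreover, that estimate is not a routine consequence of Assumption \ref{assumption3} plus a small $\eta_0$: one step of the recursion gives $\mathbb{E}_{x_t}\big[(I-\eta A_t)\mathscr{L}_C(I-\eta A_t)\big]=(I-\eta\mathscr{L}_C)\mathscr{L}_C(I-\eta\mathscr{L}_C)+\eta^2\big(\mathbb{E}[A_t\mathscr{L}_CA_t]-\mathscr{L}_C^3\big)$, and the positive perturbation $\eta^2\mathbb{E}[A_t\mathscr{L}_CA_t]$ cannot be absorbed into the deterministic contraction uniformly over the spectrum of the compact operator $\mathscr{L}_C$ (the needed comparison $2\eta\mathscr{L}_C^2\succeq\eta^2\mathbb{E}[A_t\mathscr{L}_CA_t]$ fails near zero eigenvalues, no matter how small $\eta_0$ is); propagating such a bound through a product of dependent random factors, uniformly in its length, is precisely the hard part in the SGD/LMS literature and is usually handled by comparison with a semi-stochastic recursion rather than by a uniform domination of $\mathbb{E}[(\Pi)^{*}B\,\Pi]$ by the deterministic semigroup. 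As written, therefore, the proposal does not prove the theorem.

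For contrast, the paper's proof avoids random operator products altogether: the decomposition (\ref{eqn: error decomposition}) keeps the propagators $\prod_j(I-\eta_jL_KL_C)$ deterministic and pushes all randomness into the martingale differences $\mathcal{B}_k$, whose conditional second moment involves $\sigma^2+\langle\beta^*-\hat\beta_k,x_k\rangle_2^2$. Assumption \ref{assumption3} and the step-size threshold $C_4^{\mathsf{S}}$ then enter only through the Cauchy--Schwarz step in Proposition \ref{propBoundHKNorm0005weUo}, Lemma \ref{compactOperatorBd000R9EK3GO}, and the inductive coarse bound $\mathbb{E}[\mathcal{E}(\hat\varphi_k)]=O(1)$ of Proposition \ref{coarseEst000XbG0PR}; after that, Lemma \ref{lem: elementary lemma} together with $\mathrm{Tr}(\mathscr{L}_C^s)=\mathrm{Tr}(\mathscr{L}_K^s)$ yields the capacity-driven decay in Theorem \ref{boundingEstErr0003fOlO}, and the constant-step bookkeeping is as you outline. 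Your bias and sample-fluctuation bounds and your final exponent arithmetic are fine and consistent with the paper's conclusion, but to keep your decomposition you would have to supply a genuine proof of the random-product variance bound, which is a substantial piece of analysis rather than ``routine spectral calculus.''
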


%

The rates (\ref{estimationfinitehorizon}) in Theorem \ref{thm4HKFH0000KvQ6A} does not saturate for $r>0$, and are arbitrarily close to $O(T^{-1})$ for sufficiently large $r$. For a fixed $r>0$, with a smaller $s$ one has a faster rate in (\ref{estimationfinitehorizon}). Here, a smaller $s$ indicates a stronger capacity assumption (Assumption \ref{assumption2}). As we shall see in Theorem \ref{thmExpandTrace000VtDr04} below, a smaller $s$ corresponds to faster eigenvalue decay for $\mathscr{L}_K$ (equivalently, faster eigenvalue decay for $\mathscr{L}_C$), and a smaller hypothesis space $L_K^{1/2}\mathscr{L}_C^r(L^2(\mathcal{T}))$ in Assumption \ref{assumption4}.

\section{Comparisons and Discussions}\label{secDiscussionOfAssumptions000aYqK3}

There has been rapidly growing literature focusing on stochastic gradient descent and its variants in an RKHS or general Hilbert spaces \cite{YingPontil2008-MR2443089,dieuleveut2016nonparametric,pillaud2018statistical,guo2019fast,berthier2020tight,guo2022rates}. We refer the readers
to these papers and references therein. Our paper contributes to the theoretical analysis of functional linear regression in an RKHS that stems from the
the work of Yuan and Cai which establishes capacity dependent analysis for batch learning \cite{YuanCai2010-MR2766857, CaiYuan2012-MR3010906}. As far as we know, the convergence of stochastic gradient descent  has not been investigated in the context of functional linear regression in an RKHS till the very recent paper \cite{ChenTangFanGuo2022-MR4388513} in which the authors conduct capacity independent analysis of the prediction error.

Under the batch leaning setting, Yuan and Cai \cite{YuanCai2010-MR2766857} derive the minimax optimal
convergence rate $T^{-2s_*/(2s_*+1)}$ of the excess generalization error
$\mathcal{E}(\hat{\varphi}_{T+1})$ for prediction, with the regularity assumption
$\beta^*\in\mathcal{H}_K$ and capacity assumption on the rates of eigenvalue
decay, $\lambda_i(L_K)\sim i^{-2s_1}$ (here $a_i\sim b_i$ means $a_i/b_i$ being
uniformly bounded away from zero and infinity as $i\to\infty$)
and $\lambda_i(L_C)\sim i^{-2s_2}$, where
$s_1,s_2>1/2$ and $s_*=s_1+s_2$. Later, Cai and Yuan
\cite{CaiYuan2012-MR3010906} derive the same rate with a different capacity
assumption $\lambda_i(\mathscr{L}_C)\sim i^{-2s_*}$.

Compared with these works,
the strength of our analysis includes that first, our Assumption \ref{assumption2} on capacity,
$\mathrm{Tr}(\mathscr{L}_K^s)<\infty$, is way more general. We shall
see in Theorem \ref{thmExpandTrace000VtDr04} that this is roughly equivalent to the
assumption $\lambda_i(\mathscr{L}_K)=O(i^{-1/s})$. We shall see in Remark
\ref{decreasingCounterExamp000pSLt} that although the eigenvalues $\{
\lambda_i(\mathscr{L}_K) \}_{i=1}^{\infty}$ are arranged non-increasingly,
in general there is no exact index $s_*$ such that $\lambda_i(\mathscr{L}_K)
\sim i^{-2s_*}$ (same for other compact operators including $L_K$, $L_C$,
and $\mathscr{L}_C$). Second, our analysis supports finer characterizations
$L_C^{1/2}\beta^*=
\mathscr{L}_K^r(g^*)$ (Assumption \ref{assumption1}) and $\beta^*=L_K^{1/2}
\mathscr{L}_C^r (g^\dag)$ (Assumption \ref{assumption4}) of slope function
regularity. This leads to a better convergence rate $O(T^{-2r/(2r+1)})$ in
Theorem \ref{thm: convergence rate in L2 constant step size and capacity
independent}, than $O(T^{-2s_*/(2s_*+1)})$ when $r>s_*$. Third, we proved
the non-trivial convergence rates for the estimation error $\|\hat{\beta}-
\beta^*\|_K^2$ in $\mathcal{H}_K$ metric, $O(T^{-2r/(2r+1)})$ (saturated
at $r=(1-s)/2$) in the online setting in Theorem \ref{thm3HKOnline000EGE8Dv},
and $O(T^{-2r/(1+s+ 2r)})$ in the finite-horizon setting in Theorem
\ref{thm4HKFH0000KvQ6A}. Note that the analysis in
\cite{YuanCai2010-MR2766857} just provides a constant rate $O(1)$ for
$\|\hat{\beta}-\beta^*\|_K^2$.

Next we provide some comments on the main assumptions in Section
\ref{section: main results}. For any bounded self-adjoint operators $A$ and
$B$ on $L^2(\mathcal{T})$, we write $A\succeq B$ (or $B \preceq A$)
if $A - B$ is positive semi-definite.

\begin{remark}
It is well understood \cite[Remark 1]{ChenTangFanGuo2022-MR4388513} that when
$0<r<1/2$, if $L_K^{\tau}\succeq \delta L_C^\nu$ for some $\tau,\delta,
\nu>0$ with $\tau+\nu\geq 1$ and $r = \tau/(2\tau+2\nu)$, then Assumption
\ref{assumption1} is guaranteed by any $\beta^*\in L^2 (\mathcal{T})$.
That is, with a carefully selected reproducing kernel $K$,
for the prediction error to converge,
the capacity assumption (Assumption \ref{assumption2}) can fully compensate
for the regularity assumption (Assumption \ref{assumption1}).
Note that the above condition $L_K^{\tau}\succeq \delta L_C^\nu$ puts some
requirement on the selection of the reproducing kernel $K$, but it does not
require the one-to-one matching between the eigenfunctions of $L_K$ and $L_C$,
respectively.

Similarly, if $L_C\succeq \delta L_K^{\nu}$ for some $\delta,
\nu>0$, then Assumption \ref{assumption4} with $0<r\leq 1/2$ is guaranteed when
$\beta^*\in L_K^{r(1+\nu)+\frac12} (L^2(\mathcal{T}))$. However, Assumption
\ref{assumption4} implies $\beta^*\in L_K^{1/2}(L^2(\mathcal{T}))$. Therefore,
the regularity assumption for the estimation error to converge,
can not be fully compensated for by the capacity assumption.
This demonstrates a significant difference between the prediction problems, and
the estimation problems in functional data analysis.
\end{remark}

In the literature of kernel-based learning algorithms
\cite{caponnetto2007optimal,bauer2007regularization, blanchard2010optimal,
lin2017distributed,GuoShi2019-MR3994990, GuoLinZhou2017-MR3672238,
WangShengMFC2022-OfficialSite,HeSunMFC2022-OfficialSite},
the capacity of the hypothesis space $\mathcal{H}_K$
is usually measured by covering numbers, or 
the effective dimension $\mathcal{N}_{L_K}(\lambda) =
\mathrm{Tr}((L_K+\lambda I)^{-1}L_K)$, where $I$ denotes the identity operator.
A typical capacity assumption takes the form $\mathcal{N}_{L_K}(\lambda) =
O(\lambda^{-s})$ (as $\lambda \downarrow 0$) for some $0<s<1$, and is well
understood. The following theorem shows that roughly speaking, Assumption
\ref{assumption2} with $0<s<1$ is comparable to the assumption
$\mathcal{N}_{\mathscr{L}_K}(\lambda) = O(\lambda^{-s})$ as $\lambda\downarrow
0$. The conclusion is well understood \cite{guo2019fast,guo2022rates}, but the proof through \eqref{expandTrace000GSDbo}, is to our best knowledge not available elsewhere.

\begin{thm}\label{thmExpandTrace000VtDr04}
Let $L$ be a positive semi-definite operator of trace class with infinite
positive eigenvalues $\{ \lambda_i=\lambda_i(L) \}_{i=1}^\infty$ arranged in
non-increasing order. Let $0<s<1$. We have
\begin{gather}\label{expandTrace000GSDbo}
\mathrm{Tr}(L^s) = \frac{\sin(\pi s)}{\pi} \int^\infty_0 \lambda^{s-1}
\mathcal{N}_L(\lambda)d \lambda.
\end{gather}
Consequently,
\begin{itemize}
\item[(a).] If $\mathrm{Tr}(L^s)<\infty$, then $\mathcal{N}_L(\lambda)
=O(\lambda^{-s})$ as $\lambda\downarrow 0$;
\item[(b).] If $\mathcal{N}_L(\lambda)=O(\lambda^{-s})$ as $\lambda\downarrow
0$, then $\mathrm{Tr}(L^{s+\epsilon})<\infty$ for any $\epsilon>0$;
\item[(c).] Moreover, for any fixed $0<s<1$, $\mathcal{N}_L(\lambda)=
O(\lambda^{-s})$ as $\lambda\downarrow 0$ if and only if $\lambda_i =
O(i^{-1/s})$ as $i\to\infty$.
\end{itemize}
\end{thm}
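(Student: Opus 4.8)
The plan is to prove the identity \eqref{expandTrace000GSDbo} directly from the spectral decomposition of $L$ together with a classical integral representation of the fractional power $t\mapsto t^{s}$, and then to read off (a)--(c) from it with elementary estimates.

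First I would record the scalar identity: for every $t>0$ and every $0<s<1$,
\[
t^{s}=\frac{\sin(\pi s)}{\pi}\int_{0}^{\infty}\lambda^{s-1}\,\frac{t}{t+\lambda}\,d\lambda .
\]
This follows from the substitution $\lambda=tu$, which turns the right-hand side into $t^{s}\int_{0}^{\infty}u^{s-1}(1+u)^{-1}\,du$, combined with the Beta-function evaluation $\int_{0}^{\infty}u^{s-1}(1+u)^{-1}\,du=\Gamma(s)\Gamma(1-s)=\pi/\sin(\pi s)$. Next, writing the positive eigenvalues of $L$ as $\{\lambda_{i}\}$ and noting that $(L+\lambda I)^{-1}L$ has eigenvalues $\lambda_{i}/(\lambda_{i}+\lambda)$ together with $0$ on $\ker L$, we have $\mathcal{N}_{L}(\lambda)=\sum_{i}\lambda_{i}/(\lambda_{i}+\lambda)\le \mathrm{Tr}(L)/\lambda<\infty$ for every $\lambda>0$. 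Applying the scalar identity with $t=\lambda_{i}$, summing over $i$, and interchanging $\sum_{i}$ with $\int_{0}^{\infty}$ (legitimate by Tonelli's theorem, since all the quantities are nonnegative) yields \eqref{expandTrace000GSDbo}.

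For part (a): each map $\lambda\mapsto\lambda_{i}/(\lambda_{i}+\lambda)$ is nonincreasing, so $\mathcal{N}_{L}$ is nonincreasing, and for any fixed $\lambda>0$, \eqref{expandTrace000GSDbo} gives $\mathrm{Tr}(L^{s})\ge\frac{\sin(\pi s)}{\pi}\int_{0}^{\lambda}u^{s-1}\mathcal{N}_{L}(u)\,du\ge\frac{\sin(\pi s)}{\pi s}\lambda^{s}\mathcal{N}_{L}(\lambda)$, i.e. $\mathcal{N}_{L}(\lambda)\le\frac{\pi s}{\sin(\pi s)}\mathrm{Tr}(L^{s})\lambda^{-s}$ for all $\lambda>0$, which is slightly stronger than the claimed $O(\lambda^{-s})$. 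For part (b): the case $s+\epsilon\ge1$ is immediate since $\lambda_{i}$ is bounded and $\mathrm{Tr}(L)<\infty$, so assume $0<s+\epsilon<1$; using $\mathcal{N}_{L}(\lambda)\le C\lambda^{-s}$ on $(0,1]$ (by hypothesis for small $\lambda$ and by monotonicity and boundedness of $\mathcal{N}_{L}$ on the remaining compact range) and $\mathcal{N}_{L}(\lambda)\le\mathrm{Tr}(L)\lambda^{-1}$ on $[1,\infty)$, applying \eqref{expandTrace000GSDbo} with exponent $s+\epsilon$ bounds $\mathrm{Tr}(L^{s+\epsilon})$ by a constant times $\int_{0}^{1}\lambda^{\epsilon-1}\,d\lambda+\int_{1}^{\infty}\lambda^{s+\epsilon-2}\,d\lambda<\infty$.

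Finally, part (c) does not need \eqref{expandTrace000GSDbo} and rests on two one-line estimates. For the ``only if'' direction, $\mathcal{N}_{L}(\lambda)\ge\sum_{j\le i}\lambda_{j}/(\lambda_{j}+\lambda)\ge i\,\lambda_{i}/(\lambda_{i}+\lambda)$; evaluating at $\lambda=\lambda_{i}$ gives $\mathcal{N}_{L}(\lambda_{i})\ge i/2$, which together with $\mathcal{N}_{L}(\lambda_{i})=O(\lambda_{i}^{-s})$ (valid for large $i$, since $\lambda_{i}\downarrow0$) forces $\lambda_{i}=O(i^{-1/s})$. For the ``if'' direction, from $\lambda_{i}\le A\,i^{-1/s}$ I split $\mathcal{N}_{L}(\lambda)=\sum_{i}\lambda_{i}/(\lambda_{i}+\lambda)$ at $N=\lfloor(A/\lambda)^{s}\rfloor$, bounding the first $N$ terms by $1$ each and the tail by $(A/\lambda)\sum_{i>N}i^{-1/s}\le(A/\lambda)\cdot\frac{s}{1-s}N^{1-1/s}$; since $1-1/s<0$ and $N$ is of order $(A/\lambda)^{s}$, both pieces are $O(\lambda^{-s})$ as $\lambda\downarrow0$. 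I expect the main difficulty to be organizational rather than conceptual: the single substantive ingredient is the Balakrishnan-type representation of the first two steps (with the a priori bound $\mathcal{N}_{L}(\lambda)\le\mathrm{Tr}(L)/\lambda$ and the Tonelli interchange making it rigorous), while in part (c) the only care needed is in choosing the split point $N$ of order $(A/\lambda)^{s}$ and in comparing the tail sum $\sum_{i>N}i^{-1/s}$ with $\int_{N}^{\infty}x^{-1/s}\,dx$.
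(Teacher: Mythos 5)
Your proposal is correct and follows essentially the same route as the paper: the Beta-function representation of $t^{s}$ applied to each eigenvalue and summed (with Tonelli justifying the interchange) gives \eqref{expandTrace000GSDbo}, monotonicity of $\mathcal{N}_L$ yields (a), and the same split of the integral combined with $\mathcal{N}_L(\lambda)\le \mathrm{Tr}(L)/\lambda$ yields (b). In part (c) you differ only in minor details---bounding the sum by splitting at $N\sim(A/\lambda)^{s}$ rather than comparing with $\int_0^\infty\bigl(1+\lambda u^{1/s}/C\bigr)^{-1}du$, and taking $\lambda=\lambda_i$ in the lower bound $\mathcal{N}_L(\lambda)\ge i\lambda_i/(\lambda_i+\lambda)$ instead of optimizing over $\lambda$ as the paper does---and both variants are valid and give the same conclusions.
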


The relations listed in Theorem \ref{thmExpandTrace000VtDr04} are summarized in
Figure \ref{FigRelations000cSiSsXH}.

\begin{figure}
\begin{center}
\includegraphics[width=7cm]{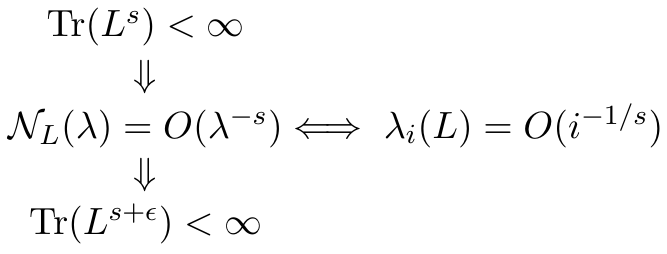}
\caption{Summary of the relations listed in Theorem
\ref{thmExpandTrace000VtDr04}.}
\label{FigRelations000cSiSsXH}
\end{center}
\end{figure}

The case $L$ has only finite positive eigenvalue is trivial, where
$\mathcal{N}_L(\lambda) = O(1)$ as $\lambda\downarrow 0$ and
$\mathrm{Tr}(L^s)<0$ for any $s>0$.

Note that on the one hand, when $L^s$ does not belong to the trace class
(equivalently, $\mathrm{Tr}(L^s)=\infty$), Equation (\ref{expandTrace000GSDbo})
implies that the integral on its right-hand side diverges to infinity. On the
other hand, when this integral diverges to infinity, $\mathrm{Tr}(L^s)=\infty$.

The bound $\mathcal{N}_L(\lambda) = O(\lambda^{-s})$ as $\lambda\downarrow 0$
does not guarantee $\mathrm{Tr}(L^s)<\infty$. For example,
$\lambda_i=i^{-1/s}$ implies
$\mathrm{Tr}(L^s)=\infty$, yet we still have
\begin{gather*}
\mathcal{N}_L(\lambda) = \sum^\infty_{i = 1}\frac{1}{1+\lambda i^{1/s}}\leq
\int^\infty_0 \frac{du}{1+\lambda u^{1/s}} = \lambda^{-s}\int^\infty_{0}
\frac{du}{1 + u^{1/s}} = O(\lambda^{-s}),\quad \mbox{as }\lambda\downarrow 0.
\end{gather*}

\begin{remark}\label{decreasingCounterExamp000pSLt}
Note that in general, for a non-increasing sequence $\{ a_i \}_{k=1}^{\infty}\subset(0,\infty)$,
Theorem \ref{thmExpandTrace000VtDr04} does not suggest the existence of some
$\gamma>0$ such that $a_k\sim k^{-\gamma}$. It is easy to construct a
non-increasing sequence that stays between $k^{-\gamma_1}$ and $k^{-\gamma_2}$ for
any $\gamma_1>\gamma_2>0$. To this end, we define $\{ b_k \}$ as $b_1=2$ and
$b_{k+1} = b_k^{\gamma_1/\gamma_2}$ for $k\geq 1$.
We define a function $f$ on $[2,\infty)$, piece-wisely by
$f(x)=b_k^{-\gamma_1}$ for $b_k\leq x<b_{k+1}$. Writing $a_k=f(k+1)$ to give
\begin{gather*}
\limsup_{k\to\infty} \frac{a_k}{k^{-\gamma_2}}=\liminf_{k\to\infty}
\frac{a_k}{k^{-\gamma_1}} = 1.
\end{gather*}
\end{remark}

\proof[Proof of Theorem \ref{thmExpandTrace000VtDr04}]
Write $B(u,v)$ the Euler beta function for $u,v>0$. Recall that for any $a>0$,
\begin{gather*}
\frac{\pi}{\sin(\pi s)} = B(s,1-s) = \int^\infty_0\frac{\xi^{s-1}}{1+\xi}d\xi
\overset{\xi=\lambda/a}{=\!=\!=\!=} a^{-s}\int^\infty_0
\frac{a\lambda^{s-1}}{a+\lambda}d\lambda.
\end{gather*}
So,
\begin{gather}\label{eqExpandAS0009gJKV}
a^s = \frac{\sin(\pi s)}{\pi}\int^\infty_0 \lambda^{s-1}\frac{a}{a+\lambda}
d\lambda.
\end{gather}
In (\ref{eqExpandAS0009gJKV}) substitute $a$ with all the positive eigenvalues
of $L$ respectively, and take the sum to obtain (\ref{expandTrace000GSDbo}).
Since $L$ is of trace class, $\mathcal{N}_L(\lambda)$ is well defined for each
$\lambda>0$. Obviously $\lambda^{s-1}$ and $\mathcal{N}_L(\lambda)$ are
non-increasing. So when $\mathrm{Tr}(L^s)<\infty$,
\begin{gather*}
\lambda^s \mathcal{N}_L(\lambda)=\lambda^{s-1}\mathcal{N}_L(\lambda)
\int^\lambda_0 d\xi< \int^\infty_0 \xi^{s-1}\mathcal{N}_L(\xi) d\xi =
\frac{\pi \mathrm{Tr}(L^s)}{\sin(\pi s)} < \infty,
\end{gather*}
which verifies (a). Now assume $\mathcal{N}_L(\lambda) = O(\lambda^{-s})$ as
$\lambda\downarrow 0$. Then there are two constants $0<\delta,C_1<\infty$ such
that $0\leq \mathcal{N}_L(\lambda)\leq C_1\lambda^{-s}$ for any $0<\lambda\leq
\delta$. So,
\begin{gather*}
\int^\delta_0\lambda^{s+\epsilon-1}\mathcal{N}_L(\lambda) d\lambda\leq
C_1\int^\delta_0 \lambda^{\epsilon - 1}d\lambda<\infty.
\end{gather*}
Since $L$ is in the trace class, when $s+\epsilon\geq 1$, (b) is trivial. Now
we assume $s+\epsilon<1$. Note that $\mathcal{N}_L(\lambda)\leq
\mathrm{Tr}(L)/\lambda$. So
\begin{gather*}
\int^\infty_{\delta} \lambda^{s+\epsilon-1}\mathcal{N}_L(\lambda) d\lambda
\leq \mathrm{Tr}(L) \int^\infty_{\delta} \lambda^{s+\epsilon - 2}
d\lambda<\infty.
\end{gather*}
The claim (b) is verified by combining the above two bounds together.

Now we verify item (c). When $\lambda_i=O(i^{-1/s})$, there is
some constant $C_2>0$ such that $\lambda_i\leq C_2 i^{-1/s}$ for all $i\geq 1$.
Since $u/(u+\lambda)$ is an increasing function of $u$,
\begin{gather*}
\mathcal{N}_L(\lambda)\leq \sum^\infty_{i=1}
\frac{C_2i^{-1/s}}{C_2i^{-1/s} + \lambda} = \sum^\infty_{i = 1}
\frac{1}{1+\lambda i^{1/s}/C_2}\\
\leq \int^\infty_0 \frac{du}{1 + \lambda u^{1/s}/C_2} =
\left( \frac{\lambda}{C_2} \right)^{-s}\int^\infty_0
\frac{du}{1+u^{1/s}} = O(\lambda^{-s}).
\end{gather*}
This verifies the ``if'' part. For the ``only-if'' part,
when $\mathcal{N}_L(\lambda)=O(\lambda^{-s})$,
it is easy to see that there is some $C_3>0$ such that $\mathcal{N}_L(\lambda)
\leq C_3\lambda^{-s}$ for every $0<\lambda\leq s\lambda_1/(1-s)$. Since for a
fixed $\lambda$, $\{ \lambda_i/(\lambda_i+\lambda) \}^\infty_{i=1}$ is a
non-increasing sequence, $i\lambda_i/(\lambda_i+\lambda)\leq
C_3 \lambda^{-s}$ for each $\lambda\in(0, s\lambda_1/(1-s)]$ and $i\geq 1$.
Therefore,
\begin{gather}
\label{infAchieved000AaFKGl}
i\lambda_i\leq \inf_{\lambda\in (0, s\lambda_1/(1-s)]} C_3\lambda^{-s}
(\lambda_i+\lambda) = C_3 \lambda_i^{1-s}s^s(1-s)^{1-s},
\end{gather}
where the infimum is achieved at $\lambda=s\lambda_i/(1-s) \in (0,
s\lambda_1/(1-s)]$. From (\ref{infAchieved000AaFKGl}) one obtains
$\lambda_i=O(i^{-1/s})$ as $i\to\infty$, and completes the proof.
\qed

Assumption \ref{assumption3} is quite often adopted in the literature of
functional linear regression. For example, if $X$ is a Gaussian process, then
(\ref{eqAssumption3}) is satisfied with $c_{\mathsf{M}}=3$. See
\cite{YuanCai2010-MR2766857, CaiYuan2012-MR3010906, FanLvShi2019-JOURNALHOMEPAGE}.

\section{Error Decomposition}\label{section: error decomposition}

Our analysis starts with error decomposition. By (\ref{eqvDefIteration0006PgUy}), i.e., the equivalent expression of algorithm
(\ref{defIteration000rrCDVx}), for any $t\geq 1$,
\begin{align}
\hat{\beta}_{t+1}-\beta^* &=\hat{\beta}_t-\beta^*-\eta_t(\langle
\hat{\beta}_t,x_t\rangle_2-y_t)L_K x_t\nonumber\\
&=(I-\eta_t L_KL_C)(\hat{\beta}_t-\beta^*)+\mathcal{B}_t,\label{eqn: difference}
\end{align}
where $\mathcal{B}_t=\eta_t(y_t-\langle\hat{\beta}_t,x_t\rangle_2)
L_Kx_t+\eta_t L_K L_C(\hat{\beta}_t-\beta^*)$, of which the second term is the
conditional mean of the first term,
\begin{align}
\mathbb{E}_{z_t}\left[ (y_t - \langle \hat{\beta}_t, x_t \rangle_2)
L_Kx_t \right]
&= \mathbb{E}_{x_t}\left[ \langle \beta^*-\hat{\beta}_t, x_t \rangle_2 L_K x_t
\right] \nonumber\\
&= L_KL_C(\beta^* - \hat{\beta}_t). \label{B_meanZero000uQjHu}
\end{align}
Where $z_t=(x_t,y_t)$, and the expectations $\mathbb{E}_{z_t}$ and
$\mathbb{E}_{x_t}$ are taken with respect to the (conditional) distributions of
$z_t = (x_t,y_t)$ and $x_t$, respectively. Equation (\ref{B_meanZero000uQjHu})
shows that $\mathcal{B}_t$ is mean-zero, $\bE_{z_t}[\B_t]=0$.
Then applying induction to (\ref{eqn: difference}) implies that for any $t\geq
1$,
\begin{align}\label{eqn: error decomposition}
\hat{\beta}_{t+1}-\beta^*=- \left[ \prod_{k=1}^t(I-\eta_k L_KL_C) \right]
\beta^*+\sum_{k=1}^t
\left[ \prod_{j=k+1}^t(I-\eta_jL_K L_C) \right] \mathcal{B}_k,
\end{align}
where and in the following, the product of an empty set of operators is defined
as the identity operator, $\prod_{j=t+1}^t(I-\eta_jL_K L_C)=I$. Recall that
$\mathscr{L}_K = L_C^{1/2}L_KL_C^{1/2}$.

\begin{prop}\label{prop: error decomposition in L2 decreasing step size}
Define $\{ \hat{\varphi}_t=\langle \hat{\beta}_t,\cdot \rangle_2: t\geq 1 \}$
through (\ref{defIteration000rrCDVx}). Then for any $t\geq 0$,
\begin{align}
&\mathbb{E} \left[ \mathcal{E} (\hat{\varphi}_{t + 1}) \right]
\leq \left\| \left[ \prod^t_{k=1}(I - \eta_k \mathscr{L}_K) \right] L_C^{1/2}
\beta^* \right\|_2^2 \nonumber\\
&+\sum^{t}_{k = 1}\eta_k^2 \left( \sigma^2 + \mathbb{E} \sqrt{\mathbb{E}_{x_k}
\langle \beta^*-\hat{\beta}_k, x_k \rangle_2^4}\right)
\left[ \mathbb{E}\left\| \left[ \prod^t_{j = k+1}
(I - \eta_j \mathscr{L}_K) \right] L_C^{1/2} L_K x_k \right\|_2^4 \right]^{1/2},
\label{eqnEDLCase2000ZykHEW}
\end{align}
where the sum of an empty set is defined as zero.
\end{prop}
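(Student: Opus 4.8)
The plan is to start from the exact error-decomposition identity \eqref{eqn: error decomposition} and apply $L_C^{1/2}$ to both sides, so that
\[
L_C^{1/2}(\hat{\beta}_{t+1}-\beta^*) = -\Bigl[\textstyle\prod_{k=1}^t(I-\eta_k\mathscr{L}_K)\Bigr]L_C^{1/2}\beta^* + \sum_{k=1}^t\Bigl[\textstyle\prod_{j=k+1}^t(I-\eta_j\mathscr{L}_K)\Bigr]L_C^{1/2}\mathcal{B}_k,
\]
using the algebraic fact that $L_C^{1/2}(I-\eta L_KL_C) = (I-\eta L_C^{1/2}L_KL_C^{1/2})L_C^{1/2} = (I-\eta\mathscr{L}_K)L_C^{1/2}$, which commutes the square root through the product. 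Since $\mathcal{E}(\hat{\varphi}_{t+1}) = \|L_C^{1/2}(\hat{\beta}_{t+1}-\beta^*)\|_2^2$ by \eqref{excessGEbyNorm000itb0ac}, I then take expectations of the squared $L^2$-norm of the right-hand side.

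The key step is to kill the cross terms. Write $\xi_k := \bigl[\prod_{j=k+1}^t(I-\eta_j\mathscr{L}_K)\bigr]L_C^{1/2}\mathcal{B}_k$ for the noise-propagation terms. Conditioning on $z_1,\dots,z_{k-1}$ (equivalently on $\hat{\beta}_k$), the operator $\bigl[\prod_{j=k+1}^t(I-\eta_j\mathscr{L}_K)\bigr]L_C^{1/2}$ in $\xi_k$ depends only on $\hat\beta_k$ and later randomness is irrelevant, while $\mathbb{E}_{z_k}[\mathcal{B}_k]=0$ by \eqref{B_meanZero000uQjHu}; hence for $k<\ell$, $\mathbb{E}\langle \xi_k,\xi_\ell\rangle_2 = 0$ after first conditioning on $z_1,\dots,z_{\ell-1}$ and using $\mathbb{E}_{z_\ell}[\mathcal{B}_\ell]=0$ (the factor $\xi_k$ and the propagator in $\xi_\ell$ are then measurable). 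Likewise the cross term between the bias term and each $\xi_k$ vanishes. This reduces the squared norm to the deterministic bias term plus $\sum_{k=1}^t \mathbb{E}\|\xi_k\|_2^2$.

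Next I bound $\mathbb{E}\|\xi_k\|_2^2$. Recall $\mathcal{B}_k = \eta_k(y_k-\langle\hat\beta_k,x_k\rangle_2)L_Kx_k + \eta_k L_KL_C(\hat\beta_k-\beta^*)$, which is $\eta_k$ times the deviation of $(y_k-\langle\hat\beta_k,x_k\rangle_2)L_Kx_k$ from its conditional mean. Conditioning on $\hat\beta_k$ and using that variance is bounded by the second moment, $\mathbb{E}_{z_k}\|\xi_k\|_2^2 \le \eta_k^2\,\mathbb{E}_{z_k}\bigl\|\bigl[\prod_{j=k+1}^t(I-\eta_j\mathscr{L}_K)\bigr]L_C^{1/2}L_Kx_k\bigr\|_2^2 (y_k-\langle\hat\beta_k,x_k\rangle_2)^2$. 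Then Cauchy--Schwarz in $z_k$ separates the two factors into the square roots of fourth moments, giving $\eta_k^2\bigl[\mathbb{E}_{z_k}(y_k-\langle\hat\beta_k,x_k\rangle_2)^4\bigr]^{1/2}\bigl[\mathbb{E}_{z_k}\|\cdots L_C^{1/2}L_Kx_k\|_2^4\bigr]^{1/2}$, after which taking the outer expectation and noting $(y_k-\langle\hat\beta_k,x_k\rangle_2) = \varepsilon_k + \langle\beta^*-\hat\beta_k,x_k\rangle_2$ with the elementary inequality $(a+b)^4 \le 8a^4+8b^4$ (or the sharper $\mathbb{E}(a+b)^2$-type split matching the claimed $\sigma^2 + \mathbb{E}\sqrt{\mathbb{E}_{x_k}\langle\cdot\rangle^4}$ form) yields exactly \eqref{eqnEDLCase2000ZykHEW}. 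The main obstacle is bookkeeping the conditioning carefully: ensuring the propagator operators in $\xi_k$ and $\xi_\ell$ are measurable with respect to the right sub-sigma-algebra so that the martingale-type orthogonality and the conditional Cauchy--Schwarz are both valid, and matching the noise-plus-approximation decomposition of the residual to the precise $\sigma^2 + \mathbb{E}\sqrt{\mathbb{E}_{x_k}\langle\beta^*-\hat\beta_k,x_k\rangle_2^4}$ form stated in the proposition rather than a cruder bound.
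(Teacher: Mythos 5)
Your overall route is the paper's: apply $L_C^{1/2}$ to \eqref{eqn: error decomposition} via the commutation $L_C^{1/2}(I-\eta_k L_KL_C)=(I-\eta_k\mathscr{L}_K)L_C^{1/2}$, identify the excess error through \eqref{excessGEbyNorm000itb0ac}, kill all cross terms by the martingale property $\mathbb{E}_{z_k}[\mathcal{B}_k]=0$ from \eqref{B_meanZero000uQjHu}, and bound each $\mathbb{E}\|\xi_k\|_2^2$ by the conditional second moment of the uncentered term $\eta_k(y_k-\langle\hat\beta_k,x_k\rangle_2)L_Kx_k$. Up to that point your argument matches the paper's proof of Proposition \ref{prop: error decomposition in L2 decreasing step size}.

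The last step, as you primarily state it, has a genuine flaw. Applying Cauchy--Schwarz in $z_k$ to $(y_k-\langle\hat\beta_k,x_k\rangle_2)^2\,\|\cdots L_C^{1/2}L_Kx_k\|_2^2$ produces the factor $\bigl[\mathbb{E}_{z_k}(y_k-\langle\hat\beta_k,x_k\rangle_2)^4\bigr]^{1/2}$, and since $y_k=\langle\beta^*,x_k\rangle_2+\varepsilon_k$ this involves $\mathbb{E}[\varepsilon_k^4]$; the model only assumes $\varepsilon$ is centered with finite variance, so this fourth moment may be infinite, and even when it is finite your split $(a+b)^4\le 8a^4+8b^4$ (or Minkowski in $L^4$) yields a constant of the form $c_1(\mathbb{E}\varepsilon^4)^{1/2}+c_2\sqrt{\mathbb{E}_{x_k}\langle\beta^*-\hat\beta_k,x_k\rangle_2^4}$ with $c_1,c_2>1$ and $(\mathbb{E}\varepsilon^4)^{1/2}\ge\sigma^2$, not the stated $\sigma^2+\mathbb{E}\sqrt{\mathbb{E}_{x_k}\langle\beta^*-\hat\beta_k,x_k\rangle_2^4}$, so \eqref{eqnEDLCase2000ZykHEW} does not follow ``exactly.'' The correct ordering, which you only gesture at parenthetically and which is what the paper does, is to integrate out the noise first: $\mathbb{E}_{\varepsilon_k}[(y_k-\langle\hat\beta_k,x_k\rangle_2)^2]=\sigma^2+\langle\beta^*-\hat\beta_k,x_k\rangle_2^2$ (legitimate because $\varepsilon_k$ is independent of $x_k$, of $\hat\beta_k$, and of the propagator factor), then split the sum into the $\sigma^2$ part and the $\langle\beta^*-\hat\beta_k,x_k\rangle_2^2$ part, bound the former using $\mathbb{E}\|\cdot\|_2^2\le(\mathbb{E}\|\cdot\|_2^4)^{1/2}$, and apply Cauchy--Schwarz over $x_k$ only to the latter, noting that $\|\cdots L_C^{1/2}L_Kx_k\|_2$ depends on $x_k$ alone so its fourth moment factors out of the expectation over $z_1,\dots,z_{k-1}$. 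With that reordering your proof coincides with the paper's; without it, the bound you reach is weaker and needs an assumption the paper does not make.
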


\begin{proof}
The case $t=0$ is trivial and we assume $t\geq 1$. For any $k$,
\begin{equation*}
L_C^{1/2}(I-\eta_k L_KL_C)=L_C^{1/2}-\eta_k \mathscr{L}_KL_C^{1/2}
=\left(I-\eta_k \mathscr{L}_K\right)L_C^{1/2}.
\end{equation*}
From (\ref{eqn: error decomposition}),
\begin{equation*}
L_C^{1/2}(\hat{\beta}_{t+1}-\beta^*)=-\left[\prod_{k=1}^t\left(I-\eta_k
\mathscr{L}_K\right) \right] L_C^{1/2}\beta^*+
\sum_{k=1}^t \left[ \prod_{j=k+1}^t\left(I-\eta_j \mathscr{L}_K\right) \right]
L_C^{1/2}\mathcal{B}_k.
\end{equation*}

It follows from (\ref{excessGEbyNorm000itb0ac}) that
\begin{align}
&\bE \left[\E (\hat{\varphi}_{t+1})\right]=\bE
\left[\left\| L_C^{1/2}(\hat{\beta}_{t+1}-\beta^*)\right\|_{2}^2\right]
\nonumber \\
=&\bE \left[\left\|-\left[\prod_{k=1}^t\left(I-\eta_k \mathscr{L}_K\right)
\right] L_C^{1/2}\beta^*+
\sum_{k=1}^t \left[ \prod_{j=k+1}^t\left(I-\eta_j \mathscr{L}_K\right) \right]
L_C^{1/2}\mathcal{B}_k\right\|_{2}^2\right] \nonumber \\
=&\left\| \left[ \prod_{k=1}^t(I-\eta_k \mathscr{L}_K) \right]
L_C^{1/2}\beta^*\right\|_{2}^2+
\bE \left[\left\|\sum_{k=1}^t \left[ \prod_{j=k+1}^t(I-\eta_j \mathscr{L}_K)
\right] L_C^{1/2}\mathcal{B}_k\right\|_{2}^2\right]\nonumber \\
=&:\Upsilon^\mathsf{E}_1 + \Upsilon^\mathsf{E}_2
\label{errDecompProp100sbQQck}
\end{align}
where in the expansion of the squared norm, the cross terms vanish
because $\mathbb{E}[\mathcal{B}_k] =
\mathbb{E}_{z_k} [\mathcal{B}_k] = 0$. The notations $\Upsilon^\mathsf{E}_1$
and $\Upsilon^\mathsf{E}_2$ are used only within this proof. When
$k\neq s$, without loss of generality assume $k>s$. Recall that then
$\mathcal{B}_s$ is independent of $z_t$. So,
\begin{gather*}
\mathbb{E}_{z_k} \left< \left[ \prod^t_{j = k+1} (I - \eta_j
\mathscr{L}_K) \right] L_C^{1/2} \mathcal{B}_k, \left[ \prod^t_{j = s+1} (I -
\eta_j \mathscr{L}_K) \right] L_C^{1/2} \mathcal{B}_s \right>_2 = 0.
\end{gather*}
So we expand the squared norm in $\Upsilon^\mathsf{E}_2$,
\begin{align}
\Upsilon^\mathsf{E}_2&=\sum_{k=1}^t\bE \left[\left\|
\left[ \prod_{j=k+1}^t(I-\eta_j \mathscr{L}_K)
\right] L_C^{1/2}\mathcal{B}_k\right\|_{2}^2\right] \nonumber \\
&\leq \sum_{k=1}^t\eta_k^2\bE \left[
\mathbb{E}_{\varepsilon_k}[ (y_k-\langle\hat{\beta}_k,x_k\rangle_2)^2]
\left\| \left[ \prod_{j=k+1}^t(I-\eta_j \mathscr{L}_K) \right] L_C^{1/2} L_K
x_k\right\|_{2}^2 \right], \label{zeroMeanTrick000tCqKUj}
\end{align}
where the inequality is obtained by recalling the zero-mean structure
$\mathcal{B}_t=\eta_t(y_t - \langle \hat{\beta}, x_t \rangle_2)L_Kx_t -
\mathbb{E}_{x_t}[\eta_t(y_t - \langle \hat{\beta}, x_t \rangle_2)L_Kx_t]$
explained in (\ref{B_meanZero000uQjHu}).
Furthermore, we recall that
$\mathbb{E}_{\varepsilon_k}[ (y_k-\langle\hat{\beta}_k,x_k\rangle_2)^2]=
\sigma^2+\langle \beta^*-\hat{\beta}_k,x_k \rangle_2^2$ and obtain
\begin{align}
\Upsilon^\mathsf{E}_2\leq \,& \sigma^2\sum_{k=1}^t\eta_k^2\bE \left[
\left\| \left[ \prod_{j=k+1}^t(I-\eta_j \mathscr{L}_K) \right] L_C^{1/2} L_K
x_k\right\|_{2}^2 \right] \nonumber\\
&+\sum_{k=1}^t\eta_k^2\bE \left[
\langle \beta^*-\hat{\beta}_k,x_k \rangle_2^2
\left\| \left[ \prod_{j=k+1}^t(I-\eta_j \mathscr{L}_K) \right] L_C^{1/2} L_K
x_k\right\|_{2}^2 \right].
\label{unifErrBound1000wU395x}
\end{align}
The proof is complete by applying Cauchy-Schwarz inequality to the right-hand
side of (\ref{unifErrBound1000wU395x}).
\end{proof}

Now we consider the error decomposition for estimation error.

\begin{prop}\label{propBoundHKNorm0005weUo}
Let $\{\hat{\beta}_t\}$ be defined by (\ref{defIteration000rrCDVx}). Assume
$\beta^*\in\mathcal{H}_K$. We have the following error decomposition for any
$t\geq 0$.
\begin{align}
\mathbb{E} & \left[ \| \hat{\beta}_{t+1} - \beta^* \|_K^2 \right] \leq \left\|
\left[ \prod_{k=1}^t(I-\eta_k L_KL_C) \right] \beta^*\right\|_K^2 \nonumber \\
& + \sum^t_{k = 1} \eta_k^2 \left( \sigma^2 + \mathbb{E} \sqrt{\mathbb{E}_{x_k}
\langle \beta^* - \hat{\beta}_k, x_k \rangle_2^4} \right)
\left( \mathbb{E} \left\| \left[ \prod^t_{j = k+1} (I - \eta_jL_KL_C)
\right] L_K x_k \right\|_K^4 \right)^{1/2}
\label{prop: error decomposition in HK with decreasing step size}
\end{align}
\end{prop}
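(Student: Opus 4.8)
The plan is to mirror the proof of Proposition~\ref{prop: error decomposition in L2 decreasing step size}, but to work directly in the $\mathcal{H}_K$ norm instead of first applying $L_C^{1/2}$ to pass into $L^2(\mathcal{T})$. Since $\beta^*\in\mathcal{H}_K$ by assumption, and since $\hat{\beta}_1=0$ together with the fact that $L_K$ maps $L^2(\mathcal{T})$ into $\mathcal{H}_K$ forces every $\hat{\beta}_t\in\mathcal{H}_K$ (hence every $\mathcal{B}_k\in\mathcal{H}_K$ and every $L_KL_C$ factor maps $\mathcal{H}_K$ to itself), all the $\mathcal{H}_K$ norms appearing below are finite. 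First I would dispose of the trivial case $t=0$ and assume $t\geq 1$. Starting from the error decomposition (\ref{eqn: error decomposition}), I take squared $\mathcal{H}_K$ norms and expectations. The operators $\prod_{j=k+1}^t(I-\eta_j L_KL_C)$ are deterministic (the step-sizes and $L_KL_C$ are deterministic), and $\mathbb{E}[\mathcal{B}_k]=\mathbb{E}_{z_k}[\mathcal{B}_k]=0$ by (\ref{B_meanZero000uQjHu}), so the cross term between the deterministic ``bias'' vector $-[\prod_{k=1}^t(I-\eta_kL_KL_C)]\beta^*$ and $\sum_k[\prod_{j=k+1}^t(I-\eta_jL_KL_C)]\mathcal{B}_k$ vanishes, leaving
\begin{equation*}
\mathbb{E}\big[\|\hat{\beta}_{t+1}-\beta^*\|_K^2\big]=\Big\|\Big[\prod_{k=1}^t(I-\eta_kL_KL_C)\Big]\beta^*\Big\|_K^2+\mathbb{E}\Big[\Big\|\sum_{k=1}^t\Big[\prod_{j=k+1}^t(I-\eta_jL_KL_C)\Big]\mathcal{B}_k\Big\|_K^2\Big].
\end{equation*}

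Next I would expand the second (variance) term. For indices $k\neq s$, say $k>s$, the vector $\mathcal{B}_s$ is a deterministic function of $z_1,\dots,z_s$, while $\mathbb{E}_{z_k}[\mathcal{B}_k]=0$ conditionally on $z_1,\dots,z_{k-1}$; since the product operators are deterministic, the $\mathcal{H}_K$ inner product of the $k$-th and $s$-th summands has zero expectation, exactly as in the proof of Proposition~\ref{prop: error decomposition in L2 decreasing step size}. Only the diagonal terms survive, giving $\sum_{k=1}^t\mathbb{E}\big\|A_k\mathcal{B}_k\big\|_K^2$ with $A_k:=\prod_{j=k+1}^t(I-\eta_jL_KL_C)$. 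Writing $\mathcal{B}_k=\eta_k(y_k-\langle\hat{\beta}_k,x_k\rangle_2)L_Kx_k-\mathbb{E}_{x_k}[\eta_k(y_k-\langle\hat{\beta}_k,x_k\rangle_2)L_Kx_k]$ and using that subtracting a conditional mean does not increase the second moment, I obtain (as in (\ref{zeroMeanTrick000tCqKUj})) the bound $\mathbb{E}_{z_k}\|A_k\mathcal{B}_k\|_K^2\leq\eta_k^2\,\mathbb{E}_{z_k}\big[(y_k-\langle\hat{\beta}_k,x_k\rangle_2)^2\|A_kL_Kx_k\|_K^2\big]$, and then $\mathbb{E}_{\varepsilon_k}[(y_k-\langle\hat{\beta}_k,x_k\rangle_2)^2]=\sigma^2+\langle\beta^*-\hat{\beta}_k,x_k\rangle_2^2$ splits this into a $\sigma^2$-term and a fourth-order term.

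Finally, conditioning on $z_1,\dots,z_{k-1}$ — under which $\hat{\beta}_k$ is frozen and $x_k$ is distributed like $X$, so that $\|A_kL_Kx_k\|_K$ is a function of $x_k$ alone — a Cauchy--Schwarz in $x_k$ bounds the fourth-order term by $\sqrt{\mathbb{E}_{x_k}\langle\beta^*-\hat{\beta}_k,x_k\rangle_2^4}\cdot(\mathbb{E}\|A_kL_Kx_k\|_K^4)^{1/2}$, while Jensen bounds the factor $\mathbb{E}\|A_kL_Kx_k\|_K^2$ of the $\sigma^2$-term by $(\mathbb{E}\|A_kL_Kx_k\|_K^4)^{1/2}$; taking the outer expectation over $z_1,\dots,z_{k-1}$, combining the two, and summing over $k$ yields precisely (\ref{prop: error decomposition in HK with decreasing step size}). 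The computation is essentially routine; the only point needing care — and thus the main obstacle — is the measurability bookkeeping: one must check that $A_k$ is deterministic, that $\|A_kL_Kx_k\|_K$ depends on $x_k$ only and is independent of the past (with $L_Kx_k\in\mathcal{H}_K$ and $\|L_Kx_k\|_K\leq\kappa\|x_k\|_2<\infty$), and that the martingale-difference orthogonality and the conditional Cauchy--Schwarz are applied with respect to the correct $\sigma$-algebra, so that all the interchanges of norm, operator, and conditional expectation are legitimate.
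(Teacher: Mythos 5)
Your proposal is correct and follows essentially the same route as the paper's proof: the decomposition (\ref{eqn: error decomposition}), vanishing cross terms via $\mathbb{E}_{z_k}[\mathcal{B}_k]=0$, the zero-mean (variance $\leq$ second moment) trick from (\ref{zeroMeanTrick000tCqKUj}), the split $\mathbb{E}_{\varepsilon_k}[(y_k-\langle\hat{\beta}_k,x_k\rangle_2)^2]=\sigma^2+\langle\beta^*-\hat{\beta}_k,x_k\rangle_2^2$, and conditional Cauchy--Schwarz plus Jensen, all carried out directly in the $\mathcal{H}_K$ norm. The extra remarks on $\hat{\beta}_t\in\mathcal{H}_K$ and the measurability bookkeeping are harmless additions that the paper leaves implicit.
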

\proof
The proof parallels that of Proposition \ref{prop: error
decomposition in L2 decreasing step size}. By (\ref{eqn: error decomposition})
and the fact $\mathbb{E}[\mathcal{B}_k]=0$ we have
\begin{align}
&\bE \left[\| \hat{\beta}_{t+1}-\beta^*\|_K^2\right]=\bE \left[\left\|-\left[
\prod_{k=1}^t(I-\eta_tL_KL_C)\right]\beta^*+
\sum_{k=1}^t \left[ \prod_{j=k+1}^t(I-\eta_j L_KL_C) \right]
\mathcal{B}_k\right\|_K^2\right] \nonumber\\
&=\left\| \left[ \prod_{k=1}^t(I-\eta_k L_KL_C) \right] \beta^*\right\|_K^2+
\bE \left[\left\|\sum_{k=1}^t \left[ \prod_{j=k+1}^t(I-\eta_j L_KL_C) \right]
\mathcal{B}_k\right\|_K^2\right],
\label{eqn: the first step of error decomposition in HK}
\end{align}
where the second term on the right-hand side is further estimated with the
trick we used in (\ref{zeroMeanTrick000tCqKUj}).
\begin{align*}
& \bE \left[\left\|\sum_{k=1}^t \left[ \prod_{j=k+1}^t(I-\eta_j L_KL_C) \right]
\mathcal{B}_k\right\|_K^2\right]
=\sum_{k=1}^t\mathbb{E}\left[ \left\|
\left[ \prod_{j=k+1}^t(I-\eta_j L_KL_C) \right]
\mathcal{B}_k
\right\|_K^2 \right]\\
\leq & \sum^t_{k = 1} \eta_k^2 \mathbb{E} \left[ (y_k - \langle \hat{\beta}_k,
x_k \rangle_2)^2 \left\| \left[ \prod^t_{j = k+1} (I - \eta_j L_KL_C) \right]
L_Kx_k \right\|_K^2 \right]\\
\leq & \sum^t_{k = 1} \eta_k^2 \left( \sigma^2 + \mathbb{E}
\sqrt{\mathbb{E}_{x_k} \langle \beta^* - \hat{\beta}_k, x_k \rangle_2^4} \right)
\left( \mathbb{E} \left\| \left[ \prod^t_{j = k+1} (I - \eta_jL_KL_C)
\right] L_K x_k \right\|_K^4 \right)^{1/2}.
\end{align*}
The proof is complete.
\qed

\section{Bounding the Excess Generalization Error}\label{section: bounding the excess generalization error}

In this section, we study the excess generalization error
$\mathcal{E}[\hat{\varphi}]$ and prove Theorems \ref{thm: decreasing step size
capacity dependent L2} and \ref{thm: convergence rate in L2 constant step size
and capacity independent}. This is achieved by first estimating the expected
error $\mathbb{E}[\mathcal{E}(\hat{\varphi}_{t+1})]$ for general step-sizes in
Theorem \ref{excessErrorGeneralTheorem000zzlJ7M}, and then substitute specific
settings of step-sizes into the obtained bound.

\subsection{Analysis with General Step-sizes}\label{subsection: analysis with general step-sizes}

In this subsection, we study the excess generalization error with minimal
assumptions on the step-sizes. The following Lemma
\ref{lem: elementary lemma} is a typical application of the spectral theorem on
the polynomial $u^\alpha \prod^b_{j=a}(1-\eta_j u)$ for $u\geq 0$. For a
detailed proof, see e.g.\ \cite[Lemma 2]{ChenTangFanGuo2022-MR4388513}.
Note that when $b<a$, the sum $\sum^b_{j=a}\eta_j$ is defined to be zero.

\begin{lem}\label{lem: elementary lemma}
Let $A$ be a compact positive semi-definite operator on a Hilbert space.
Let $\{ \eta_i \}\subset (0, 1/\|A\|_{\mathsf{op}}]$.
Then for any $a\leq b$ and $\alpha>0$, we have
\begin{gather}\label{eqn: part 2 of the elementary lemma}
\left\| A^\alpha\prod^b_{j=a}(I - \eta_j A)
\right\|^2_{\mathsf{op}} \leq \frac{(\alpha/e)^{2\alpha} +
\|A\|^{2\alpha}_{\mathsf{op}}}{1 + \left( \sum^b_{j=a} \eta_j
\right)^{2\alpha}}.
\end{gather}
When $\alpha = 0$, we have
\begin{gather}\label{prodNormBoundAlphaEqZero000u5QQwy}
\left\| \prod^b_{j=a}(I - \eta_j A) \right\|^2_{\mathsf{op}}\leq 1.
\end{gather}
In particular, when $a>b$, recall that the product
$\prod^b_{j=a}(I - \eta_j A)$ is the identity operator.
So the above estimates
(\ref{eqn: part 2 of the elementary lemma}) and
(\ref{prodNormBoundAlphaEqZero000u5QQwy}) still hold true.\qed
\end{lem}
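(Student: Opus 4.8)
The plan is to reduce the operator-norm bound to a one-variable estimate via the spectral theorem, then bound the resulting scalar polynomial in two complementary ways and combine them. Since $A$ is compact, self-adjoint and positive semi-definite, the continuous functional calculus gives
\[
\left\| A^\alpha\prod_{j=a}^b(I-\eta_j A)\right\|_{\mathsf{op}}
=\sup_{u\in\sigma(A)}\left|\,u^\alpha\prod_{j=a}^b(1-\eta_j u)\,\right|,
\]
where the spectrum $\sigma(A)$ lies in $[0,\|A\|_{\mathsf{op}}]$. For $u$ in this interval and $\eta_j\le 1/\|A\|_{\mathsf{op}}$ we have $0\le 1-\eta_j u\le 1$, hence $0\le\prod_{j=a}^b(1-\eta_j u)\le 1$; this already disposes of the case $\alpha=0$, and also of the degenerate case $a>b$, where the product is the identity, $\sum_{j=a}^b\eta_j=0$, and the right-hand side equals $(\alpha/e)^{2\alpha}+\|A\|_{\mathsf{op}}^{2\alpha}\ge\|A\|_{\mathsf{op}}^{2\alpha}$.

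For $\alpha>0$ I would derive two bounds on $g(u):=u^\alpha\prod_{j=a}^b(1-\eta_j u)$ valid for all $u\in\sigma(A)$. The first is the trivial bound $g(u)\le\|A\|_{\mathsf{op}}^\alpha$, using $u\le\|A\|_{\mathsf{op}}$ together with $\prod_{j=a}^b(1-\eta_j u)\le 1$. The second is the exponential bound: writing $S:=\sum_{j=a}^b\eta_j$ and applying $1-x\le e^{-x}$ for $x\ge 0$ to each factor gives $g(u)\le u^\alpha e^{-Su}$, and a one-line calculus computation ($\frac{d}{du}u^\alpha e^{-Su}=u^{\alpha-1}e^{-Su}(\alpha-Su)$, maximizer $u=\alpha/S$) yields $\sup_{u\ge 0}u^\alpha e^{-Su}=(\alpha/(eS))^\alpha$. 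Therefore $g(u)^2\le\min\{(\alpha/e)^{2\alpha}S^{-2\alpha},\ \|A\|_{\mathsf{op}}^{2\alpha}\}$ for every $u\in\sigma(A)$.

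The last step is to merge the two estimates into the stated form. I would record the elementary inequality $\min\{p/x,\,q\}\le (p+q)/(1+x)$ for all $p,q\ge 0$ and $x>0$, proved by a two-case check: if $p\le qx$ the left side is $p/x$ and the inequality rearranges to $p\le qx$, while if $p> qx$ the left side is $q$ and it rearranges to $qx\le p$; in each case this is precisely the hypothesis of that case. Applying it with $p=(\alpha/e)^{2\alpha}$, $q=\|A\|_{\mathsf{op}}^{2\alpha}$, $x=S^{2\alpha}$ gives $g(u)^2\le\big((\alpha/e)^{2\alpha}+\|A\|_{\mathsf{op}}^{2\alpha}\big)\big/\big(1+S^{2\alpha}\big)$, and taking the supremum over $\sigma(A)$ finishes the proof.

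The only point that needs a small idea rather than routine computation is the final combination, and it is also what explains the somewhat unusual shape of the right-hand side: the sharp exponential bound $(\alpha/(eS))^{2\alpha}$ blows up as $S\downarrow 0$ (in particular when $a>b$), so one genuinely has to interpolate it with the crude bound $\|A\|_{\mathsf{op}}^{2\alpha}$, and the inequality $\min\{p/x,q\}\le(p+q)/(1+x)$ is exactly the device that carries out this interpolation while keeping the bound in closed form.
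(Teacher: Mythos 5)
Your proposal is correct and follows exactly the route the paper has in mind: it states that the lemma is ``a typical application of the spectral theorem on the polynomial $u^{\alpha}\prod_{j=a}^{b}(1-\eta_j u)$'' and defers details to the cited reference, and your argument—functional-calculus reduction to the scalar function, the crude bound $u^{\alpha}\le\|A\|_{\mathsf{op}}^{\alpha}$, the bound $u^{\alpha}e^{-Su}\le(\alpha/(eS))^{\alpha}$ via $1-x\le e^{-x}$, and the merge through $\min\{p/x,q\}\le(p+q)/(1+x)$—is precisely that standard proof, including the correct handling of the cases $\alpha=0$ and $a>b$.
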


\begin{lem}\label{compactOperatorBd000R9EK3GO}
Let $X$ be the random function in Model (\ref{model0002LJmHH}). Let $W$ be a
compact operator on $L^2(\mathcal{T})$. We take Assumption \ref{assumption3} to
obtain
\begin{gather*}
\mathbb{E}\left[ \left\| WX \right\|_2^4 \right]\leq c_{\mathsf{M}} \left(
\mathbb{E} \left[ \left\| WX \right\|_2^2 \right] \right)^2.
\end{gather*}
\end{lem}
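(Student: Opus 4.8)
The plan is to diagonalize the positive semi-definite compact operator $W^{*}W$ and reduce the statement to a coordinatewise application of Assumption~\ref{assumption3}. Since $W$ is compact, $W^{*}W$ is compact, self-adjoint and positive semi-definite, so by the spectral theorem there is a (finite or countable) orthonormal system $\{e_i\}\subset L^2(\mathcal{T})$ and numbers $\mu_i>0$ with $W^{*}W=\sum_i \mu_i\, e_i\otimes e_i$, using the paper's convention $(f\otimes g)h=\langle g,h\rangle_2 f$. Hence for every $f\in L^2(\mathcal{T})$,
\[
\|Wf\|_2^2=\langle f, W^{*}Wf\rangle_2=\sum_i \mu_i\,\langle e_i, f\rangle_2^2,
\]
a series with nonnegative terms. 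Applying this with $f=X$, and noting that $\|WX\|_2\le \|W\|_{\mathsf{op}}\|X\|_2=\|W\|_{\mathsf{op}}$ almost surely (so that all the expectations below are finite, and likewise all moments $\bE[\langle e_i,X\rangle_2^{2}],\bE[\langle e_i,X\rangle_2^{4}]$ are finite since $|\langle e_i,X\rangle_2|\le1$ a.s.), I would square and take expectations, interchanging the sums with expectation by Tonelli's theorem:
\[
\bE[\|WX\|_2^4]=\sum_{i,j}\mu_i\mu_j\,\bE[\langle e_i,X\rangle_2^2\langle e_j,X\rangle_2^2],
\qquad
\bE[\|WX\|_2^2]=\sum_i \mu_i\,\bE[\langle e_i,X\rangle_2^2].
\]

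The core estimate is a two-step bound on each mixed fourth moment. First, the Cauchy--Schwarz inequality (in the $L^2$ space of the underlying probability space) gives
\[
\bE[\langle e_i,X\rangle_2^2\langle e_j,X\rangle_2^2]\le \left(\bE[\langle e_i,X\rangle_2^4]\right)^{1/2}\left(\bE[\langle e_j,X\rangle_2^4]\right)^{1/2}.
\]
Then Assumption~\ref{assumption3}, applied with $f=e_i$ and with $f=e_j$ separately, upgrades this to
\[
\bE[\langle e_i,X\rangle_2^2\langle e_j,X\rangle_2^2]\le c_{\mathsf{M}}\,\bE[\langle e_i,X\rangle_2^2]\,\bE[\langle e_j,X\rangle_2^2].
\]
Substituting into the double sum and factoring,
\[
\bE[\|WX\|_2^4]\le c_{\mathsf{M}}\sum_{i,j}\mu_i\mu_j\,\bE[\langle e_i,X\rangle_2^2]\,\bE[\langle e_j,X\rangle_2^2]
= c_{\mathsf{M}}\left(\sum_i \mu_i\,\bE[\langle e_i,X\rangle_2^2]\right)^{2}
= c_{\mathsf{M}}\left(\bE[\|WX\|_2^2]\right)^{2},
\]
which is exactly the asserted inequality.

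There is no serious obstacle here; the only points that need a word of care are the legitimacy of swapping the infinite sums with the expectation---immediate from nonnegativity of all terms, via Tonelli's theorem (or monotone convergence)---and the finiteness of the quantities involved, which follows from the almost sure bound $\|WX\|_2\le\|W\|_{\mathsf{op}}$ together with $\|X\|_2=1$ a.s. Everything else is just the spectral decomposition of $W^{*}W$ followed by a single use of Cauchy--Schwarz and Assumption~\ref{assumption3} in each coordinate direction. Equivalently, one could work with the singular value decomposition $Wf=\sum_i\sigma_i\langle e_i,f\rangle_2 g_i$ of the compact operator $W$ directly (with $\sigma_i^2=\mu_i$); this leads to the identical computation.
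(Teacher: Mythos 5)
Your proof is correct and follows essentially the same route as the paper: spectral decomposition of $W^*W$, expansion of $\mathbb{E}[\|WX\|_2^4]$ as a double sum, Cauchy--Schwarz on each mixed moment, and Assumption \ref{assumption3} applied coordinatewise before refactoring the sum. The only difference is your added (welcome but routine) justification of the sum--expectation interchange via Tonelli and the a.s.\ bound $\|WX\|_2\leq\|W\|_{\mathsf{op}}$, which the paper leaves implicit.
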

\proof
Write $W'$ the adjoint
operator of $W$. Then $W'W$ is a compact positive
operator. So we write $\mu_1\geq \mu_2\geq \cdots > 0$ as all the positive
eigenvalues of $W'W$, counting multiplicity. We use an
orthonormal sequence $\{ \psi_i \}$ in $L^2(\mathcal{T})$ as the corresponding
eigenvectors. Assumption \ref{assumption3} implies that
\begin{align*}
\mathbb{E}\left[\|WX\|_2^4 \right] &= \mathbb{E}
\left[\langle X, W'W X \rangle^2_2 \right]\\
&=\mathbb{E}\left[ \left( \sum_i \mu_i\langle \psi_i, X \rangle_2^2
\right)^2\right] = \sum_{i,j} \mu_i\mu_j \mathbb{E}\left[ \langle \psi_i, X
\rangle_2^2 \langle \psi_j, X \rangle_2^2 \right]\\
&\leq \sum_{i,j} \mu_i\mu_j \sqrt{\mathbb{E}[\langle \psi_i, X \rangle_2^4]}
\sqrt{\mathbb{E}[\langle \psi_j, X \rangle_2^4]}
\leq c_{\mathsf{M}}\left( \mathbb{E} \sum_i \mu_i \langle \psi_i, X \rangle_2^2
\right)^2\\
&=c_{\mathsf{M}} \left( \mathbb{E}\left[\|W X\|_2^2 \right]\right)^2.
\end{align*} The proof is then completed.
\qed

\begin{thm}
\label{excessErrorGeneralTheorem000zzlJ7M}
Let $\{ \hat{\beta}_t \}$ be defined by (\ref{defIteration000rrCDVx}) with
step-sizes $\{ \eta_t \} \subset (0, \kappa^{-2}]$. We take Assumption
\ref{assumption1} (with $r>0$), Assumption \ref{assumption2} (with $0<s\leq 1$),
and Assumption \ref{assumption3}. Then for any $t\geq 0$,
\begin{align}
\mathbb{E}[\mathcal{E}(\hat{\varphi}_{t +1})]
\leq&\frac{\|g^*\|_2^2\left( (r/e)^{2r} +\kappa^{4r} \right)}{1 + \left(
\sum^t_{k=1} \eta_k \right)^{2r}} \nonumber \\
&+ \sum^t_{k=1} \eta_k^2 \left( \sigma^2+\sqrt{c_{\mathsf{M}}}
\mathbb{E}[\mathcal{E}
(\hat{\varphi}_{k})] \right)\frac{\sqrt{c_{\mathsf{M}}}
\mathrm{Tr}(\mathscr{L}_K^s)
\left[ (\frac{2-s}{2e})^{2-s} +\kappa^{4-2s}\right]}{1 + \left( \sum^t_{j=k+1}
\eta_j \right)^{2-s}}.
\label{generalExcessErrorBound000DtU6sC}
\end{align}
\end{thm}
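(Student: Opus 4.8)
The plan is to feed the error decomposition of Proposition~\ref{prop: error decomposition in L2 decreasing step size} into three estimates: a polynomial operator-norm bound for the bias term, a conditional application of the moment condition for the factor $\mathbb{E}\sqrt{\mathbb{E}_{x_k}\langle\beta^*-\hat\beta_k,x_k\rangle_2^4}$, and a trace computation combined with Lemma~\ref{compactOperatorBd000R9EK3GO} for the factor $\big(\mathbb{E}\|[\prod_{j=k+1}^t(I-\eta_j\mathscr{L}_K)]L_C^{1/2}L_Kx_k\|_2^4\big)^{1/2}$. Before doing any of this I would record the uniform bound $\|\mathscr{L}_K\|_{\mathsf{op}}\le\kappa^2$: since $\|X\|_2=1$ a.s.\ one has $\mathrm{Tr}(L_C)=\mathbb{E}\|X\|_2^2=1$, hence $\|L_C\|_{\mathsf{op}(L^2)}\le 1$, and therefore $\|\mathscr{L}_K\|_{\mathsf{op}}=\|L_C^{1/2}L_KL_C^{1/2}\|_{\mathsf{op}}\le\|L_K\|_{\mathsf{op}}\le\kappa^2$. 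Consequently $\{\eta_t\}\subset(0,\kappa^{-2}]\subset(0,1/\|\mathscr{L}_K\|_{\mathsf{op}}]$, so Lemma~\ref{lem: elementary lemma} is applicable with $A=\mathscr{L}_K$.

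\emph{Bias term.} Substituting $L_C^{1/2}\beta^*=\mathscr{L}_K^rg^*$ from Assumption~\ref{assumption1}, I would pull out the operator norm and invoke Lemma~\ref{lem: elementary lemma} with $\alpha=r$, $a=1$, $b=t$:
\[
\Big\|\Big[\prod_{k=1}^t(I-\eta_k\mathscr{L}_K)\Big]L_C^{1/2}\beta^*\Big\|_2^2\le\Big\|\mathscr{L}_K^r\prod_{k=1}^t(I-\eta_k\mathscr{L}_K)\Big\|_{\mathsf{op}}^2\|g^*\|_2^2\le\frac{\big((r/e)^{2r}+\kappa^{4r}\big)\|g^*\|_2^2}{1+(\sum_{k=1}^t\eta_k)^{2r}},
\]
which is the first line of (\ref{generalExcessErrorBound000DtU6sC}).

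\emph{Noise term.} For each summand I would treat the two factors separately. Because $\hat\beta_k$ is determined by $z_1,\dots,z_{k-1}$ and hence independent of $x_k$, Assumption~\ref{assumption3} applied conditionally with $f=\beta^*-\hat\beta_k$ gives $\mathbb{E}_{x_k}\langle\beta^*-\hat\beta_k,x_k\rangle_2^4\le c_{\mathsf{M}}\|L_C^{1/2}(\beta^*-\hat\beta_k)\|_2^4$; taking square roots, then full expectations, and recalling (\ref{excessGEbyNorm000itb0ac}), one gets $\mathbb{E}\sqrt{\mathbb{E}_{x_k}\langle\beta^*-\hat\beta_k,x_k\rangle_2^4}\le\sqrt{c_{\mathsf{M}}}\,\mathbb{E}[\mathcal{E}(\hat\varphi_k)]$, which produces the factor $\sigma^2+\sqrt{c_{\mathsf{M}}}\,\mathbb{E}[\mathcal{E}(\hat\varphi_k)]$. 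For the remaining factor, write $W_k=[\prod_{j=k+1}^t(I-\eta_j\mathscr{L}_K)]L_C^{1/2}L_K$, a fixed compact operator; Lemma~\ref{compactOperatorBd000R9EK3GO} (applied to $x_k$, an independent copy of $X$) yields $(\mathbb{E}\|W_kx_k\|_2^4)^{1/2}\le\sqrt{c_{\mathsf{M}}}\,\mathbb{E}\|W_kx_k\|_2^2$. Next $\mathbb{E}\|W_kx_k\|_2^2=\mathrm{Tr}(W_k^*W_kL_C)$, and using $\mathscr{L}_K^2=L_C^{1/2}L_KL_CL_KL_C^{1/2}$ with the cyclicity of the trace this equals $\mathrm{Tr}\big(\mathscr{L}_K^2[\prod_{j=k+1}^t(I-\eta_j\mathscr{L}_K)]^2\big)$. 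Passing to the spectral decomposition of $\mathscr{L}_K$ and splitting $\mathscr{L}_K^2=\mathscr{L}_K^s\cdot\mathscr{L}_K^{2-s}$, this is at most $\mathrm{Tr}(\mathscr{L}_K^s)\,\|\mathscr{L}_K^{(2-s)/2}\prod_{j=k+1}^t(I-\eta_j\mathscr{L}_K)\|_{\mathsf{op}}^2$, and Lemma~\ref{lem: elementary lemma} with $\alpha=(2-s)/2$ bounds the last factor by $\big((\tfrac{2-s}{2e})^{2-s}+\kappa^{4-2s}\big)/\big(1+(\sum_{j=k+1}^t\eta_j)^{2-s}\big)$. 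Inserting all of these into (\ref{eqnEDLCase2000ZykHEW}) gives (\ref{generalExcessErrorBound000DtU6sC}).

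\emph{Main obstacle.} The repeated applications of Lemma~\ref{lem: elementary lemma} and of Assumption~\ref{assumption3} are routine bookkeeping; the step needing care is the trace identity $\mathrm{Tr}(W_k^*W_kL_C)=\mathrm{Tr}\big(\mathscr{L}_K^2[\prod(I-\eta_j\mathscr{L}_K)]^2\big)$ and the subsequent splitting. One must check that the random quadratic form collapses to $L_C$ under expectation (using $\mathbb{E}[x_k\otimes x_k]=L_C$ and trace-class interchange), rearrange the product of $L_K$, $L_C$ and the polynomial of $\mathscr{L}_K$ so that $L_C^{1/2}L_KL_CL_KL_C^{1/2}$ is recognized as $\mathscr{L}_K^2$, and then control $\mathrm{Tr}(\mathscr{L}_K^s\cdot B)$ by $\mathrm{Tr}(\mathscr{L}_K^s)\|B\|_{\mathsf{op}}$ — legitimate here precisely because $B=\mathscr{L}_K^{2-s}[\prod(I-\eta_j\mathscr{L}_K)]^2$ commutes with $\mathscr{L}_K^s$ (all are functions of $\mathscr{L}_K$), so the estimate can be read off eigenvalue-by-eigenvalue, with $0\le 1-\eta_j\lambda_i\le 1$ ensured by $\eta_j\|\mathscr{L}_K\|_{\mathsf{op}}\le 1$. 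It is at this point that Assumption~\ref{assumption2}, namely $\mathrm{Tr}(\mathscr{L}_K^s)<\infty$, is what makes the bound non-vacuous.
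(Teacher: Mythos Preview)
Your proposal is correct and follows essentially the same route as the paper: feed Proposition~\ref{prop: error decomposition in L2 decreasing step size} into Lemma~\ref{lem: elementary lemma} (with $\alpha=r$) for the bias term, and for each summand of the noise term combine Assumption~\ref{assumption3} with Lemma~\ref{compactOperatorBd000R9EK3GO}, the trace identity $\mathbb{E}\|W_kx_k\|_2^2=\mathrm{Tr}\big(\mathscr{L}_K^2\prod_{j=k+1}^t(I-\eta_j\mathscr{L}_K)^2\big)$, the splitting $\mathrm{Tr}(\mathscr{L}_K^s\cdot B)\le\mathrm{Tr}(\mathscr{L}_K^s)\|B\|_{\mathsf{op}}$, and Lemma~\ref{lem: elementary lemma} with $\alpha=(2-s)/2$. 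The paper treats the case $t=0$ separately, but your argument covers it automatically via the empty-product and empty-sum conventions already built into Proposition~\ref{prop: error decomposition in L2 decreasing step size} and Lemma~\ref{lem: elementary lemma}.
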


\proof
When $t=0$, Bound (\ref{generalExcessErrorBound000DtU6sC}) is reduced to
\begin{gather}\label{tempBound000KmLaN}
\mathbb{E}[\mathcal{E} (0)]\leq \|g^*\|_2^2 ((r/e)^{2r} + \kappa^{4r}).
\end{gather}
We use Assumption \ref{assumption1} to have $\mathcal{E}(0)=\|L_C^{1/2}
\beta^*\|_2^2 = \|\mathscr{L}_K^r g^*\|_2^2\leq \kappa^{4r}\|g^*\|_2^2$,
which verifies (\ref{tempBound000KmLaN}).

Now we assume $t\geq 1$.
We let $J_1$ and $J_2$ denote the two terms in the right-hand side of
(\ref{eqnEDLCase2000ZykHEW}) in Proposition \ref{prop: error decomposition in
L2 decreasing step size}, respectively. That is, $\mathbb{E}[\mathcal{E}
(\hat{\varphi}_{t+1})]\leq J_1 + J_2$, with
\begin{align*}
J_1&= \left\| \left[ \prod^t_{k=1}(I - \eta_k \mathscr{L}_K) \right] L_C^{1/2}
\beta^* \right\|_2^2,\mbox{ and}\\
J_2&=\sum^{t}_{k = 1}\eta_k^2 \left( \sigma^2 + \mathbb{E} \sqrt{\mathbb{E}_{x_k}
\langle \beta^*-\hat{\beta}_k, x_k \rangle_2^4}\right)
\left[ \mathbb{E}\left\| \left[ \prod^t_{j = k+1}
(I - \eta_j \mathscr{L}_K) \right] L_C^{1/2} L_K x_k \right\|_2^4 \right]^{1/2}.
\end{align*}

Assumption \ref{assumption1} gives
$L_C^{1/2}\beta^*=\mathscr{L}_K^r g^*$ for some $r>0$. Recall the assumption
$\{ \eta_j \} \subset (0, \kappa^{-2}]$. We
apply Lemma \ref{lem: elementary lemma} to bound $J_1$,
\begin{gather*}
J_1 = \left\| \left[ \prod^t_{k=1} (I - \eta_k \mathscr{L}_K)
\right]\mathscr{L}_K^r g^* \right\|_2^2
\leq \left\| g^* \right\|_2^2 \frac{(r/e)^{2r} + \kappa^{4r}}{1 + \left(
\sum_{k=1}^t \eta_k \right)^{2r}}.
\end{gather*}
To bound $J_2$, we apply Assumption \ref{assumption3} (the moment condition),
\begin{gather}\label{thm7ToUse1000YQFqz}
\mathbb{E}\sqrt{ \mathbb{E}_{x_k} \langle \beta^*-\hat{\beta}_k, x_k
\rangle_2^4} \leq \sqrt{c_{\mathsf{M}}} \mathbb{E}\left[ \langle \beta^*-
\hat{\beta}_k, x_k \rangle_2^2 \right]
=\sqrt{c_{\mathsf{M}}}\mathbb{E}[\mathcal{E}(\hat{\varphi}_{k})].
\end{gather}
Recall that for any bounded linear operator $A$ on $L^2(\mathcal{T})$,
$\mathbb{E}[\|Ax_t\|_2^2]=\mathbb{E}\mathrm{Tr}(Ax_t\otimes x_t A')=
\mathrm{Tr}(AL_CA')$. We apply Lemma \ref{compactOperatorBd000R9EK3GO},
Assumption \ref{assumption2} (with $0<s\leq 1)$, and Lemma
\ref{lem: elementary lemma} to obtain that
\begin{align}
& \left[ \mathbb{E} \left\| \left[ \prod^t_{j = k+1} (I - \eta_j
\mathscr{L}_K) \right] L_C^{1/2} L_K x_k\right\|_2^4 \right]^{1/2} \nonumber\\
\leq & \sqrt{c_{\mathsf{M}}}\mathbb{E} \left\| \left[ \prod^t_{j = k+1} (I -
\eta_j \mathscr{L}_K) \right] L_C^{1/2} L_K x_k \right\|_2^2
= \sqrt{c_{\mathsf{M}}} \mathrm{Tr} \left( \mathscr{L}_K^2 \prod^t_{j = k+1}
(I - \eta_j \mathscr{L}_K)^2 \right) \nonumber\\
\leq & \sqrt{c_{\mathsf{M}}} \mathrm{Tr}(\mathscr{L}_K^s) \left\|
\mathscr{L}_K^{1 - \frac{s}2}
\prod^t_{j = k+1}(I - \eta_j\mathscr{L}_K) \right\|_{\mathsf{op}(L^2)}^2
\leq \sqrt{c_{\mathsf{M}}} \mathrm{Tr}(\mathscr{L}_K^s) \frac{\left(
\frac{2-s}{2e} \right)^{2-s}
+\kappa^{4-2s}}{1+\left( \sum^t_{j=k+1}\eta_j \right)^{2-s}}.
\label{thm7ToUse2000EwgxHt}
\end{align}
The proof is complete.
\qed

\begin{prop}
\label{coarseEst000XbG0PR}
Let $t\geq 1$. Let $\{ \hat{\beta}_k \}$ be defined by
(\ref{defIteration000rrCDVx}) with step-sizes $\{ \eta_k \}\subset (0,
\kappa^{-2}]$. We take Assumption \ref{assumption2} with $0 < s \leq 1$
(in particular, Assumption \ref{assumption2} is not needed when $s=1$) and
Assumption \ref{assumption3}. In particular, when $t\geq 2$ we assume for
any $k\leq t-1$ that
\begin{gather}\label{firstStepCond000s5c7AU}
c_{\mathsf{M}}\mathrm{Tr}(\mathscr{L}_K^s) \left[ \left( \frac{2-s}{2e}
\right)^{2-s} + \kappa^{4-2s} \right]
\sum^k_{l = 1} \frac{\eta_l^2}{1 + \left( \sum^k_{j = l + 1} \eta_j
\right)^{2-s}} \leq \frac12.
\end{gather}
Then we have a coarse estimation of the expected excess generalization error
for $k=1, \cdots, t$,
\begin{gather}
\label{boundCoarseExcessError000DVlpcS}
\mathbb{E}\left[ \mathcal{E} (\hat{\varphi}_{k}) \right] \leq 2\left\| \beta^*
\right\|_2^2 + \frac{\sigma^2}{\sqrt{c_{\mathsf{M}}}}.
\end{gather}
\end{prop}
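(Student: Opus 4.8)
The plan is to re-run the proof of Theorem \ref{excessErrorGeneralTheorem000zzlJ7M} with two modifications: replace the approximation (first) term by a crude bound that does not use Assumption \ref{assumption1}, and then turn the resulting recursive inequality into the uniform estimate (\ref{boundCoarseExcessError000DVlpcS}) by a self-improving induction on $k$ that is powered by the smallness hypothesis (\ref{firstStepCond000s5c7AU}).

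First I would record, for every $m\geq 0$, the recursion
\[
\mathbb{E}[\mathcal{E}(\hat\varphi_{m+1})]\leq\|\beta^*\|_2^2+\sqrt{c_{\mathsf{M}}}\,\mathrm{Tr}(\mathscr{L}_K^s)\Bigl[\bigl(\tfrac{2-s}{2e}\bigr)^{2-s}+\kappa^{4-2s}\Bigr]\sum_{l=1}^m\frac{\eta_l^2\bigl(\sigma^2+\sqrt{c_{\mathsf{M}}}\,\mathbb{E}[\mathcal{E}(\hat\varphi_l)]\bigr)}{1+\bigl(\sum_{j=l+1}^m\eta_j\bigr)^{2-s}}.
\]
Starting from Proposition \ref{prop: error decomposition in L2 decreasing step size}, the second summand is handled exactly as the term $J_2$ in the proof of Theorem \ref{excessErrorGeneralTheorem000zzlJ7M}: the conditional form of Assumption \ref{assumption3} gives $\mathbb{E}\sqrt{\mathbb{E}_{x_l}\langle\beta^*-\hat\beta_l,x_l\rangle_2^4}\leq\sqrt{c_{\mathsf{M}}}\,\mathbb{E}[\mathcal{E}(\hat\varphi_l)]$; Lemma \ref{compactOperatorBd000R9EK3GO} together with the identity $L_C^{1/2}L_KL_CL_KL_C^{1/2}=\mathscr{L}_K^2$ reduces the fourth-moment factor to $\sqrt{c_{\mathsf{M}}}\,\mathrm{Tr}\bigl(\mathscr{L}_K^2\prod_{j=l+1}^m(I-\eta_j\mathscr{L}_K)^2\bigr)$; and Lemma \ref{lem: elementary lemma} with $\alpha=1-s/2$ applied to $A=\mathscr{L}_K$ (legitimate since $\|\mathscr{L}_K\|_{\mathsf{op}(L^2)}\leq\kappa^2$ and $\eta_j\leq\kappa^{-2}$), combined with Assumption \ref{assumption2}, finishes the estimate. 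The only deviation is the first summand: instead of invoking Assumption \ref{assumption1} I would use $\|[\prod_{k=1}^m(I-\eta_k\mathscr{L}_K)]L_C^{1/2}\beta^*\|_2^2\leq\|L_C^{1/2}\beta^*\|_2^2\leq\|\beta^*\|_2^2$, via (\ref{prodNormBoundAlphaEqZero000u5QQwy}) and $\|L_C\|_{\mathsf{op}(L^2)}\leq\mathrm{Tr}(L_C)=\mathbb{E}[\|X\|_2^2]=1$.

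Then I would induct on $k=1,\dots,t$ with the statement $\mathbb{E}[\mathcal{E}(\hat\varphi_k)]\leq2\|\beta^*\|_2^2+\sigma^2/\sqrt{c_{\mathsf{M}}}$. The base case $k=1$ is immediate: $\hat\beta_1=0$ forces $\mathcal{E}(\hat\varphi_1)=\|L_C^{1/2}\beta^*\|_2^2\leq\|\beta^*\|_2^2$. For the inductive step with $k\leq t-1$, I apply the recursion at $m=k$, substitute the inductive hypothesis into each $\mathbb{E}[\mathcal{E}(\hat\varphi_l)]$ with $l\leq k$, and use the identity $\sigma^2+\sqrt{c_{\mathsf{M}}}\bigl(2\|\beta^*\|_2^2+\sigma^2/\sqrt{c_{\mathsf{M}}}\bigr)=2\sqrt{c_{\mathsf{M}}}\bigl(\|\beta^*\|_2^2+\sigma^2/\sqrt{c_{\mathsf{M}}}\bigr)$ to obtain $\mathbb{E}[\mathcal{E}(\hat\varphi_{k+1})]\leq\|\beta^*\|_2^2+2\bigl(\|\beta^*\|_2^2+\sigma^2/\sqrt{c_{\mathsf{M}}}\bigr)\,c_{\mathsf{M}}\,\mathrm{Tr}(\mathscr{L}_K^s)[(\tfrac{2-s}{2e})^{2-s}+\kappa^{4-2s}]\sum_{l=1}^k\eta_l^2/(1+(\sum_{j=l+1}^k\eta_j)^{2-s})$. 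Hypothesis (\ref{firstStepCond000s5c7AU}) bounds the trailing factor by $1/2$, so $\mathbb{E}[\mathcal{E}(\hat\varphi_{k+1})]\leq\|\beta^*\|_2^2+(\|\beta^*\|_2^2+\sigma^2/\sqrt{c_{\mathsf{M}}})=2\|\beta^*\|_2^2+\sigma^2/\sqrt{c_{\mathsf{M}}}$, which closes the induction.

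I do not expect a real obstacle: this is the standard ``a priori bound via self-improving induction'' mechanism, and all the analytic work has already been done in Theorem \ref{excessErrorGeneralTheorem000zzlJ7M}. The points that need care are purely organizational — checking that the crude first-term estimate genuinely removes the dependence on Assumption \ref{assumption1}; lining up the index range of the smallness hypothesis (\ref{firstStepCond000s5c7AU}), which is stated for $k\leq t-1$, with the indices that actually occur when the recursion is applied at $m=k$ to control $\mathbb{E}[\mathcal{E}(\hat\varphi_{k+1})]$; and observing that when $t=1$ no instance of (\ref{firstStepCond000s5c7AU}) is used, consistent with the ``when $t\geq 2$'' clause in the statement.
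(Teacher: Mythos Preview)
Your proposal is correct and follows essentially the same route as the paper: both start from Proposition~\ref{prop: error decomposition in L2 decreasing step size}, replace the approximation term by the crude bound $\|L_C^{1/2}\beta^*\|_2^2\leq\|\beta^*\|_2^2$ (thus avoiding Assumption~\ref{assumption1}), estimate the noise term exactly as in Theorem~\ref{excessErrorGeneralTheorem000zzlJ7M}, and close a self-improving induction on the index using hypothesis~(\ref{firstStepCond000s5c7AU}). The only cosmetic difference is that the paper phrases the induction on the outer parameter $t$ while you induct directly on $k$; the two are equivalent.
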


We see that (\ref{boundCoarseExcessError000DVlpcS}) only provides a coarse
bound $\mathbb{E}\left[ \mathcal{E} (\hat{\varphi}_{k}) \right] = O(1)$.
However, the designed purpose of Proposition \ref{coarseEst000XbG0PR} is to
estimate $\mathbb{E}\left[ \mathcal{E} (\hat{\varphi}_{k}) \right]$ in the
right-hand side of (\ref{generalExcessErrorBound000DtU6sC}) in our convergence
analysis, and a bound finer than $O(1)$ would not serve the purpose better,
because a constant variance $\sigma^2$ is added to $\sqrt{c_{\mathsf{M}}}
\mathbb{E} [\mathcal{E} (\hat{\varphi}_k)]$ in
(\ref{generalExcessErrorBound000DtU6sC}).

\proof[Proof of Proposition \ref{coarseEst000XbG0PR}]

We organize the proof by induction. Recall that $\hat{\varphi}_1=0$ and
$\|L_C\|_{\mathsf{op}(L^2)}\leq 1$. When $t=1$, (\ref{firstStepCond000s5c7AU})
is verified by
\begin{gather*}
\mathcal{E}(0) = \|L_C^{1/2}\beta^*\|_2^2 \leq \|\beta^*\|_2^2.
\end{gather*}
Let $T\geq 2$, and we assume Proposition \ref{coarseEst000XbG0PR} holds for
$t=1, \ldots, T-1$. To finish the induction, we need only to prove Proposition
\ref{coarseEst000XbG0PR} for $t = T$. That is, we assume
(\ref{firstStepCond000s5c7AU}) and (\ref{boundCoarseExcessError000DVlpcS}) for
$t = 1, \ldots, T - 1$ and need only to prove
(\ref{boundCoarseExcessError000DVlpcS}) for $t = T$. To this end, we use
Proposition \ref{prop: error decomposition in L2 decreasing step size} to have
\begin{gather}\label{temp000KvOikS}
\mathbb{E}[\mathcal{E}(\hat{\varphi}_T)]\leq \Upsilon^T_1 +
\sum^{T - 1}_{k = 1} \eta_k^2 \left( \sigma^2 + \mathbb{E}
\sqrt{\mathbb{E}_{x_k} \langle \beta^* - \hat{\beta}_k, x_k \rangle_2^4}
\right) \Upsilon^T_{2, k},
\end{gather}
where
\begin{align*}
\Upsilon^T_1&=\left\| \left[ \prod^{T-1}_{k=1}(I - \eta_k\mathscr{L}_K) \right]
L_C^{1/2}\beta^* \right\|_2^2,\quad\mbox{and}\\
\Upsilon^T_{2,k} &= \left[ \mathbb{E}\left\| \left[ \prod^{T-1}_{j=k+1}
(I - \eta_j \mathscr{L}_K) \right] L_C^{1/2} L_K x_k \right\|^4_2
\right]^{1/2},\quad k=1,\ldots,T - 1.
\end{align*}

To bound $\Upsilon^T_1$, we note that $\|I-\eta_k\mathscr{L}_K\|_{\mathsf{op}
(L^2)}\leq 1$ because $\eta_k\in(0, \kappa^{-2}]$ and $\|\mathscr{L}_K
\|_{\mathsf{op}(L^2)}\leq \kappa^2$. So,
\begin{gather*}
\Upsilon^T_1\leq \|L_C^{1/2} \beta^*\|_2^2\leq \|\beta^*\|_2^2.
\end{gather*}

Next, we follow (\ref{thm7ToUse1000YQFqz}), use the induction assumption and
Assumption \ref{assumption3} (the moment condition) to obtain
\begin{gather*}
\mathbb{E}\sqrt{ \mathbb{E}_{x_k} \langle \beta^* - \hat{\beta}_k, x_k
\rangle_2^4 } \leq \sqrt{c_{\mathsf{M}}} \mathbb{E}[\mathcal{E}
(\hat{\varphi}_k)] \leq
\sigma^2+2\sqrt{c_{\mathsf{M}}} \|\beta^*\|_2^2, \quad k=1, \ldots, T - 1.
\end{gather*}
Then, we follow (\ref{thm7ToUse2000EwgxHt}) and use Assumption 2 with
$0<s\leq 1$ to obtain
\begin{align*}
\Upsilon^T_{2, k} & \leq \sqrt{c_{\mathsf{M}}} \mathbb{E} \left\| \left[
\prod^{T-1}_{j=k+1} (I - \eta_j \mathscr{L}_K) \right] L_C^{1/2}L_K x_k
\right\|_2^2\\
&\leq \sqrt{c_{\mathsf{M}}} \mathrm{Tr}(\mathscr{L}_K^s)\frac{\left(
\frac{2-s}{2e} \right)^{2-s} + \kappa^{4-2s}}{1 + (\sum^{T - 1}_{j=k+1}
\eta_j)^{2-s}}, \quad k=1, \ldots, T - 1.
\end{align*}
We continue (\ref{temp000KvOikS}) and use Condition
(\ref{firstStepCond000s5c7AU}) for $k=1,\ldots,T - 1$ to have
\begin{align*}
\mathbb{E}[\mathcal{E}(\hat{\varphi}_T)] &\leq \|\beta^*\|_2^2 +
\sum^{T-1}_{k=1} \eta^2_k (2\sigma^2+2\sqrt{c_{\mathsf{M}}} \|\beta^*\|_2^2)
\sqrt{c_{\mathsf{M}}}
\mathrm{Tr}(\mathscr{L}_K^s) \frac{\left( \frac{2-s}{2e} \right)^{2-s} +
\kappa^{4-2s}}{1 + (\sum^{T - 1}_{j=k+1}\eta_j)^{2-s}}\\
&\leq \|\beta^*\|_2^2 + \frac{\sigma^2}{\sqrt{c_{\mathsf{M}}}} +
\|\beta^*\|_2^2.
\end{align*}
This completes the proof.\qed

\subsection{Analysis in Online and Finite-horizon Settings of
Step-sizes}\label{subsection: analysis in online and finite-horizon settings of step-sizes}

In this subsection we study the excess generalization error in the online
and finite-horizon settings of step-sizes, respectively. The following Lemma
\ref{lemOnlineStepSize000m7uT2} is commonly used in the literature
\cite{ChenTangFanGuo2022-MR4388513, YingPontil2008-MR2443089,
guo2022rates} with smaller ranges of parameters $\theta$ and $\nu$.
In this paper, we need coverage of the whole domain $\nu>0$ and $0<\theta<1$, and the
proof is not elsewhere available to our best knowledge.

\begin{lem}\label{lemOnlineStepSize000m7uT2}
Let $t \geq 1$, $\eta_k = \eta_0 k^{-\theta}$ with $\eta_0>0$ and $0<\theta<1$.
For any $\nu>0$,
\begin{gather}
\label{boundOnlineStepSize1}
\left( \sum^t_{k = 1}\eta_k \right)^{-\nu} \leq \left( \frac{\eta_0 (1 -
2^{\theta-1})}{1-\theta} \right)^{-\nu} (t+1)^{-\nu(1-\theta)},
\end{gather}
and
\begin{gather*}
\sum^t_{k=1}\frac{\eta_k^2}{1+\left( \sum^t_{j=k+1} \eta_j \right)^{\nu}} \leq
C^{\mathsf{OL}}\left\{\begin{array}{ll}
(t+1)^\omega\log(t+1),& (\nu,\theta)\in\Omega,\\
(t+1)^\omega, & (\nu,\theta)\not\in\Omega,
\end{array}\right.
\end{gather*}
where $\Omega = \{ (\nu,\theta): 0<\nu\leq 1\mbox{ and }\theta=1/2 \}
\cup \{ (\nu,\theta): \nu=1\mbox{ and }0<\theta\leq 1/2 \}$,
$C^{\mathsf{OL}}$ is a constant independent of $t$, and
\begin{gather}\label{omegaDefCopied000uDiM}
\omega=\omega(\nu,\theta)=\left\{\begin{array}{ll}
1-2\theta-\nu+\nu\theta,&0<\nu\leq 1\mbox{ and }0<\theta\leq 1/2,\\
-\theta,& \nu\geq 1\mbox{ and }0<\theta\leq \nu/(\nu+1),\\
-\nu(1-\theta),& 1/2\leq\theta<1 \mbox{ and }\theta \geq \nu/(\nu+1).
\end{array}\right.
\end{gather}
In particular, when $\nu\geq1$, $\omega=-\min\{ \theta,\nu(1-\theta) \}$.
The constant $C^{\mathsf{OL}}$ will be specified by
(\ref{COLDef000U9snA}) in the proof.
\end{lem}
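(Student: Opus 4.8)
The plan is to treat the two displayed inequalities separately; the first is routine, and the second carries all the work.

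For (\ref{boundOnlineStepSize1}) I would compare the sum with an integral: since $x\mapsto x^{-\theta}$ is decreasing, $\sum_{k=1}^t k^{-\theta}\ge\int_1^{t+1}x^{-\theta}\,dx=\frac{(t+1)^{1-\theta}-1}{1-\theta}$. For $t\ge1$ one has $(t+1)^{1-\theta}\ge 2^{1-\theta}$, hence $1\le 2^{\theta-1}(t+1)^{1-\theta}$, so $(t+1)^{1-\theta}-1\ge(1-2^{\theta-1})(t+1)^{1-\theta}$. Multiplying by $\eta_0$ and raising to the power $-\nu$ gives (\ref{boundOnlineStepSize1}).

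For the second estimate I would first record two elementary consequences of the same integral comparison: (i) $\sum_{k=1}^{N}k^{-2\theta}$ is $O(N^{1-2\theta})$ if $\theta<1/2$, $O(\log(N+1))$ if $\theta=1/2$, and $O(1)$ if $\theta>1/2$; and (ii) for $k<t$, $\sum_{j=k+1}^{t}j^{-\theta}\ge(t-k)t^{-\theta}$, so $\sum_{j=k+1}^{t}\eta_j\ge\frac{\eta_0}{4}t^{1-\theta}$ whenever $k\le\lceil t/2\rceil$ and $t\ge 2$. I would then split the sum at $k_0=\lceil t/2\rceil$ as $S_1+S_2$, with $S_1=\sum_{k=1}^{k_0}$ and $S_2=\sum_{k=k_0+1}^{t}$, absorbing into $C^{\mathsf{OL}}$ the finitely many small $t$ for which this split (or the threshold below) degenerates. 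In $S_1$ I would bound the denominator below by $\bigl(\sum_{j=k+1}^{t}\eta_j\bigr)^\nu\gtrsim t^{\nu(1-\theta)}$, pull it out, and invoke (i) for $\sum_{k\le k_0}k^{-2\theta}$; this gives $S_1\lesssim t^{\,1-2\theta-\nu(1-\theta)}$ when $\theta<1/2$, $S_1\lesssim t^{-\nu/2}\log(t+1)$ when $\theta=1/2$, and $S_1\lesssim t^{-\nu(1-\theta)}$ when $\theta>1/2$. In $S_2$ I would use $\eta_k^2\le\eta_0^2 2^{2\theta}t^{-2\theta}$ together with $\sum_{j=k+1}^{t}\eta_j\ge\eta_0(t-k)t^{-\theta}$, substitute $m=t-k$, and bound $\sum_{m=0}^{\lfloor t/2\rfloor}\bigl(1+\eta_0^\nu m^\nu t^{-\nu\theta}\bigr)^{-1}$ by splitting at the threshold $m^\ast\asymp t^\theta$ where the denominator passes through $2$: the block $m\le m^\ast$ contributes $O(t^\theta)$, and the tail $\sum_{m>m^\ast}m^{-\nu}t^{\nu\theta}$ contributes $O(t^\theta)$ if $\nu>1$, $O(t^\theta\log(t+1))$ if $\nu=1$, and $O(t^{\,1-\nu(1-\theta)})$ if $\nu<1$; multiplying by $t^{-2\theta}$ yields $S_2\lesssim t^{-\theta}$, $t^{-\theta}\log(t+1)$, and $t^{\,1-2\theta-\nu(1-\theta)}$ in those three cases.

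It remains to combine $S=S_1+S_2$ regime by regime, each time comparing the three candidate exponents $1-2\theta-\nu(1-\theta)$, $-\theta$, $-\nu(1-\theta)$ and tracking the logarithm. On $\{0<\nu\le1,\ \theta\le1/2\}$ both pieces have exponent $1-2\theta-\nu(1-\theta)$, with a logarithm appearing exactly when $\nu=1$ (from $S_2$) or $\theta=1/2$ (from $S_1$) — i.e.\ on $\Omega$. On $\{\nu\ge1,\ \theta\le\nu/(\nu+1)\}$ the $S_2$ bound $t^{-\theta}$ governs (it dominates, or ties with, $S_1$), and a logarithm survives exactly when $\nu=1$, which in this regime forces $\theta\le1/2$ and so again lands in $\Omega$. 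On $\{\theta\ge1/2,\ \theta\ge\nu/(\nu+1)\}$ the $S_1$ bound $t^{-\nu(1-\theta)}$ governs, and a logarithm survives only at $\theta=1/2$ (where necessarily $\nu\le1$, hence $\Omega$), otherwise being absorbed by the polynomial gap $t^{2\theta-1}$. This reproduces (\ref{omegaDefCopied000uDiM}) and the set $\Omega$, and the consistency of $\omega$ across the overlapping boundaries is what makes the bookkeeping close up. I expect this last step to be the main obstacle: carrying out the roughly eight sub-cases, verifying in each the correct inequality among the three exponents and that the logarithm appears precisely on $\Omega$, together with the nuisance of fixing the constants — and the finitely many small $t$ — so that a single $C^{\mathsf{OL}}$ works for all $t\ge1$.
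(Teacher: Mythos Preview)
Your argument is correct and structurally mirrors the paper's: both split at the midpoint, bound the early half by pulling out a uniform lower bound on the tail sum, bound the late half by a change of index and a threshold argument, and then run the same case analysis in $(\nu,\theta)$ that produces exactly the exponent $\omega$ and the set $\Omega$. The only real difference is packaging. The paper first uses the sharper integral comparison $\sum_{j=k+1}^t j^{-\theta}\ge\frac{1}{1-\theta}\bigl[(t+1)^{1-\theta}-(k+1)^{1-\theta}\bigr]$ to reduce the whole sum to a single integral $\int_2^{t+1} u^{-2\theta}\bigl(1+[(t+1)^{1-\theta}-u^{1-\theta}]^\nu\bigr)^{-1}du$, and then isolates the entire case analysis in a separate appendix lemma about that integral (splitting $[1,b]$ at $b/2$ and using the substitution $\xi=b^{1-\theta}-u^{1-\theta}$ on the upper half). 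You instead work directly with sums and use the cruder but perfectly adequate bound $\sum_{j=k+1}^t j^{-\theta}\ge (t-k)t^{-\theta}$, which makes your $S_2$ step a bit more elementary (a plain substitution $m=t-k$ rather than a nonlinear change of variable). Either route yields the same eight-sub-case bookkeeping; the paper's version has the advantage of a reusable integral lemma and an explicit constant $C^{\mathsf{OL}}$, while yours avoids the intermediate integral altogether.
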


Lemma \ref{lemOnlineStepSize000m7uT2} is based on Lemma
\ref{techLemma000avyDpd} in Appendix.
Same as Lemma \ref{techLemma000avyDpd}, we purposely allow the domains to overlap
in (\ref{omegaDefCopied000uDiM}), to simplify the usage later.
We will elucidate the parameter $\omega$
and the set $\Omega$ by Figure \ref{figOmega0006Ne1p} in Appendix.

\proof[Proof of Lemma \ref{lemOnlineStepSize000m7uT2}]
For any $k\geq 0$ and $t\geq k+1$,
\begin{gather}
\label{estTemp000T2LG4wa}
\sum^t_{j=k+1} \eta_j=\eta_0\sum^t_{j=k+1} j^{-\theta}\geq
\frac{\eta_0}{1-\theta} \left[ (t+1)^{1-\theta}
- (k+1)^{1-\theta} \right].
\end{gather}
We set $k=0$ to have
\begin{gather}\label{estTemp1000Z7aTU}
\sum^t_{j=1} \eta_j \geq \frac{\eta_0
(1-2^{\theta-1})}{1-\theta} (t+1)^{1-\theta}.
\end{gather}
One raises (\ref{estTemp1000Z7aTU}) to power $-\nu$ to obtain
(\ref{boundOnlineStepSize1}).

Recall that for $k\geq 1$, one has $k\geq (k + 2)/3$, and
\begin{gather*}
\sum^t_{k = 1}\frac{\eta_k^2}{1 + \left( \sum^t_{j=k+1}\eta_j \right)^\nu}\leq
\eta_t^2+\sum^{t - 1}_{k = 1}\frac{\eta_0^2 (k+2)^{-2\theta} 3^{2\theta}}{1 +
\left( \frac{\eta_0}{1-\theta} \right)^\nu\left[ (t+1)^{1-\theta} -
(k+1)^{1-\theta} \right]^\nu}.
\end{gather*}
Note that for any $k=1,\ldots,t-1$,
\begin{gather*}
\frac{(k+2)^{-2\theta}}{1+\left[ (t+1)^{1-\theta}-(k+1)^{1-\theta} \right]^\nu}
\leq\int^{k+2}_{k+1} \frac{u^{-2\theta}du}{1+[(t+1)^{1-\theta} -
u^{1-\theta}]^\nu}.
\end{gather*}
We use Lemma \ref{techLemma000avyDpd} to have
\begin{gather*}
\sum^t_{k = 1}\frac{\eta_k^2}{1 + \left( \sum^t_{j=k+1}\eta_j \right)^\nu}
\leq \eta_0^2 t^{-2\theta} +\frac{3^{2\theta}\eta_0^2}{\min\left\{ 1,
\left( \frac{\eta_0}{1-\theta} \right)^\nu \right\}} \int^{t + 1}_2
\frac{u^{-2\theta} du}{1 + [(t+1)^{1 - \theta} - u^{1-\theta}]^\nu}\\
\leq\eta_0^2 2^{2\theta} (t+1)^{-2\theta} +
\frac{3^{2\theta}\eta_0^2 C_0^{\mathsf{OL}}}{\min\left\{ 1,
\left( \frac{\eta_0}{1-\theta} \right)^\nu \right\}}\times
\left\{\begin{array}{ll}
(t+1)^\omega\log(t+1),& (\nu,\theta) \in \Omega,\\
(t+1)^\omega,& (\nu,\theta) \not\in \Omega.
\end{array}\right.
\end{gather*}
Now we verify that $-2\theta\leq \omega$ on the whole domain $(0, \infty)\times
(0,1)$ of parameters. When $0<\theta\leq 1/2$ and $0<\nu\leq 1$,
$\omega = -2\theta+(1-\nu)+\nu\theta\geq -2\theta$. When $1/2<\theta<1$
and $0<\nu\leq 1$, $\omega=-\nu(1-\theta)\geq -1+\theta>-\theta>-2\theta$.
When $0<\theta<\nu/(\nu+1)$ and $\nu>1$, $\omega=-\theta>-2\theta$.
When $\nu/(\nu+1)\leq\theta<1$ and $\nu>1$, $\theta>\nu/(\nu+2)$, so
$\omega=-\nu(1-\theta)>-2\theta$.

We complete the proof by letting
\begin{gather}\label{COLDef000U9snA}
C^{\mathsf{OL}} = \frac{\eta_0^2 2^{2\theta}}{\log 2}
+\frac{3^{2\theta} \eta_0^2 C_0^{\mathsf{OL}}}{\min
\left\{ 1,\left( \frac{\eta_0}{1-\theta} \right)^\nu \right\}}.
\end{gather}
\qed

\proof[Proof of Theorem \ref{thm: decreasing step size capacity dependent L2}]
First, we shall apply Proposition \ref{coarseEst000XbG0PR}. To verify the
assumptions in Proposition \ref{coarseEst000XbG0PR},
we need only to determine the constant $C^{\mathsf{S}}_1$
to guarantee (\ref{firstStepCond000s5c7AU}), i.e., for $k=1, \ldots, t-1$,
\begin{gather}\label{firstStepCOPY000Tg8d}
c_{\mathsf{M}}\mathrm{Tr}(\mathscr{L}_K^s) \left[ \left( \frac{2-s}{2e}
\right)^{2-s} + \kappa^{4-2s} \right]
\sum^k_{l = 1} \frac{\eta_l^2}{1 + \left( \sum^k_{j = l + 1} \eta_j
\right)^{2-s}} \leq \frac12.
\end{gather}
Recall $r>0$ and $0<s\leq1$. We apply Lemma \ref{lemOnlineStepSize000m7uT2} with
\begin{gather}\label{tempCond000MCl2}
\nu = 2-s\geq1,\quad \mbox{and,}\quad 0<\theta=\frac{\min\{ 2r, 2-s \}}{1+
\min\{ 2r, 2-s \}}\leq \frac{\nu}{\nu+1},
\end{gather}
so $\omega = -\theta<0$, and for $k=1,\ldots,t - 1$,
\begin{gather}\label{temp0001VrJmP}
\sum^k_{l=1}\frac{\eta_l^2}{1+\left( \sum^k_{j=l+1}\eta_j \right)^{2-s}}
\leq C^{\mathsf{OL}}\left\{\begin{array}{ll}
(k+1)^{-\theta}\log(k+1),& s=1,\\
(k+1)^{-\theta},& 0<s<1.
\end{array}\right.
\end{gather}
Recall $\eta_0\leq1$. The above inequality is continued by
\begin{align}
C^{\mathsf{OL}}&=\frac{\eta_0^2 2^{2\theta}}{\log 2}+
\frac{3^{2\theta}\eta_0^2 C_0^{\mathsf{OL}}}{\min\left\{ 1,\left(
\frac{\eta_0}{1-\theta} \right)^{2-s} \right\}}
\leq \frac{\eta_0^s 4^\theta}{\log 2}+
\frac{9^\theta\eta_0^2 C_0^{\mathsf{OL}}}{\eta_0^{2-s}
\min\{ 1, (1-\theta)^{-2+s} \}} \nonumber \\
&\leq \eta_0^s\left( \frac{4^\theta}{\log 2} + 9^\theta C_0^{\mathsf{OL}} \right).
\label{tempCOLBound000Viqt}
\end{align}
On the other hand, $(k+1)^{-\theta}\leq 1$,
and $(k+1)^{-\theta}\log(k+1)\leq \frac{1}{e\theta}$ (see (\ref{polyBoundingLog000zNwX})).
Therefore, to achieve (\ref{firstStepCOPY000Tg8d})
(which is just (\ref{firstStepCond000s5c7AU}) for Proposition
\ref{coarseEst000XbG0PR}), we need simply to let
\begin{gather}\label{specifyC1S000hVp9yJ}
C_1^{\mathsf{S}} = \left\{2c_{\mathsf{M}}\mathrm{Tr}(\mathscr{L}^s_K)
\left[ \left( \frac{2-s}{2e} \right)^{2-s} +\kappa^{4-2s} \right]
\left( \frac{4^\theta}{\log 2} + 9^\theta C_0^{\mathsf{OL}}
\right)
\left( 1+ \frac{1}{e\theta}\right) \right\}^{-1/s}.
\end{gather}

Second, we apply Theorem \ref{excessErrorGeneralTheorem000zzlJ7M},
of which the conditions are now all satisfied.
We plug (\ref{boundCoarseExcessError000DVlpcS}) of Proposition
\ref{coarseEst000XbG0PR}, into (\ref{generalExcessErrorBound000DtU6sC})
of Theorem \ref{excessErrorGeneralTheorem000zzlJ7M}, to obtain
\begin{align}
\mathbb{E}[\mathcal{E}(\hat{\varphi}_{t+1})] \leq &
\frac{\|g^*\|_2^2((r/e)^{2r} + \kappa^{4r})}{\left(
\sum^t_{k=1}\eta_k \right)^{2r}} \nonumber \\
&+\sqrt{c_{\mathsf{M}}} \mathrm{Tr}(\mathscr{L}_K^s)
\left[ \left( \frac{2-s}{2e} \right)^{2-s} +\kappa^{4-2s} \right]
\left( 2\|\beta^*\|_2^2\sqrt{c_{\mathsf{M}}}+2\sigma^2 \right)
\nonumber \\
&\times\sum^t_{k=1} \frac{\eta_k^2}{1+ \left( \sum^t_{j=k+1} \eta_j
\right)^{2-s}}. \label{tempBound1000VULt2}
\end{align}
For the first term in the right-hand side of (\ref{tempBound1000VULt2}), we
apply Lemma \ref{lemOnlineStepSize000m7uT2} with $\nu=2r$.
The last sum in (\ref{tempBound1000VULt2}) is bounded above in (\ref{temp0001VrJmP}).
We have
\begin{align*}
\mathbb{E}[\mathcal{E}(\hat{\varphi}_{t+1})]\leq &
\|g^*\|_2^2 ((r/e)^{2r} + \kappa^{4r}) \left( \frac{\eta_0
(1-2^{\theta-1})}{1-\theta} \right)^{-2r} (t+1)^{-2r(1-\theta)}\\
&+2(\sigma^2+\sqrt{c_{\mathsf{M}}}\|\beta^*\|_2^2) \sqrt{c_{\mathsf{M}}}
\mathrm{Tr}(\mathscr{L}_K^s)
\left[ \left( \frac{2-s}{2e} \right)^{2-s} +\kappa^{4-2s} \right]
C^{\mathsf{OL}}\\
&\times\left\{\begin{array}{ll}
(k+1)^{-\theta}\log(k+1),& s=1,\\
(k+1)^{-\theta},& 0<s<1.
\end{array}\right.
\end{align*}
From $\theta= \frac{\min\{ 2r,\nu \}}{1+\min\{ 2r,\nu
\}}\leq \frac{2r}{1+2r}$, we have $-2r(1-\theta)\leq -\theta$.
Therefore,
\begin{gather*}
\mathbb{E}[\mathcal{E}(\hat{\varphi}_{t+1})]\leq C_1
\left\{\begin{array}{ll}
(k+1)^{-\theta}\log(k+1),& s=1,\\
(k+1)^{-\theta},& 0<s<1.
\end{array}\right.
\end{gather*}
with
\begin{align}
C_1=& \|g^*\|_2^2 \left( (r/e)^{2r}+\kappa^{4r} \right)\left( \frac{\eta_0
(1-2^{\theta-1})}{1-\theta} \right)^{-2r} \nonumber \\
&+2(\sigma^2+\sqrt{c_{\mathsf{M}}}\|\beta^*\|_2^2) \sqrt{c_{\mathsf{M}}}
\mathrm{Tr}(\mathscr{L}_K^s)
\left[ \left( \frac{2-s}{2e} \right)^{2-s} +\kappa^{4-2s} \right]
C^{\mathsf{OL}}.\label{C1Def0001LaZ}
\end{align}
\qed

\proof[Proof of Theorem \ref{thm: convergence rate in L2 constant step size and
capacity independent}]
First, for applying Proposition \ref{coarseEst000XbG0PR}, we need only to find an
upper bound $C^{\mathsf{S}}_2$ of step-sizes to guarantee (\ref{firstStepCond000s5c7AU}),
i.e., for $k=1,\ldots,T-1$,
\begin{gather}\label{EQfromProp11ForThm2000BbuH}
c_{\mathsf{M}}\mathrm{Tr}(\mathscr{L}_K^s) \left[ \left( \frac{2-s}{2e}
\right)^{2-s} + \kappa^{4-2s} \right]
\sum^k_{l = 1} \frac{\eta_l^2}{1 + \left( \sum^k_{j = l + 1} \eta_j
\right)^{2-s}} \leq \frac12.
\end{gather}
Recall $\eta_t=\eta_0 T^{-2r/(2r+1)}$. We write $\eta=\eta_t$. For any $k\leq T-1$,
we bound the sum in (\ref{EQfromProp11ForThm2000BbuH}) as
\begin{align}
\sum^k_{l = 1} \frac{\eta_l^2}{1 + \left( \sum^k_{j = l + 1} \eta_j
\right)^{2-s}} = \eta^2+\sum^{k - 1}_{t = 1}\frac{\eta^2}{1+(t\eta)^{2-s}}
\leq \eta^2+\eta\int^{k-1}_0 \frac{\eta du}{1+(\eta u)^{2-s}}
\nonumber \\
\leq \eta^2+\eta\int^{\eta T} _0\frac{du}{1+u^{2-s}}
\leq \eta^2+\eta\left\{\begin{array}{ll}
\frac{2-s}{1-s},& 0<s<1,\\
\log(\eta T + 1),& s=1,
\end{array}\right.
\label{temp000FxPypl}
\end{align}
where in the last inequality we used (\ref{intRational000pZZb}) for $0<s<1$.
Recall that $\eta_0\leq 1$.
\begin{gather*}
\eta \log(\eta T+1)\leq \eta\log(T^{\frac{1}{2r+1}} + 1)
\leq T^{\frac{-2r}{2r+1}}\left( \log 2+\frac{1}{2r+1} \log T\right)
\leq 1+\frac{1}{2er},
\end{gather*}
where in the last inequality we used (\ref{polyBoundingLog000zNwX}).
We have a coarse estimate for $k\leq T-1$,
\begin{gather*}
\sum^k_{l = 1} \frac{\eta_l^2}{1 + \left( \sum^k_{j = l + 1} \eta_j
\right)^{2-s}}\leq C^{\mathsf{S}}_{2*}:=2+\left\{\begin{array}{ll}
1/(1-s),& 0<s<1,\\
1/(2er),& s=1.
\end{array}\right.
\end{gather*}
Therefore, to guarantee (\ref{EQfromProp11ForThm2000BbuH}) (which is just
(\ref{firstStepCond000s5c7AU}) for Proposition \ref{coarseEst000XbG0PR}),
we just need
\begin{gather} \label{specifyC2S000zO2SiQ}
C^{\mathsf{S}}_2=\left\{ 2c_{\mathsf{M}}\mathrm{Tr}(\mathscr{L}_K^s)
\left[ \left( \frac{2-s}{2e}
\right)^{2-s} + \kappa^{4-2s} \right]C^{\mathsf{S}}_{2*} \right\}^{-1/s}.
\end{gather}

We plug (\ref{boundCoarseExcessError000DVlpcS}) of Proposition
\ref{coarseEst000XbG0PR}, into (\ref{generalExcessErrorBound000DtU6sC})
of Theorem \ref{excessErrorGeneralTheorem000zzlJ7M} to obtain
\begin{align}
\mathbb{E}[\mathcal{E}(\hat{\varphi}_{T+1})]\leq &
\frac{\|g^*\|_2^2 ((r/e)^{2r} + \kappa^{4r})}{\left( \sum^T_{k=1}\eta_k \right)^{2r}}
+\sqrt{c_{\mathsf{M}}} \mathrm{Tr}(\mathscr{L}_K^s)
\left[ \left( \frac{2-s}{2e} \right)^{2-s} + \kappa^{4-2s} \right]\nonumber \\
&\times (2\|\beta^*\|_2^2 \sqrt{c_{\mathsf{M}}} + 2\sigma^2)
\sum^T_{k=1} \frac{\eta_k^2}{1+\left( \sum^T_{j=k+1} \eta_j \right)^{2-s}}.
\label{temp000ochT}
\end{align}
At the right-hand side, the first term is bounded with
\begin{gather*}
\left( \sum^T_{k=1}\eta_k \right)^{-2r}\leq \eta_0^{-2r} T^{-2r/(2r+1)},
\end{gather*}
and the second term is bounded in (\ref{temp000FxPypl}),
\begin{gather*}
\sum^T_{k=1}\frac{\eta_k^2}{1+\left( \sum^T_{j=k+1} \eta_j \right)^{2-s}}
\leq \eta_0^2 T^{-\frac{2r}{2r+1}}
+\eta_0\left\{\begin{array}{ll}
\frac{2-s}{1-s}T^{-\frac{2r}{2r+1}},& 0<s<1,\\
\frac{2r+2}{2r+1}T^{-\frac{2r}{2r+1}}\log (T+1),& s=1,
\end{array}\right.
\end{gather*}
where in the case $s=1$ we have used
\begin{gather*}
\log(T^a+1)\leq \log 2+a\log T\leq (a+1)\log (T+1),\quad \mbox{for any }T\geq 1,a>0.
\end{gather*}
The proof is therefore completed by letting
\begin{align}
C_2=&\eta_0^{-2r} \|g^*\|_2^2 ((r/e)^{2r} + \kappa^{4r})+2\sqrt{c_{\mathsf{M}}}
\mathrm{Tr}(\mathscr{L}_K^s)
\left[ \left( \frac{2-s}{2e} \right)^{2-s} + \kappa^{4-2s} \right]\nonumber\\
&\times(\|\beta^*\|_2^2 \sqrt{c_{\mathsf{M}}} + \sigma^2)
\times\left\{\begin{array}{ll}
\eta_0^2+\eta_0\frac{2-s}{1-s},&0<s<0,\\
\eta_0^2+\eta_0\frac{2r+2}{2r+1},& s=1.
\end{array}\right. \label{specifyC2000lBOk5F}
\end{align}
\qed

\section{Bounding the Estimation Error}\label{section: bounding the estimation error}

In this section, we bound the estimation error in $\mathcal{H}_K$ metric and
prove Theorems \ref{thm3HKOnline000EGE8Dv} and \ref{thm4HKFH0000KvQ6A}. This is
achieved by first estimating the expected error $\mathbb{E}[\|\hat{\beta}-
\beta^*\|_K^2]$ for general step-sizes in Theorem \ref{boundingEstErr0003fOlO},
and then substitute specific settings of step-sizes into the obtained bound.

\begin{thm}\label{boundingEstErr0003fOlO}
Let $t\geq 0$ be an integer. Let $\{ \hat{\beta}_k: 1\leq k\leq t+1 \}$ be
defined by (\ref{defIteration000rrCDVx}) with step-sizes $\{ \eta_k \}\subset
(0, \kappa^{-2}]$.  We take Assumption \ref{assumption2} (with $0<s\leq 1$),
\ref{assumption3}, and \ref{assumption4} (with $r>0$).  In particular, when
$t\geq 2$ we assume for any $k\leq t-1$ that
\begin{gather}\label{thmHKConvStepSizeCond00056cR0}
c_{\mathsf{M}} \mathrm{Tr}(\mathscr{L}_K^s)\left[ \left( \frac{2-s}{2e}
\right)^{2-s}+ \kappa^{4-2s} \right] \sum^k_{l=1}\frac{\eta_l^2}{1+\left(
\sum^k_{j=l+1} \eta_j \right)^{2 - s}} \leq \frac12.
\end{gather}
Then,
\begin{gather*}
\mathbb{E}[\| \hat{\beta}_{t+1} - \beta^* \|_K^2] \leq C^K\left[ \left(
\sum^t_{k=1}\eta_k \right)^{-2r} +\sum^t_{k=1} \frac{\eta_k^2}{1+\left(
\sum^t_{j=k+1} \eta_j \right)^{1-s}} \right],
\end{gather*}
where $C^K$ is a constant independent of $t$ and will be specified in the
proof, and when $s=1$, $(\sum^t_{j=k+1} \eta_j)^{1-s}:=1$ even when $k=t$ that
vanishes the sum.
\end{thm}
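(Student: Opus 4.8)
The plan is to feed the error decomposition of Proposition~\ref{propBoundHKNorm0005weUo} into the spectral estimates of Lemma~\ref{lem: elementary lemma} and Lemma~\ref{compactOperatorBd000R9EK3GO}, running an argument parallel to the one for Theorem~\ref{excessErrorGeneralTheorem000zzlJ7M} but carried out in the $\mathcal{H}_K$ geometry. Write the right-hand side of Proposition~\ref{propBoundHKNorm0005weUo} as $B_1+B_2$, where $B_1=\|[\prod_{k=1}^t(I-\eta_k L_K L_C)]\beta^*\|_K^2$ is the deterministic bias term and $B_2$ is the noise/sampling sum. The decisive algebraic observation is the intertwining identity $(I-\eta L_K L_C)L_K^{1/2}=L_K^{1/2}(I-\eta\mathscr{L}_C)$, used together with the RKHS contraction $\|L_K^{1/2}g\|_K\le\|g\|_2$ and the fact that $(I-\eta L_K L_C)$ maps $\mathcal{H}_K$ into itself. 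Since Assumption~\ref{assumption4} gives $\beta^*=L_K^{1/2}\mathscr{L}_C^r g^\dag$, iterating this identity reduces $B_1$ to $\|[\prod_{k=1}^t(I-\eta_k\mathscr{L}_C)]\mathscr{L}_C^r g^\dag\|_2^2$; since $\|\mathscr{L}_C\|_{\mathsf{op}(L^2)}\le\|L_K\|_{\mathsf{op}}\|L_C\|_{\mathsf{op}}\le\kappa^2$, we have $\{\eta_k\}\subset(0,\kappa^{-2}]\subset(0,1/\|\mathscr{L}_C\|_{\mathsf{op}}]$, so Lemma~\ref{lem: elementary lemma} with $A=\mathscr{L}_C$ and $\alpha=r$ gives $B_1\le\|g^\dag\|_2^2\,((r/e)^{2r}+\kappa^{4r})\,(\sum_{k=1}^t\eta_k)^{-2r}$.

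For $B_2$, the moment factor is handled as in \eqref{thm7ToUse1000YQFqz}: conditioning on the past and invoking Assumption~\ref{assumption3} gives $\mathbb{E}\sqrt{\mathbb{E}_{x_k}\langle\beta^*-\hat\beta_k,x_k\rangle_2^4}\le\sqrt{c_{\mathsf{M}}}\,\mathbb{E}[\mathcal{E}(\hat\varphi_k)]$. Condition \eqref{thmHKConvStepSizeCond00056cR0} is verbatim the hypothesis \eqref{firstStepCond000s5c7AU} of Proposition~\ref{coarseEst000XbG0PR} (which needs only Assumptions~\ref{assumption2} and \ref{assumption3}), so that proposition supplies the coarse bound $\mathbb{E}[\mathcal{E}(\hat\varphi_k)]\le 2\|\beta^*\|_2^2+\sigma^2/\sqrt{c_{\mathsf{M}}}$ for $k=1,\dots,t$, whence $\sigma^2+\sqrt{c_{\mathsf{M}}}\,\mathbb{E}[\mathcal{E}(\hat\varphi_k)]\le 2\sigma^2+2\sqrt{c_{\mathsf{M}}}\|\beta^*\|_2^2$. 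For the operator factor, the intertwining identity again yields $[\prod_{j=k+1}^t(I-\eta_j L_K L_C)]L_K x_k=L_K^{1/2}W_k x_k$ with $W_k:=[\prod_{j=k+1}^t(I-\eta_j\mathscr{L}_C)]L_K^{1/2}$, so the $\mathcal{H}_K$-norm there is at most $\|W_k x_k\|_2$; then Lemma~\ref{compactOperatorBd000R9EK3GO} (applicable since $W_k$ is compact) gives $\mathbb{E}\|W_k x_k\|_2^4\le c_{\mathsf{M}}(\mathbb{E}\|W_k x_k\|_2^2)^2$, and using $\mathbb{E}[\|Ax\|_2^2]=\mathrm{Tr}(AL_C A')$ one computes $\mathbb{E}\|W_k x_k\|_2^2=\mathrm{Tr}\big(\mathscr{L}_C\prod_{j=k+1}^t(I-\eta_j\mathscr{L}_C)^2\big)$.

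The remaining step is to bound this trace. Splitting $\mathscr{L}_C=\mathscr{L}_C^{s}\,\mathscr{L}_C^{1-s}$ and using $\mathrm{Tr}(CD)\le\|D\|_{\mathsf{op}}\mathrm{Tr}(C)$ for $C\succeq 0$ trace-class and $D\succeq 0$ bounded, the trace is at most $\mathrm{Tr}(\mathscr{L}_C^{s})\,\|\mathscr{L}_C^{(1-s)/2}\prod_{j=k+1}^t(I-\eta_j\mathscr{L}_C)\|_{\mathsf{op}}^2$; Lemma~\ref{lem: elementary lemma} with $A=\mathscr{L}_C$ and $\alpha=(1-s)/2$ bounds the operator norm by $\big(((1-s)/(2e))^{1-s}+\kappa^{2(1-s)}\big)/\big(1+(\sum_{j=k+1}^t\eta_j)^{1-s}\big)$ (when $s=1$ one instead uses \eqref{prodNormBoundAlphaEqZero000u5QQwy}, which matches the convention $(\sum_{j=k+1}^t\eta_j)^{1-s}:=1$). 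Crucially, $\mathscr{L}_K=(L_C^{1/2}L_K^{1/2})(L_K^{1/2}L_C^{1/2})$ and $\mathscr{L}_C=(L_K^{1/2}L_C^{1/2})(L_C^{1/2}L_K^{1/2})$ are of the form $U'U$ and $UU'$, so they share all nonzero eigenvalues and $\mathrm{Tr}(\mathscr{L}_C^{s})=\mathrm{Tr}(\mathscr{L}_K^{s})$; this is what lets Assumption~\ref{assumption2} enter. Collecting everything, $(\mathbb{E}\|W_k x_k\|_2^4)^{1/2}\le\sqrt{c_{\mathsf{M}}}\,\mathrm{Tr}(\mathscr{L}_K^s)\big(((1-s)/(2e))^{1-s}+\kappa^{2(1-s)}\big)/\big(1+(\sum_{j=k+1}^t\eta_j)^{1-s}\big)$, so $B_2\le 2\sqrt{c_{\mathsf{M}}}\,\mathrm{Tr}(\mathscr{L}_K^s)\big(((1-s)/(2e))^{1-s}+\kappa^{2(1-s)}\big)(\sigma^2+\sqrt{c_{\mathsf{M}}}\|\beta^*\|_2^2)\sum_{k=1}^t\eta_k^2/(1+(\sum_{j=k+1}^t\eta_j)^{1-s})$, and taking $C^K$ to be the larger of the two constant prefactors completes the proof; the case $t=0$ is vacuous since the right-hand side is then $+\infty$.

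I expect the difficulty to be organizational rather than conceptual. The two places needing care are (i) pushing all of the $\mathcal{H}_K$ products and norms through $L_K^{1/2}$ so that the whole computation takes place in $\mathscr{L}_C$ — settling for the inequality $\|L_K^{1/2}g\|_K\le\|g\|_2$ rather than an equality sidesteps any fuss about the kernel of $L_K$ — and (ii) transferring Assumption~\ref{assumption2}, which is stated for $\mathscr{L}_K$, to $\mathscr{L}_C$ via the equal-nonzero-spectrum identity before the trace estimate can be closed. Past those two observations, the proof re-runs the spectral bookkeeping already established for Theorem~\ref{excessErrorGeneralTheorem000zzlJ7M}.
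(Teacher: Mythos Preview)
Your proposal is correct and follows essentially the same route as the paper: start from Proposition~\ref{propBoundHKNorm0005weUo}, push everything through the intertwining $(I-\eta L_KL_C)L_K^{1/2}=L_K^{1/2}(I-\eta\mathscr{L}_C)$ to reduce to $\mathscr{L}_C$, bound the bias via Lemma~\ref{lem: elementary lemma} with $\alpha=r$, and bound the sampling term via Lemma~\ref{compactOperatorBd000R9EK3GO}, the trace splitting $\mathscr{L}_C=\mathscr{L}_C^{s}\cdot\mathscr{L}_C^{1-s}$, Lemma~\ref{lem: elementary lemma} with $\alpha=(1-s)/2$, and the coarse bound of Proposition~\ref{coarseEst000XbG0PR}. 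Your version is in fact slightly more explicit than the paper's in one place: the paper leaves its constant $C^K$ in terms of $\mathrm{Tr}(\mathscr{L}_C^s)$ without comment, whereas you supply the missing justification that $\mathscr{L}_K=U'U$ and $\mathscr{L}_C=UU'$ (with $U=L_K^{1/2}L_C^{1/2}$) share nonzero spectrum, so $\mathrm{Tr}(\mathscr{L}_C^s)=\mathrm{Tr}(\mathscr{L}_K^s)$ and Assumption~\ref{assumption2} indeed applies.
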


\proof
Assumption \ref{assumption4} that $\beta^*=L_K^{1/2}\mathscr{L}_C^rg^\dag$ (for
some $g^\dag\in L^2(\mathcal{T})$ and $r>0$) guarantees $\beta^*\in
\mathcal{H}_K$. So we start from Proposition \ref{propBoundHKNorm0005weUo} by
bounding
$\mathbb{E}[\|\hat{\beta}_{t+1} - \beta^*\|_K^2]\leq \Upsilon^K_1+
\Upsilon^K_2$, where
\begin{align*}
\Upsilon^K_1 = & \left\| \left[ \prod^t_{k=1}(I - \eta_k L_KL_C) \right]
\beta^* \right\|_K^2,\quad\mbox{and}\\
\Upsilon^K_2 = & \sum^t_{k=1}\eta_k^2 (\sigma^2 + \mathbb{E}\sqrt{
\mathbb{E}_{x_k} \langle \beta^* - \hat{\beta}_k, x_k \rangle_2^4}) \left(
\mathbb{E} \left\| \left[ \prod^t_{j = k+1} (I - \eta_j L_KL_C) \right] L_K x_k
\right\|_K^4 \right)^{1/2}.
\end{align*}

Recall that $\mathscr{L}_C=L_K^{1/2}L_CL_K^{1/2}$. We use Lemma
\ref{lem: elementary lemma} to bound the operator norm,
\begin{align*}
\Upsilon^K_1 & \leq \left\| L_K^{1/2} \left[ \prod^t_{k=1} (I-\eta_k
\mathscr{L}_C) \right] \mathscr{L}_C^r g^\dag \right\|_K^2
\leq \kappa^2 \left\| \mathscr{L}_C^r \prod^t_{k=1}(I - \eta_k \mathscr{L}_C)
\right\|^2_{\mathsf{op}(L^2)} \|g^\dag\|_2^2\\
&\leq \frac{\kappa^2 \|g^\dag\|_2^2 ((r/e)^{2r} + \|\mathscr{L}_C
\|^{2r}_{\mathsf{op}(L^2)})}{1 + (\sum^t_{k=1} \eta_k)^{2r}}.
\end{align*}

For $\Upsilon^K_2$, we consider its different factors separately. First, recall
that $\hat{\beta}_k$ is independent of $x_k$. Assumption \ref{assumption3}
(moment condition) guarantees that
\begin{gather*}
\mathbb{E}\sqrt{\mathbb{E}_{x_k} \langle \beta^* - \hat{\beta}_k, x_k
\rangle_2^4} \leq \sqrt{c_{\mathsf{M}}} \mathbb{E} [\langle \beta^* -
\hat{\beta}_k , x_k\rangle_2^2] = \sqrt{c_{\mathsf{M}}}
\mathbb{E}[\mathcal{E}(\hat{\varphi}_{t})].
\end{gather*}
With Proposition \ref{coarseEst000XbG0PR}, our assumption on step-sizes
guarantees that
\begin{gather*}
\mathbb{E}[\mathcal{E}(\hat{\varphi}_k)] \leq 2\|\beta^*\|_2^2 +
\frac{\sigma^2}{\sqrt{c_{\mathsf{M}}}}, \mbox{ for all }k=1,\ldots, t.
\end{gather*}
Second, we use Lemma \ref{compactOperatorBd000R9EK3GO} and recall that
$\|L_K^{1/2} f\|_K = \|f\|_2$ for any $f\in L^2(\mathcal{T})$ to obtain
\begin{align*}
\Upsilon^K_{2*}&:=\left( \mathbb{E} \left\| \left[ \prod^t_{j=k+1} (I - \eta_j
L_KL_C)\right] L_K x_k \right\|_K^4 \right)^{1/2}\\
&=\left( \mathbb{E} \left\| \left[ \prod^t_{j=k+1} (I-\eta_j \mathscr{L}_C
\right] L_K^{1/2} x_k \right\|_2^4 \right)^{1/2}\\
&\leq \sqrt{c_{\mathsf{M}}} \mathbb{E} \left\| \left[ \prod^t_{j=k+1} (I -
\eta_j\mathscr{L}_C) \right] L_K^{1/2} x_k \right\|_2^2.
\end{align*}
Recall that $\mathbb{E}[\|Ax_t\|_2^2] = \mathbb{E}\mathrm{Tr}(Ax_t\otimes x_t
A') = \mathrm{Tr} (AL_CA')$ for any bounded linear operator $A$ on
$L^2(\mathcal{T})$.
\begin{align*}
\Upsilon^K_{2*}&\leq \sqrt{c_{\mathsf{M}}} \mathrm{Tr}\left(
\mathscr{L}_C\prod^t_{j=k+1} (I - \eta_j \mathscr{L}_C)^2 \right) \\
&\leq \sqrt{c_{\mathsf{M}}} \mathrm{Tr}(\mathscr{L}^s_C)\left\|
\mathscr{L}_C^{\frac{1-s}{2}} \prod^t_{j=k+1} (I - \eta_j \mathscr{L}_C)
\right\|_{\mathsf{op}(L^2)}^2,
\end{align*}
where we abuse the notation a little and let $\mathscr{L}_C^{(1-s)/2}$ denote
the identity operator when $s=1$. Thanks to Lemma \ref{lem: elementary lemma},
\begin{gather*}
\Upsilon^K_{2*}\leq \sqrt{c_{\mathsf{M}}} \mathrm{Tr}(\mathscr{L}_C^s)
\frac{\left( \frac{1-s}{2e} \right)^{1-s} + \|\mathscr{L}_C\|_{\mathsf{op}
(L^2)}^{1-s}}{1 + (\sum^t_{j=k+1} \eta_j)^{1-s}}, \mbox{ when }0<s<1,
\end{gather*}
and $\Upsilon^K_{2*}\leq \sqrt{c_{\mathsf{M}}} \mathrm{Tr}(\mathscr{L}_C)$ when
$s=1$.

To summarize, for any $t\geq 1$, when $0<s<1$,
\begin{gather}\label{HKBound1000pP0WaG}
\mathbb{E}[\|\hat{\beta}_{t + 1} - \beta^*\|_K^2] \leq C^K
\left[ \left( \sum^t_{k=1} \eta_k \right)^{-2r} + \sum^t_{k = 1}
\frac{\eta_k^2}{1 + (\sum^t_{j=k+1} \eta_j)^{1-s}} \right],
\end{gather}
where
\begin{align*}
C^K=\max&\left\{ \kappa^2 \|g^\dag\|_2^2 \left(\left(\frac{r}{e}\right)^{2r} + \|\mathscr{L}_C
\|_{\mathsf{op} (L^2)}^{2r}\right),\right.\\
&\;\;\left. (2\sqrt{c_{\mathsf{M}}} \|\beta^*\|_2^2 + 2\sigma^2)
\sqrt{c_{\mathsf{M}}} \mathrm{Tr} (\mathscr{L}_C^s)
\left[\left(\frac{1-s}{2e}\right)^{1-s} + \|\mathscr{L}_c\|_{\mathsf{op}
(L^2)}^{1-s}\right] \right\},
\end{align*}
and when $s=1$,
\begin{gather}
\label{HKBound2000z3KpG}
\mathbb{E}[\|\hat{\beta}_{t + 1} - \beta^*\|_K^2] \leq C^K \left[ \left(
\sum^t_{k=1}\eta_k \right)^{-2r}+\frac12 \sum^t_{k=1} \eta_k^2 \right],
\end{gather}
where $C^K = \max\left\{ \kappa^2 \|g^\dag\|_2^2 ((r/e)^{2r} + \|\mathscr{L}_C
\|_{\mathsf{op} (L^2)}^{2r}), 4(\sqrt{c_{\mathsf{M}}} \|\beta^*\|_2^2 +
\sigma^2) \sqrt{c_{\mathsf{M}}} \mathrm{Tr} (\mathscr{L}_C) \right\}$. Bounds
(\ref{HKBound1000pP0WaG}) and (\ref{HKBound2000z3KpG}) are unified by abusing
the notation and denoting $0^0=1$ (so as to make $(\sum^t_{j=k+1}\eta_j)^0=1$
even when the sum is zero). The proof is then completed.
\qed

We are at the position of proving Theorems \ref{thm3HKOnline000EGE8Dv} and
\ref{thm4HKFH0000KvQ6A} as corollaries of Theorem \ref{boundingEstErr0003fOlO}.

\proof[Proof of Theorem \ref{thm3HKOnline000EGE8Dv}]

To apply Theorem \ref{boundingEstErr0003fOlO}, we need only to select a proper bound
$C_3^{\mathsf{S}}$ of step-sizes, to guarantee (\ref{thmHKConvStepSizeCond00056cR0}),
i.e., for $k=1,\ldots, t-1$,
\begin{gather}\label{thmHKConvStepSizeCondCOPY000k8mJYt}
c_{\mathsf{M}} \mathrm{Tr}(\mathscr{L}_K^s)\left[ \left( \frac{2-s}{2e}
\right)^{2-s}+ \kappa^{4-2s} \right] \sum^k_{l=1}\frac{\eta_l^2}{1+\left(
\sum^k_{j=l+1} \eta_j \right)^{2 - s}} \leq \frac12.
\end{gather}
To bound the sum in (\ref{thmHKConvStepSizeCondCOPY000k8mJYt}), we apply Lemma
\ref{lemOnlineStepSize000m7uT2} with $\nu = 2-s>1$ and note that
$0<\theta\leq 1/2$, so $\omega = -\theta<0$. We have
\begin{gather*}
\sum^k_{l=1}\frac{\eta_l^2}{1+\left( \sum^k_{j=l+1} \eta_j \right)^{2 - s}} \leq
C^{\mathsf{OL}} (\nu=2-s, \theta) (k+1)^{-\theta}
\leq \eta_0^s \left( \frac{4^\theta}{\log 2} +9^\theta C_0^{\mathsf{OL}} \right),
\end{gather*}
where the last inequality is just (\ref{tempCOLBound000Viqt}). Therefore, to achieve
(\ref{thmHKConvStepSizeCondCOPY000k8mJYt}) (or equivalently,
\ref{thmHKConvStepSizeCond00056cR0} for Theorem \ref{boundingEstErr0003fOlO}), we
simply need to let
\begin{gather}\label{defC3S000vypUI}
C_3^{\mathsf{S}} = \left\{ 2c_{\mathsf{M}} \mathrm{Tr}(\mathscr{L}^s_K)
\left[ \left( \frac{2-s}{2e} \right)^{2-s} + \kappa^{4-2s} \right]
\left( \frac{4^\theta}{\log 2} + 9^\theta C_0^{\mathsf{OL}} \right)
\right\}^{-1/s}.
\end{gather}

By Theorem \ref{boundingEstErr0003fOlO},
\begin{gather}\label{temp000pg4EN}
\mathbb{E}[\| \hat{\beta}_{t+1} - \beta^* \|_K^2] \leq C^K\left[ \left(
\sum^t_{k=1}\eta_k \right)^{-2r} +\sum^t_{k=1} \frac{\eta_k^2}{1+\left(
\sum^t_{j=k+1} \eta_j \right)^{1-s}} \right].
\end{gather}
We bound the first term in the right-hand side of (\ref{temp000pg4EN})
by Lemma \ref{lemOnlineStepSize000m7uT2} with $\nu=2r$,
\begin{gather}
\label{tempA000E3yNP}
\left(\sum^t_{k=1}\eta_k \right)^{-2r}
\leq \left( \frac{\eta_0 (1-2^{\theta-1})}{1-\theta} \right)^{-2r}
(t+1)^{-2r(1-\theta)}.
\end{gather}
We bound the last sum in (\ref{temp000pg4EN}) by Lemma \ref{lemOnlineStepSize000m7uT2}.
Note that now $\nu=1-s\in(0,1)$ and $0<\theta\leq 1/2$,
so for Lemma \ref{lemOnlineStepSize000m7uT2},
\begin{gather*}
\omega=1-2\theta-\nu+\nu\theta=s(1-\theta)-\theta=\left\{\begin{array}{ll}
\displaystyle -\frac{2r}{2r+s+1},& 2r<1-s,\\
\displaystyle -(1-s)/2,& 2r\geq 1-s.
\end{array}\right.
\end{gather*}
From the definition fo $\theta$, we see that $(\nu,\theta)\in\Omega$ if and only if
$\theta=1/2$, which is equivalent to $2r\geq 1-s$. So,
\begin{gather}
\label{tempB000rbkU3}
\sum^t_{k=1} \frac{\eta_k^2}{1+\left(
\sum^t_{j=k+1} \eta_j \right)^{1-s}}\leq
C^{\mathsf{OL}} \left\{\begin{array}{ll}
(t+1)^{-2r/(2r+s+1)},& 2r<1-s,\\
(t+1)^{-(1-s)/2} \log (t+1), & 2r\geq 1-s.
\end{array}\right.
\end{gather}

We now show that the rates of (\ref{tempA000E3yNP}) is no slower than that of
(\ref{tempB000rbkU3}). In fact, when $2r<1-s$, $-2r(1-\theta)=-\frac{2r}{2r+s+1}$.
When $2r\geq 1-s$, $-2r(1-\theta)=-r\leq -(1-s)/2$. We have proved that
\begin{gather*}
\mathbb{E}\left[ \left\| \hat{\beta}_{t+1} - \beta^* \right\|_K^2 \right]
\leq C_3\left\{\begin{array}{ll}
(t+1)^{-2r/(2r+s+1)},& 2r<1-s,\\
(t+1)^{-(1-s)/2} \log (t+1), & 2r\geq 1-s,
\end{array}\right.
\end{gather*}
where
\begin{gather}\label{C3Specify000JTif}
C_3 = \frac{C^K}{\log 2}\left( \frac{\eta_0 (1-2^{\theta-1})}{1-\theta} \right)^{-2r}
+C^KC^{\mathsf{OL}}.
\end{gather}
\qed

\proof[Proof of Theorem \ref{thm4HKFH0000KvQ6A}]
First, we specify that
\begin{gather}\label{specifyC4S000KtAVF}
C_4^{\mathsf{S}} = \left[ 2c_{\mathsf{M}} \mathrm{Tr} (\mathscr{L}_K^s) \left(
\left( \frac{2-s}{2e} \right)^{2-s} + \kappa^{4-2s} \right) \left(
1+\frac{1-\theta}{e\theta}\right) \right]^{-1}.
\end{gather}
Then, we verify bound (\ref{thmHKConvStepSizeCond00056cR0}) of Theorem
\ref{boundingEstErr0003fOlO}. To this end, we substitute $\eta_t=\eta_0
T^{-\theta}$ with $\theta = (s+2r) / (1+s+2r)$. Note that $2-s\geq 1$,
$\eta_0\leq 1$, and $k\leq T - 1$. We have
\begin{align*}
\sum^k_{l = 1}\frac{\eta_l^2}{1+\left( \sum^k_{j = l+1} \eta_j \right)^{2-s}}
&\leq \sum^k_{l=1} \frac{\eta_0^2 T^{-2\theta}}{1 + (k-l)\eta_0 T^{-\theta}}
\leq \eta_0^2 T^{-2\theta}+\int^{T - 2}_0 \frac{\eta_0^2T^{-2\theta} du}{1 +
\eta_0 T^{-\theta}u}\\
&\leq \eta_0T^{-\theta} + \eta_0 T^{-\theta} \int^{\eta_0T^{-\theta}
(T-2)}_0\frac{du}{1+u}\\
&\leq \eta_0 (1 + T^{-\theta}\log(T^{1-\theta}))\\
&\leq \eta_0\left( 1+\frac{1-\theta}{e\theta} \right),
\end{align*}
where in the last step we used $T^{-\theta}\log T\leq (e\theta)^{-1}$ as
verified in (\ref{polyBoundingLog000zNwX}). So
(\ref{thmHKConvStepSizeCond00056cR0}) is verified. Then by Theorem
\ref{boundingEstErr0003fOlO},
\begin{gather}\label{localBD0009La9igy}
\mathbb{E}[\|\hat{\beta}_{T+1} - \beta^*\|_K^2] \leq C^K \left[
(\eta_0T^{1-\theta})^{-2r} +\sum^T_{k=1} \frac{\eta_0^2 T^{-2\theta}}{1+\left(
\eta_0T^{-\theta} (T-k) \right)^{1-s}} \right].
\end{gather}
The last sum in the above inequality is estimated by
\begin{align}
\sum^T_{k=1} \frac{\eta_0^2 T^{-2\theta}}{1+(\eta_0 T^{-\theta} (T-k))^{1-s}}
&\leq \eta_0^2 T^{-2\theta}+\int^{T -1}_0 \frac{\eta_0^2T^{-2\theta}
du}{1+(\eta_0T^{-\theta}u)^{1-s}} \nonumber \\
&\leq \eta_0 T^{-\theta} + \eta_0T^{-\theta} \int^{T^{1-\theta}}_0
\frac{du}{1+u^{1-s}}. \label{localBD000YpFOr}
\end{align}
Since $0<s\leq 1$, for any $b\geq 1$,
\begin{gather}\label{localBD000ez2rNQ}
\int^b_0\frac{du}{1+u^{1-s}}\leq 1+\int^b_1 u^{s-1} du = 1+\frac{b^s-1}{s}\leq
b^s/s.
\end{gather}
We combine (\ref{localBD0009La9igy}), (\ref{localBD000YpFOr}), and
(\ref{localBD000ez2rNQ}) to have
\begin{gather*}
\mathbb{E}[\|\hat{\beta}_{T+1} - \beta^*\|_K^2] \leq C^k\left( \eta_0^{-2r}
T^{-2r(1-\theta)} + \eta_0 T^{-\theta} +\frac{\eta_0}{s}T^{-\theta+s(1-\theta)}
\right).
\end{gather*}
From $\theta=\frac{s+2r}{1+s+2r}$, we have $-2r(1-\theta)=-\theta+s(1-\theta) =
-2r/(1+s+2r) \geq -\theta$. So the proof is completed with
\begin{gather}\label{C4Specify000FQapF}
C_4 := C^K(\eta_0^{-2r} + \eta_0 + \frac{\eta_0}{s}).
\end{gather} The proof of Theorem \ref{thm4HKFH0000KvQ6A} is complete.
\qed

\section*{Acknowledgments}
Part of the work of Xin Guo is done when he worked at The Hong Kong Polytechnic
University and supported partially by the Research Grants Council of
Hong Kong [Project No.\ PolyU 15305018]. The work of Zheng-Chu Guo is supported by Zhejiang Provincial Natural Science Foundation of China [Project No.\ LR20A010001], National Natural Science Foundation of China [Project No.\ U21A20426, No.\ 12271473], and Fundamental Research Funds for the Central Universities [Project No.\ 2021XZZX001]. The work of Lei Shi is supported partially by Shanghai Science and Technology Program [Project No.\ 21JC1400600 and Project No.\ 20JC1412700] and National Natural Science Foundation of China [Grant No.\ 12171093]. All authors contributed equally to this work and are listed alphabetically. The corresponding author is Lei Shi.

\bibliographystyle{plain}
\bibliography{refBib.bib}

\begin{thebibliography}{10}

\bibitem{bauer2007regularization}
Frank Bauer, Sergei Pereverzev, and Lorenzo Rosasco.
\newblock On regularization algorithms in learning theory.
\newblock {\em J. Complexity}, 23(1):52--72, 2007.

\bibitem{berthier2020tight}
Rapha{\"e}l Berthier, Francis Bach, and Pierre Gaillard.
\newblock Tight nonparametric convergence rates for stochastic gradient descent
  under the noiseless linear model.
\newblock {\em Advances in Neural Information Processing Systems},
  33:2576--2586, 2020.

\bibitem{blanchard2010optimal}
Gilles Blanchard and Nicole Kr{\"a}mer.
\newblock Optimal learning rates for kernel conjugate gradient regression.
\newblock {\em Advances in neural information processing systems}, 23, 2010.

\bibitem{CaiHall2006-MR2291496}
T.~Tony Cai and Peter Hall.
\newblock Prediction in functional linear regression.
\newblock {\em Ann. Statist.}, 34(5):2159--2179, 2006.

\bibitem{CaiYuan2012-MR3010906}
T.~Tony Cai and Ming Yuan.
\newblock Minimax and adaptive prediction for functional linear regression.
\newblock {\em J. Amer. Statist. Assoc.}, 107(499):1201--1216, 2012.

\bibitem{caponnetto2007optimal}
Andrea Caponnetto and Ernesto De~Vito.
\newblock Optimal rates for the regularized least-squares algorithm.
\newblock {\em Found. Comput. Math.}, 7(3):331--368, 2007.

\bibitem{ChenTangFanGuo2022-MR4388513}
Xiaming Chen, Bohao Tang, Jun Fan, and Xin Guo.
\newblock Online gradient descent algorithms for functional data learning.
\newblock {\em J. Complexity}, 70(101635):1--14, 2022.

\bibitem{CuckerZhou2007-MR2354721}
Felipe Cucker and Ding-Xuan Zhou.
\newblock {\em Learning theory: an approximation theory viewpoint}, volume~24
  of {\em Cambridge Monographs on Applied and Computational Mathematics}.
\newblock Cambridge University Press, Cambridge, 2007.
\newblock With a foreword by Stephen Smale.

\bibitem{dieuleveut2016nonparametric}
Aymeric Dieuleveut and Francis Bach.
\newblock Nonparametric stochastic approximation with large step-sizes.
\newblock {\em Ann. Statist.}, 44(4):1363--1399, 2016.

\bibitem{FanLvShi2019-JOURNALHOMEPAGE}
Jun Fan, Fusheng Lv, and Lei Shi.
\newblock An {RKHS} approach to estimate individualized treatment rules based
  on functional predictors.
\newblock {\em Math. Found. Comput.}, 2(2):169--181, 2019.

\bibitem{guo2022rates}
Xin Guo, Junhong Lin, and Ding-Xuan Zhou.
\newblock Rates of convergence of randomized {Kaczmarz} algorithms in {Hilbert}
  spaces.
\newblock {\em Appl. Comput. Harmon. Anal.}, 61:288--318, 2022.

\bibitem{GuoLinZhou2017-MR3672238}
Zheng-Chu Guo, Shao-Bo Lin, and Ding-Xuan Zhou.
\newblock Learning theory of distributed spectral algorithms.
\newblock {\em Inverse Problems}, 33(7):074009, 29, 2017.

\bibitem{guo2019fast}
Zheng-Chu Guo and Lei Shi.
\newblock Fast and strong convergence of online learning algorithms.
\newblock {\em Adv. Comput. Math}, 45(5):2745--2770, 2019.

\bibitem{GuoShi2019-MR3994990}
Zheng-Chu Guo and Lei Shi.
\newblock Optimal rates for coefficient-based regularized regression.
\newblock {\em Appl. Comput. Harmon. Anal.}, 47(3):662--701, 2019.

\bibitem{HallHorowitz2007-MR2332269}
Peter Hall and Joel~L. Horowitz.
\newblock Methodology and convergence rates for functional linear regression.
\newblock {\em Ann. Statist.}, 35(1):70--91, 2007.

\bibitem{HeSunMFC2022-OfficialSite}
Xuqing He and Hongwei Sun.
\newblock Error analysis of classification learning algorithms based on lums
  loss.
\newblock {\em Math. Found. Comput.}, published online first, 2022.

\bibitem{lin2017distributed}
Shao-Bo Lin, Xin Guo, and Ding-Xuan Zhou.
\newblock Distributed learning with regularized least squares.
\newblock {\em J. Mach. Learn. Res.}, 18(1):3202--3232, 2017.

\bibitem{pillaud2018statistical}
Loucas Pillaud-Vivien, Alessandro Rudi, and Francis Bach.
\newblock Statistical optimality of stochastic gradient descent on hard
  learning problems through multiple passes.
\newblock {\em Advances in Neural Information Processing Systems}, 31, 2018.

\bibitem{ramsay2005fitting}
James~O. Ramsay and Bernard~W. Silverman.
\newblock {\em Fitting differential equations to functional data: Principal
  differential analysis}.
\newblock Springer, 2005.

\bibitem{SteinwartChristmann2008-MR2450103}
Ingo Steinwart and Andreas Christmann.
\newblock {\em Support vector machines}.
\newblock Information Science and Statistics. Springer, New York, 2008.

\bibitem{WangShengMFC2022-OfficialSite}
Shuhua Wang and Baohuai Sheng.
\newblock Error analysis of kernel regularized pairwise learning with a
  strongly convex loss.
\newblock {\em Math. Found. Comput.}, published online first, 2022.

\bibitem{YingPontil2008-MR2443089}
Yiming Ying and Massimiliano Pontil.
\newblock Online gradient descent learning algorithms.
\newblock {\em Found. Comput. Math.}, 8(5):561--596, 2008.

\bibitem{YuanCai2010-MR2766857}
Ming Yuan and T.~Tony Cai.
\newblock A reproducing kernel {H}ilbert space approach to functional linear
  regression.
\newblock {\em Ann. Statist.}, 38(6):3412--3444, 2010.

\bibitem{Zhou2008-MR2444183}
Ding-Xuan Zhou.
\newblock Derivative reproducing properties for kernel methods in learning
  theory.
\newblock {\em J. Comput. Appl. Math.}, 220(1-2):456--463, 2008.

\end{thebibliography}

\appendix

\section{Appendix: A Technical Lemma}
In this section of Appendix, we include the following Lemma
\ref{techLemma000avyDpd}, which is commonly used in the literature
\cite{ChenTangFanGuo2022-MR4388513, YingPontil2008-MR2443089, guo2022rates},
with smaller domains of parameters.
Lemma \ref{techLemma000avyDpd} covers the whole domain $(\nu,\theta)\in
(0,\infty)\times(0,1)$, and the proof is not elsewhere available to our
best knowledge. We use Figure \ref{figOmega0006Ne1p} to elucidate
the rates in Lemma \ref{techLemma000avyDpd}.
Figure \ref{figOmega0006Ne1p} is also helpful for understanding
the rates in Lemma \ref{lemOnlineStepSize000m7uT2}, and
Theorems \ref{thm: decreasing step size capacity dependent L2}
and \ref{thm3HKOnline000EGE8Dv}.

\begin{lem}\label{techLemma000avyDpd}
For $b\geq 2$, $0<\theta<1$, and $\nu>0$,
\begin{gather}
\label{intEst10008Atdm}
\int^b_1\frac{u^{-2\theta} du}{1 + (b^{1-\theta} - u^{1-\theta})^\nu}
\leq C_0^{\mathsf{OL}}\left\{
\begin{array}{ll}
b^{\omega}\log b,& \theta=\frac12 \mbox{ and }\nu\leq 1,\mbox{ or, }
\theta\leq \frac12\mbox{ and }\nu=1,\\
b^\omega,& \mbox{otherwise},
\end{array}
\right.
\end{gather}
where $C_0^{\mathsf{OL}}$ is independent of $b$, and
\begin{gather}\label{omegaDef000VeUi}
\omega=\omega(\nu,\theta)=\left\{\begin{array}{ll}
1-2\theta-\nu+\nu\theta,&0<\nu\leq 1\mbox{ and }0<\theta\leq 1/2,\\
-\theta,& \nu\geq 1\mbox{ and }0<\theta\leq \nu/(\nu+1),\\
-\nu(1-\theta),& 1/2\leq\theta<1 \mbox{ and }\theta \geq \nu/(\nu+1).
\end{array}\right.
\end{gather}
In particular, when $\nu\geq 1$, $\omega = -\min\{ \theta, \nu(1-\theta) \}$.
For different combinations of parameters $\nu$ and $\theta$,
the constant $C_0^{\mathsf{OL}}=C_0^{\mathsf{OL}}(\nu,\theta)$
will be specified below in (\ref{COLestEq1000hAx6}), (\ref{COLestEq2000Pe0ow}),
(\ref{COLestEq3000Pe0ow}), (\ref{COLestEq4000QR2aT}), and (\ref{COLestEq5000LW6Z}),
respectively.
\end{lem}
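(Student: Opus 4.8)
The plan is to estimate the integral in (\ref{intEst10008Atdm}) by splitting the range $[1,b]$ at its midpoint $b/2$ (legitimate since $b\ge 2$) and bounding the two pieces by the two extreme behaviours of the integrand, near $u=1$ and near $u=b$. On $[1,b/2]$ the denominator is uniformly large: from $u^{1-\theta}\le(b/2)^{1-\theta}$ we get $b^{1-\theta}-u^{1-\theta}\ge(1-2^{\theta-1})b^{1-\theta}$, so $1+(b^{1-\theta}-u^{1-\theta})^\nu\ge(1-2^{\theta-1})^\nu b^{\nu(1-\theta)}$. On $[b/2,b]$ the numerator is uniformly small: $u^{-2\theta}\le 2^{2\theta}b^{-2\theta}$.

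For the first piece I would then use
\[
\int_1^{b/2}\frac{u^{-2\theta}\,du}{1+(b^{1-\theta}-u^{1-\theta})^\nu}\ \le\ (1-2^{\theta-1})^{-\nu}\,b^{-\nu(1-\theta)}\int_1^{b/2}u^{-2\theta}\,du,
\]
where the one-variable integral $\int_1^{b/2}u^{-2\theta}\,du$ is $O(1)$, $O(\log b)$ or $O(b^{1-2\theta})$ according as $\theta>\tfrac12$, $\theta=\tfrac12$ or $\theta<\tfrac12$. This bounds the first piece by a constant times $b^{e_1}$, with an extra $\log b$ when $\theta=\tfrac12$, where $e_1=-\nu(1-\theta)$ for $\theta\ge\tfrac12$ and $e_1=1-2\theta-\nu+\nu\theta$ for $\theta\le\tfrac12$ (the two formulas agreeing at $\theta=\tfrac12$).

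For the second piece I would substitute $v=u^{1-\theta}$ and then $w=b^{1-\theta}-v$ (equivalently, use the mean value theorem for $x\mapsto x^{1-\theta}$, giving $b^{1-\theta}-u^{1-\theta}\ge(1-\theta)b^{-\theta}(b-u)$); this reduces $\int_{b/2}^b du/(1+(b^{1-\theta}-u^{1-\theta})^\nu)$ to at most $\tfrac{1}{1-\theta}b^{\theta}\int_0^{(1-2^{\theta-1})b^{1-\theta}}dw/(1+w^\nu)$. The last integral is $O(1)$, $O(\log b)$ or $O(b^{(1-\theta)(1-\nu)})$ according as $\nu>1$, $\nu=1$ or $\nu<1$, so after the factor $b^{-2\theta}\cdot b^{\theta}=b^{-\theta}$ the second piece is bounded by a constant times $b^{e_2}$, with an extra $\log b$ when $\nu=1$, where $e_2=-\theta$ for $\nu\ge1$ and $e_2=1-2\theta-\nu+\nu\theta$ for $\nu\le1$; the sub-case $(1-2^{\theta-1})b^{1-\theta}\le1$ is harmless since then the piece is still $O(b^{-\theta})$ and $-\theta\le\omega$ with $b\ge1$.

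Summing the two pieces, (\ref{intEst10008Atdm}) is of order $b^{\max\{e_1,e_2\}}$, and a $\log b$ survives exactly when the maximizing exponent is attained by a term carrying one. What is left is a case check over the three parameter ranges of (\ref{omegaDef000VeUi}): using the equivalences $1-2\theta\ge0\iff\theta\le\tfrac12$, $(\nu-1)(1-\theta)\ge0\iff\nu\ge1$, and $\nu(1-\theta)\le\theta\iff\theta\ge\nu/(\nu+1)$ one determines which of $e_1,e_2$ dominates and recovers $\omega$; the same comparisons show a logarithm persists precisely for the two boundary families in (\ref{intEst10008Atdm}), namely $\theta=\tfrac12$ with $\nu\le1$ and $\nu=1$ with $\theta\le\tfrac12$, and the one-variable sub-estimates furnish the explicit constants. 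I do not expect a single hard step; the main obstacle is simply this exhaustive bookkeeping over the whole region $(\nu,\theta)\in(0,\infty)\times(0,1)$, simultaneously tracking the three candidate rates and the boundary logarithms.
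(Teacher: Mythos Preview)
Your proposal is correct and follows essentially the same approach as the paper: the paper also splits at $b/2$, bounds the $[1,b/2]$ piece by freezing the denominator at $u=b/2$ and integrating $u^{-2\theta}$, and for the $[b/2,b]$ piece makes the same change of variable $\xi=b^{1-\theta}-u^{1-\theta}$ (absorbing one $u^{-\theta}$ into $d\xi$ rather than pulling out $u^{-2\theta}$ first, which is the only cosmetic difference) to reduce to $\int_0^{cb^{1-\theta}}d\xi/(1+\xi^\nu)$, then performs exactly the five-region case check you outline to identify $\omega=\max\{e_1,e_2\}$ and the two boundary families where the logarithm survives.
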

We purposely allow the domains in (\ref{omegaDef000VeUi}) to overlap,
to facilitate the applications.
In spite of the piecewise definition, $\omega(\nu,\theta)$ is a continuous
function on $(0,\infty)\times (0,1)$. We demonstrate the structure of $\omega$
in Figure \ref{figOmega0006Ne1p}.
The estimate in Lemma \ref{techLemma000avyDpd} is tight,
that is, we can reverse the order of the
inequality (\ref{intEst10008Atdm}) by replacing $C_0^{\mathsf{OL}}$ with a smaller
positive constant independent of $b$. We skip the discussion of tightness.

\begin{figure}[h]
\centering
\includegraphics[width=0.6\textwidth]{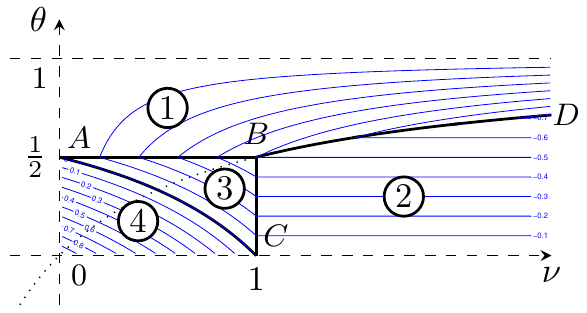}
\caption{Summary of the convergence rates $\omega(\nu,\theta)$.
The domain $(0,\infty)\times (0,1)$ is divided into four regimes
by the solid black lines. Contours of $\omega$ are given in blue lines.
In Regime 1, $\omega = -\nu(1-\theta)$.
In Regime 2, $\omega=-\theta$.
In Regimes 3 and 4, $\omega = 1-2\theta-\nu+\nu\theta=-\nu(1-\theta) +
(1-2\theta)= -\theta+(1-\theta)(1-\nu)$.
The values of $\omega$ continuously extend to the boundaries between regimes.
Arc BD is from the hyperbola $\theta=\nu/(\nu+1)$ which
is extended by the dotted line. $\omega$ approaches its supremum along the
ridge $B\to D$. Logarithm factor in
(\ref{intEst10008Atdm}) only appears on the line
segments AB and BC including point B. In Regime 4 including Arc AC
(which is $\theta=(1-\nu)/(2-\nu)$), $\omega\geq 0$ and the integral in
(\ref{intEst10008Atdm}) does not converge to zero as $b\to\infty$.}
\label{figOmega0006Ne1p}
\end{figure}

\proof[Proof of Lemma \ref{techLemma000avyDpd}]
To verify the estimate (\ref{intEst10008Atdm}), we divide the integral interval
into $[1,b/2]$ and $[b/2, b]$, and denote $\Upsilon_1^{\mathsf{OL}}$
and $\Upsilon_2^{\mathsf{OL}}$ the associated parts of the integral in
(\ref{intEst10008Atdm}), respectively. First,
\begin{align}
\Upsilon^{\mathsf{OL}}_1\leq & \frac{1}{1+(b^{1-\theta}-(b/2)^{1-\theta})^\nu}
\int^{b/2}_1 u^{-2\theta} du \nonumber\\
\leq & \frac{b^{-\nu(1-\theta)}}{(1-2^{\theta-1})^\nu}\times \left\{
\begin{array}{ll}
\frac{(b/2)^{1-2\theta}}{1-2\theta},&\mbox{when }0<\theta<1/2,\\
\log\frac{b}{2},& \mbox{when }\theta=1/2,\\
\frac{1}{2\theta-1},&\mbox{when }1/2<\theta<1,
\end{array}
\right. \nonumber\\
\leq & \frac{1}{(1-2^{\theta-1})^\nu }\times\left\{\begin{array}{ll}
\displaystyle \frac{2^{2\theta-1}}{1-2\theta}
b^{1-2\theta-\nu+\nu\theta},& 0<\theta<1/2,\\
\displaystyle b^{-\nu/2} \log b,& \theta=1/2,\\
\displaystyle \frac{1}{2\theta-1} b^{-\nu (1-\theta)},
& 1/2<\theta<1.
\end{array}\right.\label{Upsilon1Est000eRoc}
\end{align}
Second, to estimate $\Upsilon_2^{\mathsf{OL}}$, we change the variable as
$\xi = b^{1-\theta} - u^{1-\theta}$ to give $d\xi = -(1-\theta) u^{-\theta} du$.
Therefore,
\begin{align*}
\Upsilon_2^{\mathsf{OL}}&=\int^b_{b/2}
\frac{u^{-2\theta} du}{1+(b^{1-\theta} - u^{1-\theta})^\nu}
=\int^{b^{1-\theta} - (b/2)^{1-\theta}}_0
\frac{u^{-\theta} d\xi}{(1+\xi^\nu)(1 - \theta)}\\
&\leq \frac{(b/2)^{-\theta}}{1-\theta}
\int^{b^{1-\theta}}_0
\frac{d\xi}{1+\xi^\nu}.
\end{align*}
Recall that for any $\nu>0$ and $\tau\geq1$,
\begin{gather}
\label{intRational000pZZb}
\int^\tau_0 \frac{d\xi}{1+\xi^\nu}\leq 1+\int^\tau_1\xi^{-\nu} d\xi\leq
\left\{\begin{array}{ll}
1+\frac{\tau^{1-\nu} - 1}{1-\nu}\leq \frac{1}{1-\nu}\tau^{1-\nu},& 0<\nu<1,\\
1+\log\tau,&\nu = 1,\\
1+\frac{1-\tau^{1-\nu}}{\nu-1}\leq\frac{\nu}{\nu-1},&\nu>1.\\
\end{array}\right.
\end{gather}
Therefore,
\begin{gather}\label{Upsilon2Est000lDjk}
\Upsilon_2^{\mathsf{OL}}\leq \left\{\begin{array}{ll}
\displaystyle \frac{2^\theta}{(1-\theta)(1-\nu)} b^{1-2\theta-\nu+\nu\theta},&
0<\nu<1,\\
\frac{2^\theta}{1-\theta} \left( \frac1{\log 2} +1-\theta \right)
b^{-\theta}\log b,& \nu=1,\\
\frac{2^\theta \nu}{(1-\theta)(\nu-1)} b^{-\theta},& \nu>1.
\displaystyle
\displaystyle
\end{array}\right.
\end{gather}
Now we merge (\ref{Upsilon1Est000eRoc}) and (\ref{Upsilon2Est000lDjk})
to derive (\ref{intEst10008Atdm}). Note that the bounds of
$\Upsilon_1^{\mathsf{OL}}$ are divided according to $\theta$, while the bounds
of $\Upsilon_2^{\mathsf{OL}}$ are divided according to $\nu$, so the merging
appears complicated. Figure \ref{figOmega0006Ne1p} provides a clear picture.

\begin{itemize}
\item \textit{Case 1:} $\nu/(\nu + 1)\leq \theta<1$ and $\theta>1/2$. This corresponds
to Regime 1 in Figure \ref{figOmega0006Ne1p}, including the boundary BD but excluding
line segment AB and point B. Below we show that $\omega=-\nu(1-\theta)$.
In fact, now $\Upsilon_1^{\mathsf{OL}} \leq (1-2^{\theta-1})^{-\nu}
(2\theta-1)^{-1} b^{-\nu(1-\theta)}$. When $0<\nu<1$, $\theta>1/2$ implies
$1-2\theta-\nu+\nu\theta < -\nu(1-\theta)$, so
$\Upsilon_2^{\mathsf{OL}}\leq 2^\theta (1-\theta)^{-1}(1-\nu)^{-1} b^{-\nu(1-\theta)}$.
When $\nu=1$, recall that
\begin{gather}\label{polyBoundingLog000zNwX}
\max_{1\leq u<\infty} u^{-a}\log u=\frac{1}{ea},\quad\mbox{for any }a>0,
\end{gather}
where the maximum is achieved at $u = e^{1/a}$. Since $\theta>1/2$,
$b^{-\theta+(1-\theta)}\log b\leq \frac{1}{e(2\theta - 1)}$, and we have
\begin{gather*}
\Upsilon_2^{\mathsf{OL}}\leq \frac{2^\theta}{1-\theta}
\left( \frac{1}{\log 2} + 1-\theta \right) \frac{b^{-(1-\theta)}}{e(2\theta-1)}.
\end{gather*}
When $\nu>1$, the condition $\nu/(\nu+1)\leq \theta$ implies
$-\theta\leq -\nu(1-\theta)$, so $\Upsilon_2^{\mathsf{OL}}
\leq 2^\theta\nu(1-\theta)^{-1}(\nu-1)^{-1} b^{-\nu(1-\theta)}$. We have proved that
\begin{gather*}
\int^b_1\frac{u^{-2\theta}du}{1+(b^{1-\theta} - u^{1-\theta})^\nu}
=\Upsilon_1^{\mathsf{OL}}+ \Upsilon_2^{\mathsf{OL}}
\leq C^{\mathsf{OL}}_0 b^{-\nu(1-\theta)},
\end{gather*}
with
\begin{gather}
\label{COLestEq1000hAx6}
C_0^{\mathsf{OL}}=\frac{(1-2^{\theta-1})^{-\nu}}{(2\theta-1)}
+\left\{\begin{array}{ll}
\displaystyle \frac{2^\theta(\nu+1)}{(1-\theta)|1-\nu|},& \nu>0\mbox{ and }\nu\neq1,\\
\displaystyle \frac{2^\theta}{(1-\theta)(e(2\theta-1))}
\left( \frac{1}{\log 2} + 1-\theta \right),& \nu=1.
\displaystyle
\end{array}\right.
\end{gather}

\item\textit{Case 2:} $\nu>1$ and $0<\theta<\nu/(\nu+1)$. This corresponds to Regime 2
in Figure \ref{figOmega0006Ne1p}, excluding the boundaries BC and BD.
Below we show that $\omega=-\theta$. In fact, in this regime
$\Upsilon_2^{\mathsf{OL}}\leq 2^\theta \nu (1-\theta)^{-1}(\nu-1)^{-1} b^{-\theta}$. When
$0<\theta<1/2$, $1-2\theta-\nu+\nu\theta=-\theta+(1-\theta)(1-\nu)<-\theta$, so
$\Upsilon_1^{\mathsf{OL}}\leq \frac{2^{2\theta-1}}{(1-2^{\theta-1})^\nu(1-2\theta)}
b^{-\theta}$. When $\theta=1/2$, we use (\ref{polyBoundingLog000zNwX}) to see
$b^{-\frac{\nu}{2}+\frac12}\log b\leq \frac{2}{e(\nu-1)}$, so
$\Upsilon_1^{\mathsf{OL}}\leq (1-2^{\theta-1})^{-\nu} \frac{2}{e(\nu-1)}
b^{-\theta}$. When $1/2<\theta<1$, $\theta<\nu/(\nu+1)$ implies
$-\nu(1-\theta)<-\theta$, so $\Upsilon_1^{\mathsf{OL}}\leq
(1-2^{\theta-1})^{-\nu}(2\theta-1)^{-1} b^{-\theta}$. We have proved that
\begin{gather*}
\int^b_1\frac{u^{-2\theta}du}{1+(b^{1-\theta} - u^{1-\theta})^\nu}
=\Upsilon_1^{\mathsf{OL}}+ \Upsilon_2^{\mathsf{OL}}
\leq C^{\mathsf{OL}}_0 b^{-\theta},
\end{gather*}
with
\begin{gather}\label{COLestEq2000Pe0ow}
C_0^{\mathsf{OL}}=\frac{2^\theta\nu}{(1-\theta)(\nu-1)}
+\frac{1}{(1-2^{\theta-1})^\nu}\times\left\{\begin{array}{ll}
2^{2\theta-1}/(1-2\theta),& 0<\theta<1/2,\\
\frac{2}{e(\nu-1)},& \theta=1/2,\\
1/(2\theta-1),& 1/2<\theta<1.
\end{array}\right.
\end{gather}

\item\textit{Case 3:} $0<\theta<1/2$ and $0<\nu<1$. This corresponds to Regimes 3 and 4
in Figure \ref{figOmega0006Ne1p}, including Arc AC but excluding boundaries AB, BC,
and point B. Now it is obvious that $\Upsilon_1^{\mathsf{OL}}+
\Upsilon_2^{\mathsf{OL}}\leq C_0^{\mathsf{OL}} b^{1-2\theta-\nu+\nu\theta}$, with
\begin{gather}\label{COLestEq3000Pe0ow}
C_0^{\mathsf{OL}}=\frac{2^{2\theta-1}}{(1-2^{\theta-1})^\nu(1-2\theta)}
+\frac{2^\theta}{(1-\theta)(1-\nu)}.
\end{gather}

\item\textit{Case 4:} $\theta=1/2$ and $0<\nu\leq 1$. This corresponds to line
segment AB, including point B. Now $-\nu(1-\theta)=-\nu/2$,
$b^{-\nu/2}\log b\geq b^{-\theta} \log b\geq b^{-\theta}\log 2$, and
$1-2\theta-\nu+\nu\theta=-\nu(1-\theta)$. So
\begin{gather*}
\Upsilon_2^{\mathsf{OL}}\leq b^{-\nu(1-\theta) }(\log b)\times
\left\{\begin{array}{ll}
\displaystyle \frac{2^\theta}{1-\theta}\left( \frac1{\log 2} + 1-\theta \right),&
\nu = 1,\\
\displaystyle\frac{2^\theta\nu}{(1-\theta)(\nu-1)\log 2},& 0<\nu<1,
\end{array}\right.
\end{gather*}
and $\Upsilon_1^{\mathsf{OL}}\leq (1-2^{\theta-1})^{-\nu} b^{-\nu/2} \log b$.
So, $\Upsilon_1^{\mathsf{OL}} + \Upsilon_2^{\mathsf{OL}}\leq C_0^{\mathsf{OL}}
b^{-\nu(1-\theta)}\log b$ with
\begin{gather}
\label{COLestEq4000QR2aT}
C_0^{\mathsf{OL}}\leq \frac{1}{(1-2^{\theta-1})^\nu} +\left\{\begin{array}{ll}
\displaystyle \frac{2^\theta}{1-\theta}\left( \frac1{\log 2} + 1-\theta \right),&
\nu = 1,\\
\displaystyle\frac{2^\theta\nu}{(1-\theta)(\nu-1)\log 2},& 0<\nu<1.
\end{array}\right.
\end{gather}

\item\textit{Case 5:} $\nu=1$ and $0<\theta<1/2$. This corresponds to
line segment BC, excluding point B. In this case, $1-2\theta-\nu+\nu\theta=-\theta$. So
$\Upsilon_1^{\mathsf{OL}} + \Upsilon_2^{\mathsf{OL}}\leq C_0^{\mathsf{OL}}
b^{-\theta}\log b$ with
\begin{gather}\label{COLestEq5000LW6Z}
C_0^{\mathsf{OL}}=\frac{2^\theta}{1-\theta}\left( \frac{1}{\log 2} + 1-\theta \right)
+\frac{2^{2\theta-1}}{(1-2^{\theta-1})^\nu(1-2\theta)\log 2}.
\end{gather}
\end{itemize}
\qed

\end{document}